\newtheorem{definition}{Definition}
\newtheorem{theorem}{Theorem}
\newtheorem{lemma}{Lemma}
\newtheorem{claim}{Claim}
\newtheorem{corollary}{Corollary}
\newenvironment{proof}{\noindent{\sf Proof.}}{\hfill $\boxtimes\hspace{2mm}$\linebreak}
\newcommand{\qed}{\hfill $\boxtimes\hspace{1mm}$}
\renewcommand{\phi}{\varphi}
\renewcommand{\epsilon}{\varepsilon}
\newcommand{\K}{{\sf K}}
\renewcommand{\S}{{\sf S}}
\newcommand{\E}{{\sf H}}
\begin{document}
\title{Epistemic Strategies}
\title{Coalition Knowledge and Strategies}
\title{Knowing How to Achieve}
\title{Distributively Knowing How to Achieve}
\title{Knowing How to Achieve Together}
\title{Together We Know How to Achieve:\\ \vspace{1mm} \Large An Epistemic Logic of Know-How}

\author{Pavel Naumov \and Jia Tao}



\maketitle

\begin{abstract}
The existence of a coalition strategy to achieve a goal does not necessarily mean that the coalition has enough information to know how to follow the strategy. Neither does it mean that the coalition knows that such a strategy exists. The article studies an interplay between the distributed knowledge, coalition strategies, and coalition ``know-how" strategies. The main technical result is a sound and complete trimodal logical system that describes the properties of this interplay.
\end{abstract}

\maketitle

\section{Introduction}


An agent $a$ comes to a fork in a road. There is a sign that says that one of the two roads leads to prosperity, another to death. The agent must take the fork, but she does not know which road leads where. Does the agent have a strategy to get to prosperity? On one hand, since one of the roads leads to prosperity, such a strategy clearly exists. We denote this fact by modal formula $\S_a p$, where statement $p$ is a claim of future prosperity. Furthermore, agent $a$ knows that such a strategy exists. We write this as $\K_a\S_a p$. Yet, the agent does not know what the strategy is and, thus, does not know how to use the strategy. We denote this by $\neg\E_a p$, where {\em know-how} modality $\E_a$ expresses the fact that agent $a$ knows how to achieve the goal based on the information available to her.  In this article we study the interplay between modality $\K$, representing {\em knowledge}, modality $\S$, representing the existence of a {\em strategy}, and modality $\E$, representing the existence of a {\em know-how strategy}. Our main result is a complete trimodal axiomatic system capturing properties of this interplay.

\subsection{Epistemic Transition Systems}

In this article we use epistemic transition systems to capture knowledge and strategic behavior. Informally, epistemic transition system is a directed labeled graph supplemented by an indistinguishability relation on vertices. For instance, our motivational example above can be captured by epistemic transition system $T_1$ depicted in Figure~\ref{intro-1 figure}. 
\begin{figure}[ht]
\begin{center}
\vspace{-2mm}
\scalebox{.6}{\includegraphics{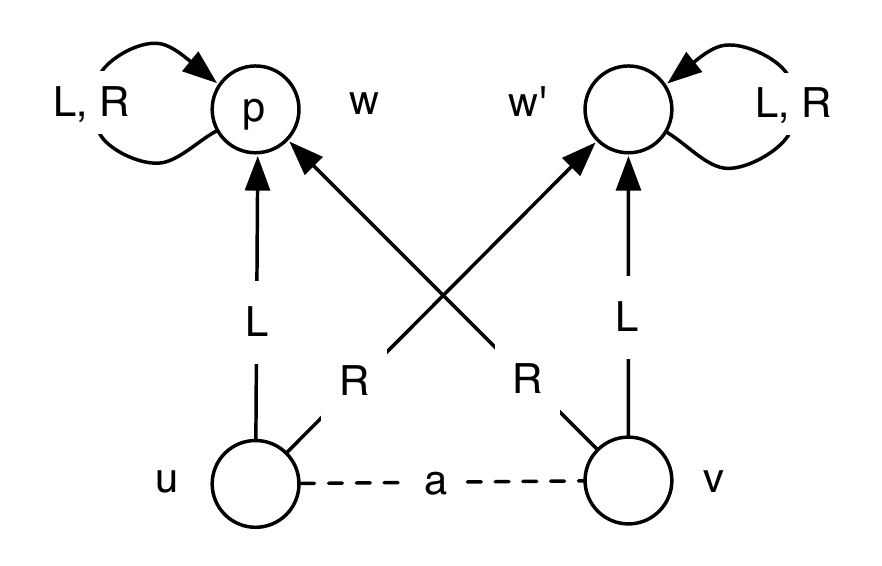}}
\vspace{0mm}
\caption{Epistemic transition system $T_1$.}\label{intro-1 figure}
\vspace{-2mm}
\end{center}
\vspace{-2mm}
\end{figure}
In this system state $w$ represents the prosperity and state $w'$ represents death. The original state is $u$, but it is indistinguishable by the agent $a$ from state $v$. Arrows on the diagram represent possible transitions between the states. Labels on the arrows represent the choices that the agents make during the transition. For example, if in state $u$ agent chooses left (L) road, she will transition to the prosperity state $w$ and if she chooses right (R) road, she will transition to the death state $w'$. In another epistemic state $v$, these roads lead the other way around. States $u$ and $v$ are not distinguishable by agent $a$, which is shown by the dashed line between these two states. In state $u$ as well as state $v$ the agent has a strategy to transition to the state of prosperity: $u\Vdash\S_a p$ and $v\Vdash\S_a p$. In the case of state $u$ this strategy is L, in the case of state $v$ the strategy is R. Since the agent cannot distinguish states $u$ and $v$, in both of these states she does not have a know-how strategy to reach prosperity: $u\nVdash\E_a p$ and $v\nVdash\E_a p$. At the same time, since formula $\S_a p$ is satisfied in all states  indistinguishable to agent $a$ from state $u$, we can claim that $u\Vdash\K_a\S_a p$ and, similarly, $v\Vdash\K_a\S_a p$. 

\begin{figure}[ht]
\begin{center}
\vspace{-2mm}
\scalebox{.6}{\includegraphics{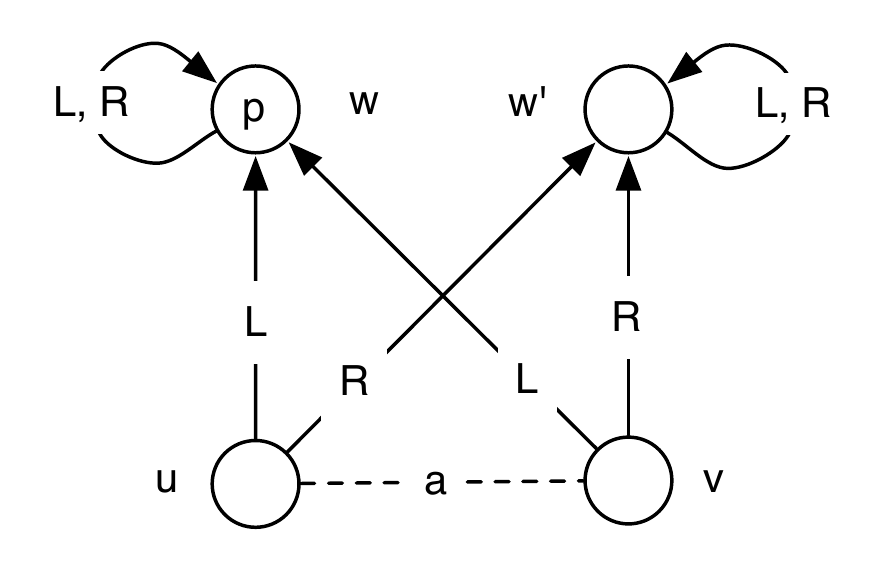}}
\vspace{0mm}
\caption{Epistemic transition system $T_2$.}\label{intro-2 figure}
\vspace{-2mm}
\end{center}
\vspace{-2mm}
\end{figure}

As our second example, let us consider the epistemic transition system $T_2$ obtained from $T_1$ by swapping labels on transitions from $v$ to $w$ and from $v$ to $w'$, see Figure~\ref{intro-2 figure}. Although in system $T_2$ agent $a$ still cannot distinguish states $u$ and $v$, she has a know-how strategy from either of these states to reach state $w$. We write this as $u\Vdash\E_a p$ and $v\Vdash\E_a p$. The strategy is to choose L. This strategy is know-how because it does not require to make different choices in the states that the agent cannot distinguish. 

\subsection{Imperfect Recall}
For the next example, we consider a transition system $T_3$ obtained from system $T_1$ by adding a new epistemic state $s$. From state $s$, agent $a$ can choose label L to reach state $u$ or choose label R to reach state $v$. Since proposition $q$ is satisfied in state $u$, agent $a$ has a know-how strategy to transition from state $s$ to a state (namely, state $u$) where $q$ is satisfied. Therefore, $s\Vdash\E_a q$. 

\begin{figure}[ht]
\begin{center}
\vspace{-2mm}
\scalebox{.6}{\includegraphics{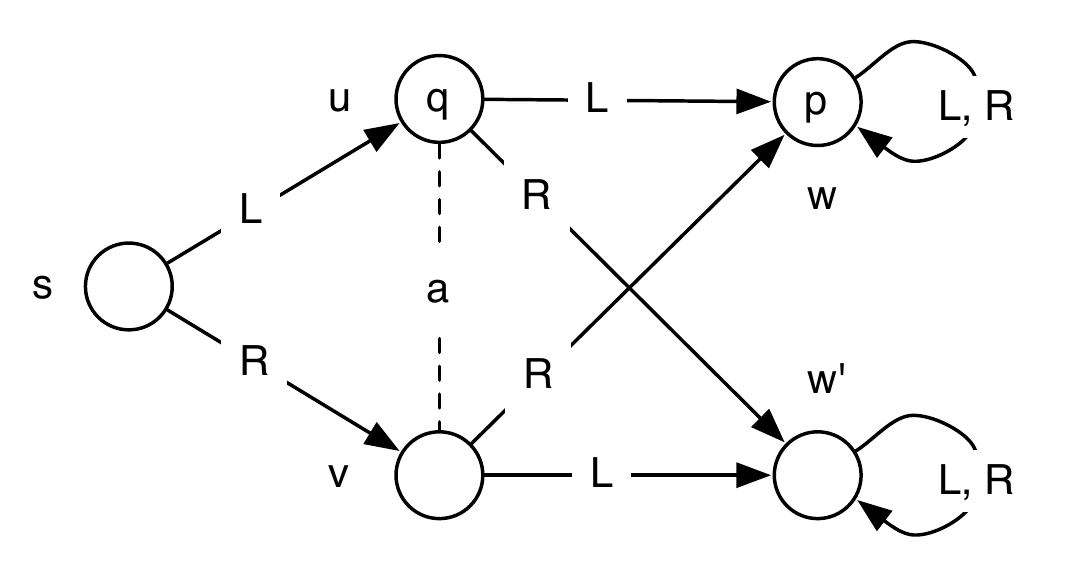}}
\vspace{0mm}
\caption{Epistemic transition system $T_3$.}\label{intro-3 figure}
\vspace{-2mm}
\end{center}
\vspace{-2mm}
\end{figure}

A more interesting question is whether $s\Vdash\E_a\E_a p$ is true. In other words, does agent $a$ know how to transition from state $s$ to a state in which she knows how to transition to another state in which $p$ is satisfied? One might think that such a strategy indeed exists: in state $s$ agent $a$ chooses  label L to transition to state $u$. Since there is no transition labeled by L that leads from state $s$ to state $v$, upon ending the first transition the agent would know that she is in state $u$, where she needs to choose label L to transition to state $w$. This argument, however, is based on the assumption that agent $a$ has a perfect recall. Namely, agent $a$ in state $u$ remembers the choice that she made in the previous state. We assume that the agents do not have a perfect recall and that an epistemic state description captures whatever memories the agent has in this state. In other words, in this article we assume that the only knowledge that an agent possesses is the knowledge captured by the indistinguishability relation on the epistemic states. Given this assumption, upon reaching the state $u$ (indistinguishable from state $v$) agent $a$ knows that there {\em exists} a choice that she can make to transition to state in which $p$ is satisfied: $s\Vdash\E_a\S_a p$. However, she does not know which choice (L or R) it is: $s\nVdash\E_a\E_a p$. 

\subsection{Multiagent Setting}

\begin{figure}[ht]
\begin{center}
\vspace{-2mm}
\scalebox{.6}{\includegraphics{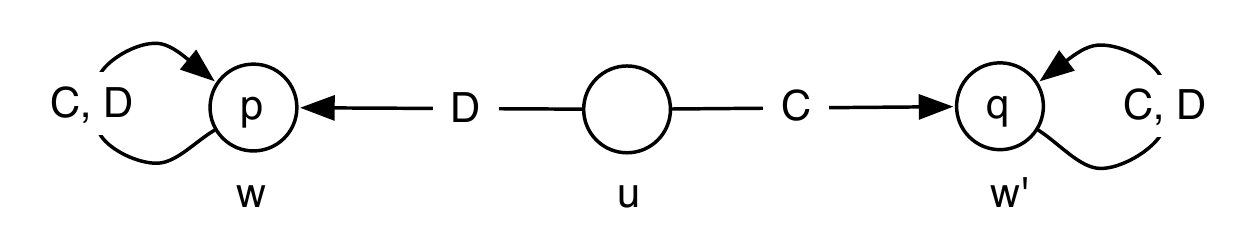}}
\vspace{0mm}
\caption{Epistemic transition system $T_4$.}\label{intro-4 figure}
\vspace{-2mm}
\end{center}
\vspace{-2mm}
\end{figure}

So far, we have assumed that only agent $a$ has an influence on which transition the system takes. In transition system $T_4$ depicted in Figure~\ref{intro-4 figure}, we introduce another agent $b$ and assume both agents $a$ and $b$ have influence on the transitions. In each state, the system takes the transition labeled D by default unless there is a consensus of agents $a$ and $b$ to take the transition labeled C. In such a setting, each agent has a strategy to transition system from state $u$ into state $w$ by voting D, but neither of them alone has a strategy to transition from state $u$ to state $w'$ because such a transition requires the consensus of both agents. Thus, $u\Vdash\S_a p\wedge \S_b p\wedge \neg\S_a q\wedge \neg\S_b q$. Additionally, both agents know how to transition the system from state $u$ into state $w$, they just need to vote D. Therefore, $u\Vdash\E_a p\wedge \E_b p$.

\begin{figure}[ht]
\begin{center}
\vspace{-2mm}
\scalebox{.6}{\includegraphics{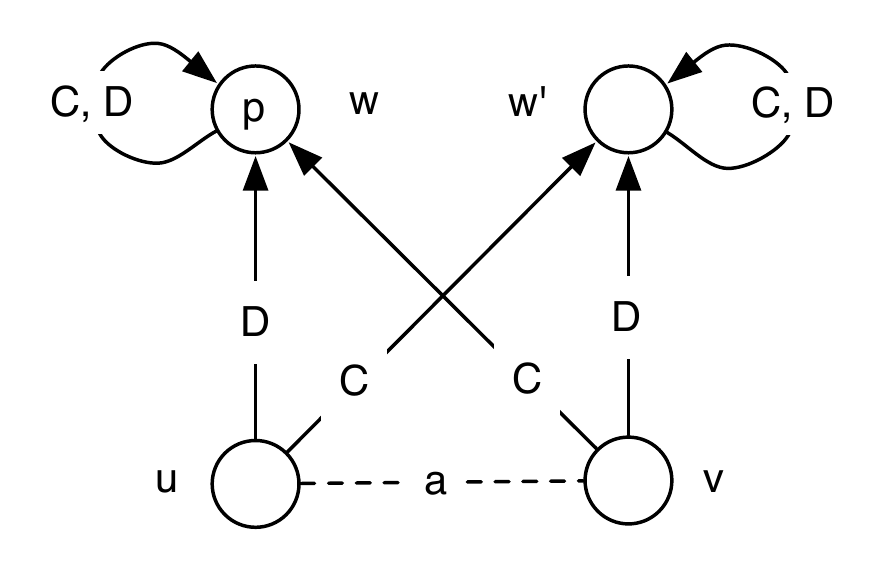}}
\vspace{0mm}
\caption{Epistemic transition system $T_5$.}\label{intro-5 figure}
\vspace{-2mm}
\end{center}
\vspace{-2mm}
\end{figure}

In Figure~\ref{intro-5 figure}, we show a more complicated transition system obtained from $T_1$ by renaming label L to D and renaming label R to C. Same as in transition system $T_4$, we assume that there are two agents $a$ and $b$ voting on the system transition. We also assume that agent $a$ cannot distinguish states $u$ and $v$ while agent $b$ can. By default, the system takes the transition labeled D unless there is a consensus to take transition labeled C. As a result, agent $a$ has a strategy (namely, vote D) in state $u$ to transition system to state $w$, but because agent $a$ cannot distinguish state $u$ from state $v$, not only does she not know how to do this, but she is not aware that such a strategy exists: $u\Vdash\S_a p\wedge\neg\E_a p \wedge\neg\K_a\S_a p$. Agent $b$, however, not only has a strategy to transition the system from state $u$ to state $w$, but also knows how to achieve this: $u\Vdash \E_b p$.

\subsection{Coalitions}

We have talked about strategies, know-hows, and knowledge of individual agents. In this article we consider knowledge, strategies, and know-how strategies of coalitions. There are several forms of group knowledge that have been studied before. The two most popular of them are common knowledge and distributed knowledge~\cite{fhmv95}. Different contexts call for different forms of group knowledge.

As illustrated in the famous Two Generals' Problem~\cite{aeh75sigop,g78os} where communication channels between the agents are unreliable, establishing a common knowledge between agents might be essential for having a strategy. 

In some settings, the distinction between common and distributed knowledge is insignificant. For example, if members of a political fraction get together to share {\em all} their information and to develop a common strategy, then the distributed knowledge of the members becomes the common knowledge of the fraction during the in-person meeting.

Finally, in some other situations the distributed knowledge makes more sense than the common knowledge. For example, if a panel of experts is formed to develop a strategy, then this panel achieves the best result if it relies on the combined knowledge of its members rather than on their common knowledge.

In this article we focus on distributed coalition knowledge and distributed-know-how strategies. We leave the common knowledge for the future research.

To illustrate how distributed knowledge of coalitions interacts with strategies and know-hows, consider epistemic transition system $T_6$ depicted in Figure~\ref{intro-6 figure}. In this system, agents $a$ and $b$ cannot distinguish states $u$ and $v$ while agents $b$ and $c$ cannot distinguish states $v$ and $u'$. In every state, each of agents $a$, $b$ and $c$ votes either L or R, and the system transitions according to the majority vote. In such a setting, any coalition of two agents can fully control the transitions of the system. 

\begin{figure}[ht]
\begin{center}
\vspace{-2mm}
\scalebox{.6}{\includegraphics{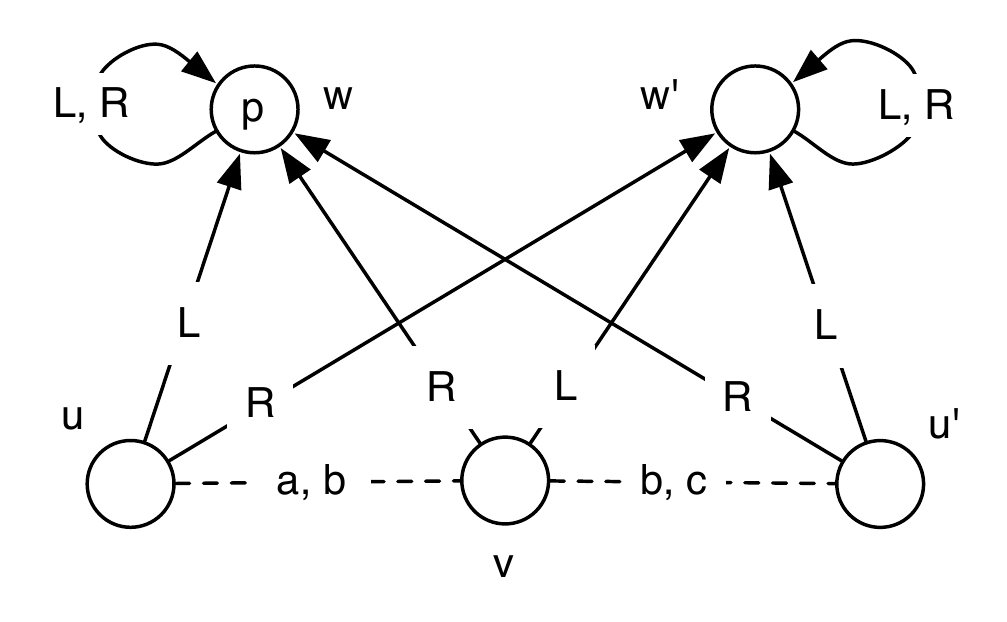}}
\vspace{0mm}
\caption{Epistemic transition system $T_6$.}\label{intro-6 figure}
\vspace{-2mm}
\end{center}
\vspace{-2mm}
\end{figure}

For example, by both voting L, agents $a$ and $b$ form a coalition $\{a,b\}$ that forces the system to transition from state $u$ to state $w$ no matter how agent $c$ votes. Since proposition $p$ is satisfied in state $w$, we write $u\Vdash\S_{\{a,b\}} p$, or simply $u\Vdash\S_{a,b} p$. Similarly, coalition $\{a,b\}$ can vote R to force the system to transition from state $v$ to state $w$. Therefore, coalition $\{a,b\}$ has strategies to achieve $p$ in states $u$ and $v$, but the strategies are different. Since they cannot distinguish states $u$ and $v$, agents $a$ and $b$ know that they have a strategy to achieve $p$, but they do \emph{not} know how to achieve $p$. In our notations, $v\Vdash S_{a,b}p\wedge \K_{a,b}S_{a,b}p \wedge \neg\E_{a,b} p$.   

On the other hand, although agents $b$ and $c$ cannot distinguish states $v$ and $u'$, by both voting R in either of states $v$ and $u'$, they form a coalition $\{b, c\}$ that forces the system to transition to state $w$ where $p$ is satisfied. Therefore, in any of states $v$ and $u'$, they not only have a strategy to achieve $p$, but also know that they have such a strategy, and more importantly, they know how to achieve $p$, that is, $v\Vdash\E_{b,c} p$.

\subsection{Nondeterministic Transitions}

In all the examples that we have discussed so far, given any state in a system, agents' votes uniquely determine the transition of the system. Our framework also allows nondeterministic transitions. Consider transition system $T_7$ depicted in Figure~\ref{intro-7 figure}. In this system, there are two agents $a$ and $b$ who can vote either C or D. If both agents vote C, then the system takes one of the consensus transitions labeled with C. Otherwise, the system takes the transition labeled with D. Note that there are two consensus transitions starting from state $u$. Therefore, even if both agents vote C, they do not have a strategy to achieve $p$, i.e., $u\nVdash\S_{a,b}p$. However, they can achieve $p\vee q$. Moreover, since all agents can distinguish all states, we have $u \Vdash\E_{a,b}(p\vee q)$.

\begin{figure}[ht]
\begin{center}
\vspace{-2mm}
\scalebox{.6}{\includegraphics{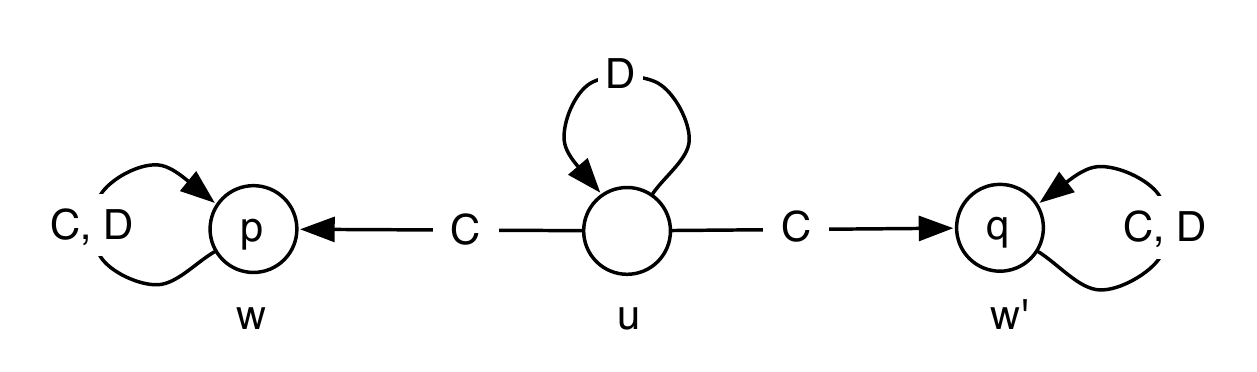}}
\vspace{0mm}
\caption{Epistemic transition system $T_7$.}\label{intro-7 figure}
\vspace{-2mm}
\end{center}
\vspace{-2mm}
\end{figure}

\subsection{Universal Principles}

In the examples above we focused on specific properties that were either satisfied or not satisfied in particular states of epistemic transition systems $T_1$ through $T_7$.
In this article, we study properties that are satisfied in all states of all epistemic transition systems. Our main result is a sound and complete axiomatization of all such properties. We finish the introduction with an informal discussion of these properties.

\paragraph{Properties of Single Modalities}
Knowledge modality $K_C$ satisfies the axioms of epistemic logic S5 with distributed knowledge. Both strategic modality $S_C$ and know-how modality $\E_C$ satisfy cooperation properties~\cite{p01illc,p02}:  
\begin{eqnarray}
&\S_C(\phi\to\psi)\to(\S_D\phi\to\S_{C\cup D}\psi), \mbox{ where } C\cap D=\varnothing,\label{s coop}\\
&\E_C(\phi\to\psi)\to(\E_D\phi\to\E_{C\cup D}\psi), \mbox{ where } C\cap D=\varnothing.\label{e coop}
\end{eqnarray}
They also satisfy monotonicity properties
\begin{eqnarray*}
\S_C\phi\to\S_D\phi, \mbox{ where } C\subseteq D,\\
\E_C\phi\to\E_D\phi, \mbox{ where } C\subseteq D.
\end{eqnarray*}
The two monotonicity properties are not among the axioms of our logical system because, as we show in Lemma~\ref{subset lemma S} and Lemma~\ref{subset lemma E}, they are derivable.

\paragraph{Properties of Interplay}

Note that $w\Vdash\E_C\phi$ means that coalition $C$ has the same strategy to achieve $\phi$ in all epistemic states indistinguishable by the coalition from state $w$. Hence, the following principle is universally true:
\begin{equation}\label{st pos intro}
    \E_C\phi\to K_C\E_C\phi.
\end{equation}
Similarly, $w\Vdash\neg\E_C\phi$ means that coalition $C$ does not have the same strategy to achieve $\phi$ in all epistemic states indistinguishable by the coalition from state $w$. Thus,
\begin{equation}\label{st neg intro}
    \neg\E_C\phi\to K_C\neg\E_C\phi.
\end{equation}
We call properties~(\ref{st pos intro}) and (\ref{st neg intro}) {\em strategic positive introspection} and {\em strategic negative introspection}, respectively. 
The strategic negative introspection is one of our axioms. Just as how the positive introspection principle follows from the rest of the axioms in S5 (see Lemma~\ref{positive introspection lemma}), the strategic positive introspection principle is also derivable (see Lemma~\ref{strategic positive introspection lemma}).

Whenever a coalition knows how to achieve something, there should exist a strategy for the coalition to achieve. In our notation,
\begin{equation}\label{st truth}
    \E_C\phi\to\S_C\phi.
\end{equation}
We call this formula {\em strategic truth} property and it is one of the axioms of our logical system.

The last two axioms of our logical system deal with empty coalitions. First of all, if formula $\K_\varnothing\phi$ is satisfied in an epistemic state of our transition system, then formula $\phi$ must be satisfied in every state of this system. Thus, even empty coalition has a trivial strategy to achieve $\phi$:
\begin{equation}\label{empty coal}
    \K_\varnothing\phi\to\E_\varnothing\phi. 
\end{equation}
We call this property {\em empty coalition} principle. In this article we assume that an epistemic transition system never halts. That is, in every state of the system no matter what the outcome of the vote is, there is always a next state for this vote. This restriction on the transition systems yields property 
\begin{equation}\label{nonerm}
    \neg\S_C\bot. 
\end{equation}
that we call {\em nontermination} principle.

Let us now turn to the most interesting and perhaps most unexpected property of interplay. Note that $\S_\varnothing\phi$ means that an empty coalition has a strategy to achieve $\phi$. Since the empty coalition has no members, nobody has to vote in a particular way. Statement $\phi$ is guaranteed to happen anyway. Thus, statement $\S_\varnothing\phi$ simply means that statement $\phi$ is unavoidably satisfied after any single transition. 

\begin{figure}[ht]
\begin{center}
\vspace{-2mm}
\scalebox{.6}{\includegraphics{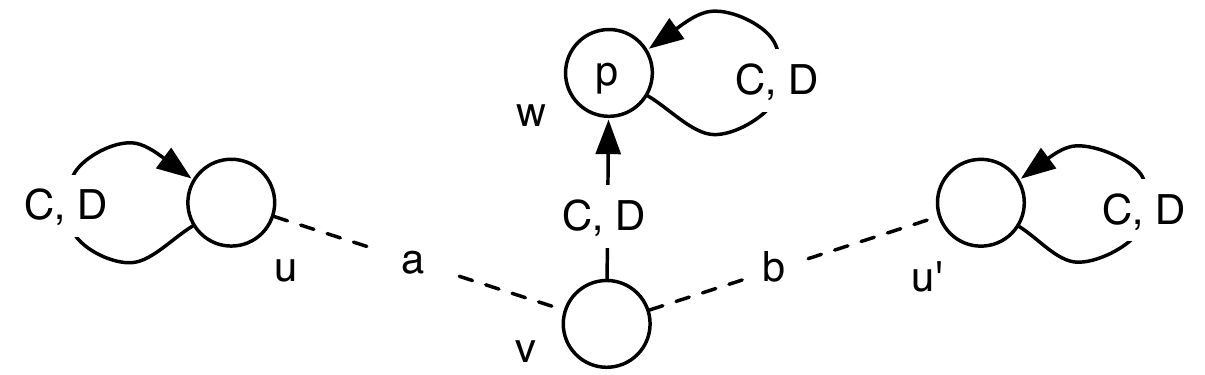}}
\vspace{0mm}
\caption{Epistemic transition system $T_8$.}\label{intro-8 figure}
\vspace{-2mm}
\end{center}
\vspace{-2mm}
\end{figure}
For example, consider an epistemic transition system depicted in Figure~\ref{intro-8 figure}. As in some of our earlier examples, this system has agents $a$ and $b$ who vote either C or D. If both agents vote C, then the system takes one of the consensus transitions labeled with C. Otherwise, the system takes the default transition labeled with D. Note that in state $v$ it is guaranteed that statement $p$ will happen after a single transition. Thus, $v\Vdash\S_\varnothing p$. At the same time, neither agent $a$ nor agent $b$ knows about this because they cannot distinguish state $v$ from states $u$ and $u'$ respectively. Thus, $v\Vdash\neg\K_a\S_\varnothing p \wedge \neg\K_b\S_\varnothing p$. 

In the same transition system $T_8$, agents $a$ and $b$ together can distinguish state $v$ from states $u$ and $u'$. Thus, $v\Vdash\K_{a,b}\S_\varnothing p$. 
In general,  statement $\K_C\S_\varnothing\phi$ means that not only $\phi$ is unavoidable, but coalition $C$ knows about it. Thus, coalition $C$ has a know-how strategy to achieve $\phi$:
$$
\K_C\S_\varnothing\phi\to \E_C\phi.
$$
In fact, the coalition would achieve the result no matter which strategy it uses. Coalition $C$ can even use a strategy that simultaneously achieves another result in addition to $\phi$: 
\begin{equation*}
    \K_C\S_\varnothing \phi\wedge \E_C\psi \to\E_C(\phi \wedge \psi).
\end{equation*}
In our logical system we use an equivalent form of the above principle that is stated using only implication:
\begin{equation}\label{ep determ}
    \E_C(\phi\to\psi)\to(\K_C\S_\varnothing \phi\to\E_C\psi).
\end{equation}
We call this property {\em epistemic determinicity} principle. Properties~(\ref{s coop}), (\ref{e coop}), (\ref{st neg intro}), (\ref{st truth}), (\ref{empty coal}), (\ref{nonerm}), and (\ref{ep determ}), together with axioms of epistemic logic S5 with distributed knowledge and propositional tautologies constitute the axioms of our sound and complete logical system. 

\subsection{Literature Review}

Logics of coalition power were developed by Marc Pauly~\cite{p01illc,p02}, who also proved the completeness of the basic logic of coalition power. 
Pauly's approach has been widely studied in the literature~\cite{g01tark,vw05ai,b07ijcai,sgvw06aamas,abvs10jal,avw09ai,b14sr}. An alternative logical system  was proposed by More and Naumov~\cite{mn12tocl}. 

Alur, Henzinger, and Kupferman introduced Alternating-Time Temporal Logic (ATL) that combines temporal and coalition modalities~\cite{ahk02}.
Van der Hoek and Wooldridge proposed to combine ATL with epistemic modality to form Alternating-Time Temporal Epistemic Logic~\cite{vw03sl}. They did not prove the completeness theorem for the proposed logical system. 

{\AA}gotnes and Alechina proposed a complete logical system that combines the coalition power and epistemic modalities~\cite{aa12aamas}. Since this system does not have epistemic requirements on strategies, it does not contain any axioms describing the interplay of these modalities. 

Know-how strategies were studied before under different names. While Jamroga and {\AA}gotnes talked about ``knowledge to identify and execute a strategy"~\cite{ja07jancl},  Jamroga and van der Hoek discussed ``difference between an agent knowing that he has a suitable strategy and knowing the strategy itself"~\cite{jv04fm}. Van Benthem called such strategies ``uniform"~\cite{v01ber}. Wang gave a complete axiomatization of ``knowing how" as a binary modality~\cite{w15lori,w17synthese}, but his logical system does not include the knowledge modality.  

In our AAMAS paper, we investigated coalition strategies to enforce a condition indefinitely~\cite{nt17aamas}. Such strategies are similar to ``goal maintenance" strategies in Pauly's ``extended coalition logic"~\cite[p. 80]{p01illc}. We focused on ``executable" and ``verifiable" strategies. Using the language of the current article, executability means that a coalition remains ``in the know-how" throughout the execution of the strategy. Verifiability means that the coalition can verify that the enforced condition remains true. In the notations of the current article, the existence of a verifiable strategy could be expressed as $\S_C\K_C\phi$.  In~\cite{nt17aamas}, we provided a complete logical system that describes the interplay between the modality representing the existence of an ``executable" and ``verifiable" coalition strategy to enforce and the modality representing knowledge. This system can prove principles similar to the strategic positive introspection~(\ref{st pos intro}) and the strategic negative introspection~(\ref{st neg intro}) mentioned above. A similar complete logical system in a {\em single-agent} setting for strategies to {achieve} a goal in multiple steps rather than to {maintain} a goal is developed by Fervari, Herzig,  Li,  and  Wang~\cite{fhlw17ijcai}.   

In the current article, we combine know-how modality $\E$ with strategic modality $\S$ and epistemic modality $\K$. The proof of the completeness theorem is significantly more challenging than in \cite{nt17aamas,fhlw17ijcai}. It employs new techniques that construct pairs of maximal consistent sets in ``harmony" and in ``complete harmony". See Section~\ref{harmony subsection} and Section~\ref{complete harmony subsection} for details. An extended abstract of this article, without proofs, appeared as~\cite{nt17tark}.

\subsection{Outline}

This article is organized as follows. In Section~\ref{syntax and semantics} we introduce formal syntax and semantics of our logical system. In Section~\ref{axioms section} we list axioms and inference rules of the system. Section~\ref{examples section} provides examples of formal proofs in our logical systems. 
Proofs of the soundness and the completeness are given in Section~\ref{soundness section} and Section~\ref{completeness section} respectively. Section~\ref{conclusion section} concludes the article.

The key part of the proof of the completeness is the construction of a pair of sets in complete harmony. We discuss the intuition behind this construction and introduce the notion of harmony in Section~\ref{harmony subsection}. The notion of complete harmony is introduced in Section~\ref{complete harmony subsection}.

\section{Syntax and Semantics}\label{syntax and semantics}

In this section we present the formal syntax and semantics of our logical system given a fixed finite set of agents $\mathcal{A}$. Epistemic transition system could be thought of as a Kripke model of modal logic S5 with distributed knowledge to which we add transitions controlled by a vote aggregation mechanism. Examples of vote aggregation mechanisms that we have considered in the introduction are the consensus/default mechanism and the majority vote mechanism. Unlike the introductory examples, in the general definition below we assume that at different states the mechanism might use different rules for vote aggregation. The only restriction on the mechanism that we introduce is that there should be at least one possible transition that the system can take no matter what the votes are. In other words, we assume that the system can never halt.  

For any set of votes $V$, by $V^\mathcal{A}$ we mean the set of all functions from set $\mathcal{A}$ to set $V$. Alternatively, the set $V^\mathcal{A}$ could be thought of as a set of tuples of elements of $V$ indexed by elements of $\mathcal{A}$.

\begin{definition}\label{transition system}
A tuple $(W,\{\sim_a\}_{a\in \mathcal{A}},V,M,\pi)$ is called an epistemic transition system, where
\begin{enumerate}
    \item $W$ is a set of epistemic states,
    \item $\sim_a$ is an indistinguishability equivalence relation on $W$ for each $a\in\mathcal{A}$,
    \item $V$ is a nonempty set called ``domain of choices", 
    \item $M\subseteq W\times V^\mathcal{A}\times W$ is an aggregation mechanism where for each $w\in W$ and each $\mathbf{s}\in V^\mathcal{A}$, there is $w'\in W$  such that $(w,\mathbf{s},w')\in M$,
    \item $\pi$ is a function that maps propositional variables into subsets of $W$.
\end{enumerate}
\end{definition}

\begin{definition}
A coalition is a subset of $\mathcal{A}$.
\end{definition}

Note that a coalition is always finite due to our assumption that the set of all agents $\mathcal{A}$ is finite. Informally, we say that two epistemic states are indistinguishable by a coalition $C$ if they are indistinguishable by every member of the coalition. Formally, coalition indistinguishability is defined as follows:

\begin{definition}\label{sim set}
For any epistemic states $w_1,w_2\in W$ and any coalition $C$, let $w_1\sim_C w_2$ if $w_1\sim_a w_2$ for each agent $a\in C$.
\end{definition}

\begin{corollary}\label{sim set corollary}
Relation $\sim_C$ is an equivalence relation on the set of states $W$ for each coalition $C$. 
\end{corollary}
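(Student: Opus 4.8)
The plan is to verify directly that $\sim_C$ satisfies the three defining properties of an equivalence relation — reflexivity, symmetry, and transitivity — by reducing each of them to the corresponding property of the relations $\sim_a$ for $a\in C$, which are equivalence relations by item~2 of Definition~\ref{transition system}. Put differently, $\sim_C$ is the intersection $\bigcap_{a\in C}\sim_a$, and the intersection of a (possibly empty) family of equivalence relations on $W$ is again an equivalence relation; the proof simply unfolds this observation through Definition~\ref{sim set}.

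First I would establish reflexivity: fix $w\in W$; for every $a\in C$ we have $w\sim_a w$ because $\sim_a$ is reflexive, so $w\sim_C w$ by Definition~\ref{sim set}. Next, symmetry: assume $w_1\sim_C w_2$; then $w_1\sim_a w_2$ for every $a\in C$, and since each $\sim_a$ is symmetric, $w_2\sim_a w_1$ for every $a\in C$, hence $w_2\sim_C w_1$. Finally, transitivity: assume $w_1\sim_C w_2$ and $w_2\sim_C w_3$; then for every $a\in C$ both $w_1\sim_a w_2$ and $w_2\sim_a w_3$ hold, and transitivity of $\sim_a$ yields $w_1\sim_a w_3$; as this holds for all $a\in C$, we conclude $w_1\sim_C w_3$.

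The only point that deserves a moment's attention is the degenerate case $C=\varnothing$: here $w_1\sim_\varnothing w_2$ holds vacuously for all $w_1,w_2\in W$, so $\sim_\varnothing$ is the total relation $W\times W$, which is trivially an equivalence relation — and this is consistent with the three arguments above, each of which goes through vacuously when $C$ is empty. There is no genuine obstacle in this proof; it is an immediate consequence of Definition~\ref{sim set} together with item~2 of Definition~\ref{transition system}.
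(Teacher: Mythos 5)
Your proof is correct and is exactly the argument the paper intends: the paper states this as a corollary of Definition~\ref{sim set} without an explicit proof, the implicit justification being precisely that $\sim_C$ is the intersection of the equivalence relations $\sim_a$ for $a\in C$ (with the empty-coalition case giving the total relation). Nothing to add.
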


By a strategy profile $\{s_a\}_{a\in C}$ of a coalition $C$ we mean a tuple that specifies vote $s_a\in V$ of each member $a\in C$. Since such a tuple can also be viewed as a function from set $C$ to set $V$, we denote the set of all strategy profiles of a coalition $C$ by $V^C$:

\begin{definition}\label{strategy}
Any tuple $\{s_a\}_{a\in C}\in V^C$ is called a strategy profile of coalition $C$.
\end{definition}

In addition to a fixed finite set of agents $\mathcal{A}$ we  also assume a fixed countable set of propositional variables. We use the assumption that this set is countable in the proof of Lemma~\ref{complete harmony lemma}. The language $\Phi$ of our formal logical system is specified in the next definition.   

\begin{definition}\label{Phi}
Let $\Phi$ be the minimal set of formulae such that
\begin{enumerate}
    \item $p\in\Phi$ for each propositional variable $p$,
    \item $\neg\phi,\phi\to\psi\in\Phi$ for all formulae $\phi,\psi\in\Phi$,
    \item $\K_C\phi,\S_C\phi,\E_C\phi\in\Phi$ for each coalition $C$ and each $\phi\in\Phi$.
\end{enumerate}
\end{definition}

In other words, language $\Phi$ is defined by the following grammar:
$$
\phi := p\;|\;\neg\phi\;|\;\phi\to\phi\;|\;\K_C\phi\;|\;\S_C\phi\;|\;\E_C\phi.
$$

By $\bot$ we denote the negation of a tautology. For example, we can assume that $\bot$ is $\neg(p\to p)$ for some fixed propositional variable $p$. 

According to Definition~\ref{transition system}, a mechanism specifies the transition that a system might take for any strategy profile of the set of {\em all} agents $\mathcal{A}$. It is sometimes convenient to consider transitions that are {\em consistent} with a given strategy profile $\mathbf s$ of a give coalition $C\subseteq \mathcal{A}$. We write $w\to_{\mathbf s}u$ if a transition from state $w$ to state $u$ is consistent with strategy profile $\mathbf s$. The formal definition is below.

\begin{definition}\label{single arrow}
For any epistemic states $w,u\in W$, any coalition $C$, and any strategy profile ${\mathbf s}=\{s_a\}_{a\in C}\in V^C$, we write $w\to_{\mathbf s}u$ if $(w,\mathbf{s'},u)\in M$ for some strategy profile $\mathbf{s'}=\{s'_a\}_{a\in\mathcal{A}}\in V^\mathcal{A}$  such that $s'_a=s_a$ for each $a\in C$.
\end{definition}
\begin{corollary}\label{empty coalition corollary}
For any strategy profile $\mathbf{s}$ of the empty coalition $\varnothing$, if there are a coalition $C$ and a strategy profile $\mathbf{s'}$ of coalition $C$ such that $w\to_\mathbf{s'} u$, then $w\to_{\mathbf s}u$.
\end{corollary}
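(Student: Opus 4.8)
The plan is to unfold Definition~\ref{single arrow} twice and exploit the fact that the empty coalition imposes no constraint on the aggregated vote. First I would note that the empty coalition $\varnothing$ has exactly one strategy profile, namely the empty function, so the given $\mathbf{s}$ is this empty profile and nothing else needs to be said about it. Applying Definition~\ref{single arrow} with $C=\varnothing$, the statement $w\to_{\mathbf s}u$ unwinds to: there is a strategy profile $\mathbf{t}=\{t_a\}_{a\in\mathcal{A}}\in V^{\mathcal{A}}$ with $(w,\mathbf{t},u)\in M$ such that $t_a=s_a$ for every $a\in\varnothing$. Since there are no agents in $\varnothing$, this last agreement clause is vacuously satisfied, so $w\to_{\mathbf s}u$ is equivalent to the bare existence of some $\mathbf{t}\in V^{\mathcal{A}}$ with $(w,\mathbf{t},u)\in M$.

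Next I would use the hypothesis. From $w\to_{\mathbf{s'}}u$ and Definition~\ref{single arrow} applied to the coalition $C$, there exists a profile $\mathbf{s''}=\{s''_a\}_{a\in\mathcal{A}}\in V^{\mathcal{A}}$ with $(w,\mathbf{s''},u)\in M$ (and, in addition, $s''_a=s'_a$ for each $a\in C$, a fact I will not need). This $\mathbf{s''}$ is precisely a witness of the kind required by the first paragraph, so $w\to_{\mathbf s}u$ follows immediately.

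I do not anticipate any genuine obstacle: the corollary is an immediate bookkeeping consequence of Definition~\ref{single arrow}, turning entirely on the observation that any transition permitted by the mechanism $M$ is consistent with the (unique, empty) strategy profile of the empty coalition. The only point that warrants a word of care is making explicit that $V^\varnothing$ is a singleton and that the ``agrees on $C$'' condition in Definition~\ref{single arrow} holds vacuously when $C=\varnothing$.
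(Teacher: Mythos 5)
Your proof is correct and follows exactly the intended reasoning: the paper states this as an immediate corollary of Definition~\ref{single arrow} without writing out a proof, and your unfolding of the definition — observing that the agreement clause is vacuous for $C=\varnothing$, so $w\to_{\mathbf s}u$ reduces to the existence of some $\mathbf{t}\in V^{\mathcal{A}}$ with $(w,\mathbf{t},u)\in M$, which the hypothesis supplies — is precisely that argument.
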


 The next definition is the key definition of this article. It formally specifies the meaning of the three modalities in our logical system.



\begin{definition}\label{sat}
For any epistemic state $w\in W$ of a transition system $(W,\{\sim_a\}_{a\in \mathcal{A}},V,M,\pi)$ and any formula $\phi\in \Phi$, let relation $w\Vdash\phi$ be defined as follows
\begin{enumerate}
    \item $w\Vdash p$ if $w\in \pi(p)$ where $p$ is a propositional variable,
    \item $w\Vdash\neg\phi$ if $w\nVdash\phi$,
    \item $w\Vdash\phi\to\psi$ if $w\nVdash\phi$ or $w\Vdash\psi$,
    \item $w\Vdash \K_C\phi$ if $w'\Vdash\phi$ for each $w'\in W$ such that $w\sim_C w'$,
    \item $w\Vdash\S_C\phi$ if there is a strategy profile $\mathbf{s}\in V^C$ such that  $w\to_{\mathbf{s}} w'$ implies $w'\Vdash\phi$ for every $w'\in W$,
    \item $w\Vdash\E_C\phi$ if there is a strategy profile $\mathbf s\in V^C$ such that $w\sim_C w'$ and $w'\to_{\mathbf s} w''$ imply $w''\Vdash\phi$ for all $w',w''\in W$.
\end{enumerate}
\end{definition}

\section{Axioms}\label{axioms section}

In additional to propositional tautologies in language $\Phi$, our logical system consists of the following axioms. 
\begin{enumerate}
    \item Truth: $\K_C\phi\to\phi$,
    \item Negative Introspection: $\neg\K_C\phi\to\K_C\neg\K_C\phi$,
    \item Distributivity: $\K_C(\phi\to\psi)\to(\K_C\phi\to\K_{C}\psi)$,
    \item Monotonicity: $\K_C\phi\to \K_D\phi$, if $C\subseteq D$,
    \item Cooperation: $\S_C(\phi\to\psi)\to(\S_D\phi\to\S_{C\cup D}\psi)$, where $C\cap D=\varnothing$.
    \item Strategic Negative Introspection: $\neg\E_C\phi\to\K_C\neg\E_C\phi$,
    \item Epistemic Cooperation: $\E_C(\phi\to\psi)\to(\E_D\phi\to\E_{C\cup D}\psi)$,\\ where $C\cap D=\varnothing$,
    \item Strategic Truth: $\E_C\phi\to\S_C\phi$,
    \item Epistemic Determinicity: $\E_C(\phi\to\psi)\to(\K_C\S_\varnothing \phi\to\E_C\psi)$,
    \item Empty Coalition: $\K_\varnothing\phi\to\E_\varnothing \phi$,
    \item Nontermination: $\neg\S_C\bot$.
\end{enumerate}
We have discussed the informal meaning of these axioms in the introduction. In Section~\ref{soundness section} we formally prove the soundness of these axioms with respect to the semantics from Definition~\ref{sat}.

We write $\vdash \phi$ if formula $\phi$ is provable from the axioms of our logical system using 
Necessitation, Strategic Necessitation, and Modus Ponens inference rules:
$$
\dfrac{\phi}{\K_C\phi}
\hspace{10mm}
\dfrac{\phi}{\E_C\phi}
\hspace{10mm}
\dfrac{\phi,\hspace{5mm} \phi\to\psi}{\psi}.
$$
We write $X\vdash\phi$ if formula $\phi$ is provable from the theorems of our logical system and a set of additional axioms $X$ using only Modus Ponens inference rule.

\section{Derivation Examples}\label{examples section}

In this section we give examples of formal derivations in our logical system. In Lemma~\ref{strategic positive introspection lemma} we prove the strategic positive introspection principle~(\ref{st pos intro}) discussed in the introduction. The proof is similar to the proof of the epistemic positive introspection principle in Lemma~\ref{positive introspection lemma}.

\begin{lemma}\label{strategic positive introspection lemma}
$\vdash \E_C\phi\to\K_C\E_C\phi$.
\end{lemma}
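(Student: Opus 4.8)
The plan is to mimic the textbook derivation of the positive introspection principle $\vdash\K_C\phi\to\K_C\K_C\phi$ from the S5 axioms, but with the role of epistemic negative introspection partly taken over by the Strategic Negative Introspection axiom $\neg\E_C\phi\to\K_C\neg\E_C\phi$. Concretely, I would treat $\E_C\phi$ as an opaque formula and do ordinary modal "plumbing" around $\K_C$.

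First I would record two immediate propositional consequences. From the Truth axiom instance $\K_C\neg\E_C\phi\to\neg\E_C\phi$ and propositional reasoning, $\vdash\E_C\phi\to\neg\K_C\neg\E_C\phi$. Taking the contrapositive of the Strategic Negative Introspection axiom gives $\vdash\neg\K_C\neg\E_C\phi\to\E_C\phi$. Applying the Necessitation rule to the latter and then the Distributivity axiom (with Modus Ponens) yields $\vdash\K_C\neg\K_C\neg\E_C\phi\to\K_C\E_C\phi$.

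Next I would close the gap between $\E_C\phi$ and $\K_C\neg\K_C\neg\E_C\phi$. Instantiating the epistemic Negative Introspection axiom with the formula $\neg\E_C\phi$ in place of $\phi$ gives $\vdash\neg\K_C\neg\E_C\phi\to\K_C\neg\K_C\neg\E_C\phi$. Composing this with $\E_C\phi\to\neg\K_C\neg\E_C\phi$ from the first step produces $\vdash\E_C\phi\to\K_C\neg\K_C\neg\E_C\phi$. Chaining this with $\K_C\neg\K_C\neg\E_C\phi\to\K_C\E_C\phi$ from the second step yields the desired $\vdash\E_C\phi\to\K_C\E_C\phi$.

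The derivation is essentially routine once the right instances are chosen, so there is no genuine obstacle; the only things to be careful about are (i) using the ordinary Necessitation rule rather than Strategic Necessitation at the one place a rule of inference is applied, and (ii) taking the S5 axioms (Truth, Negative Introspection, Distributivity) at the compound formula $\neg\E_C\phi$ rather than at $\phi$. This is precisely parallel to how Lemma~\ref{positive introspection lemma} derives epistemic positive introspection from Truth and Negative Introspection.
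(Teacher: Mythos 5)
Your proposal is correct and follows essentially the same route as the paper's own proof: contraposing Strategic Negative Introspection, applying Necessitation and Distributivity to obtain $\K_C\neg\K_C\neg\E_C\phi\to\K_C\E_C\phi$, and bridging via the Truth and Negative Introspection axioms instantiated at $\neg\E_C\phi$. The steps match the paper's derivation exactly, up to ordering.
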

\begin{proof}
Note that formula $\neg\E_C\phi\to\K_C\neg\E_C\phi$ is an instance of Strategic Negative Introspection axiom. Thus, $\vdash \neg\K_C\neg\E_C\phi\to \E_C\phi$ by the law of contrapositive in the propositional logic. Hence,
$\vdash \K_C(\neg\K_C\neg\E_C\phi\to \E_C\phi)$ by  Necessitation inference rule. Thus, by  Distributivity axiom and Modus Ponens inference rule, 
\begin{equation}\label{pos intro eq}
   \vdash  \K_C\neg\K_C\neg\E_C\phi\to \K_C\E_C\phi.
\end{equation}

At the same time, $\K_C\neg\E_C\phi\to\neg\E_C\phi$ is an instance of Truth axiom. Thus, $\vdash \E_C\phi\to\neg\K_C\neg\E_C\phi$ by contraposition. Hence, taking into account the following instance of Negative Introspection axiom $\neg\K_C\neg\E_C\phi\to\K_C\neg\K_C\neg\E_C\phi$,
one can conclude that $\vdash \E_C\phi\to\K_C\neg\K_C\neg\E_C\phi$. The latter, together with statement~(\ref{pos intro eq}), implies the statement of the lemma by the laws of propositional reasoning.
\end{proof}

In the next example, we show that the existence of a know-how strategy by a coalition implies that the coalition has a distributed knowledge of the existence of a strategy.

\begin{lemma}
$\vdash \E_C\phi\to\K_C\S_C\phi$.
\end{lemma}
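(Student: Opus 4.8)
I would prove $\vdash\E_C\phi\to\K_C\S_C\phi$ by first observing that Strategic Truth gives $\vdash\E_C\phi\to\S_C\phi$, and then "pushing" this implication under a $\K_C$ using the strategic positive introspection principle from Lemma~\ref{strategic positive introspection lemma}. Concretely: by Lemma~\ref{strategic positive introspection lemma} we have $\vdash\E_C\phi\to\K_C\E_C\phi$, so it suffices to show $\vdash\K_C\E_C\phi\to\K_C\S_C\phi$.

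The key step is to move Strategic Truth inside the knowledge modality. Since $\E_C\phi\to\S_C\phi$ is an instance of the Strategic Truth axiom, Necessitation yields $\vdash\K_C(\E_C\phi\to\S_C\phi)$. Then the Distributivity axiom $\K_C(\E_C\phi\to\S_C\phi)\to(\K_C\E_C\phi\to\K_C\S_C\phi)$ together with Modus Ponens gives $\vdash\K_C\E_C\phi\to\K_C\S_C\phi$. Chaining this with $\vdash\E_C\phi\to\K_C\E_C\phi$ from Lemma~\ref{strategic positive introspection lemma} and propositional reasoning (transitivity of implication) delivers $\vdash\E_C\phi\to\K_C\S_C\phi$.

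There is no real obstacle here — the argument is a routine combination of Necessitation, Distributivity, and the already-established strategic positive introspection lemma. The only thing to be slightly careful about is that one genuinely needs the \emph{introspective} strengthening $\E_C\phi\to\K_C\E_C\phi$ rather than just $\E_C\phi\to\S_C\phi$, because the conclusion has $\S_C\phi$ sitting under $\K_C$; so the proof must invoke Lemma~\ref{strategic positive introspection lemma} (or equivalently rework its derivation from Strategic Negative Introspection) rather than appealing to Strategic Truth alone. Everything else is bookkeeping with the S5 axioms for $\K_C$.
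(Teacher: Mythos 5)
Your proof is correct and follows exactly the same route as the paper's: apply Necessitation and Distributivity to the Strategic Truth axiom to get $\vdash\K_C\E_C\phi\to\K_C\S_C\phi$, then chain with $\vdash\E_C\phi\to\K_C\E_C\phi$ from Lemma~\ref{strategic positive introspection lemma}. No differences worth noting.
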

\begin{proof}
By Strategic Truth axiom, $\vdash \E_C\phi\to\S_C\phi$. Hence, $\vdash \K_C(\E_C\phi\to\S_C\phi)$ by Necessitation inference rule. Thus, $\vdash \K_C\E_C\phi\to\K_C\S_C\phi$ by Distributivity axiom and Modus Ponens inference rule. At the same time, $\vdash \E_C\phi\to\K_C\E_C\phi$ by Lemma~\ref{strategic positive introspection lemma}. Therefore, $\vdash \E_C\phi\to\K_C\S_C\phi$ by the laws of propositional reasoning.
\end{proof}

The next lemma shows that the existence of a know-how strategy by a sub-coalition implies the existence of a know-how strategy by the entire coalition.

\begin{lemma}\label{subset lemma E}
$\vdash\E_C\phi\to \E_D\phi$, where $C\subseteq D$.
\end{lemma}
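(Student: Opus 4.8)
The plan is to derive this from the Epistemic Cooperation axiom together with Strategic Necessitation, mirroring the standard way coalition monotonicity follows from a cooperation principle. The key observation is that, since $C\subseteq D$, we can write $D$ as the disjoint union $(D\setminus C)\cup C$, and apply Epistemic Cooperation with the two (disjoint) coalitions $D\setminus C$ and $C$.

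First I would note that $\phi\to\phi$ is a propositional tautology, so $\vdash\phi\to\phi$, and hence $\vdash\E_{D\setminus C}(\phi\to\phi)$ by the Strategic Necessitation rule. Next I would take the following instance of the Epistemic Cooperation axiom, which is legitimate because $(D\setminus C)\cap C=\varnothing$:
$$
\E_{D\setminus C}(\phi\to\phi)\to\big(\E_C\phi\to\E_{(D\setminus C)\cup C}\phi\big).
$$
Since $C\subseteq D$, we have the set identity $(D\setminus C)\cup C=D$, so this instance is exactly
$$
\E_{D\setminus C}(\phi\to\phi)\to(\E_C\phi\to\E_D\phi).
$$
Applying Modus Ponens to this and the previously derived $\vdash\E_{D\setminus C}(\phi\to\phi)$ yields $\vdash\E_C\phi\to\E_D\phi$, as desired.

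There is no real obstacle here; the only things to be careful about are the two set-theoretic facts that make the application of Epistemic Cooperation valid, namely the disjointness $(D\setminus C)\cap C=\varnothing$ and the identity $(D\setminus C)\cup C=D$ under the hypothesis $C\subseteq D$. (The analogous Lemma~\ref{subset lemma S} for $\S$ presumably runs the same way, using the Cooperation axiom and ordinary Necessitation in place of Strategic Necessitation.)
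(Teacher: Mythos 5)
Your proof is correct and is essentially identical to the paper's own: both derive $\vdash\E_{D\setminus C}(\phi\to\phi)$ via Strategic Necessitation from the tautology $\phi\to\phi$ and then apply the Epistemic Cooperation axiom to the disjoint coalitions $D\setminus C$ and $C$, using $(D\setminus C)\cup C=D$. Your closing remark about Lemma~\ref{subset lemma S} also matches the paper, which proves it the same way using Lemma~\ref{s necessitation} in place of Strategic Necessitation.
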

\begin{proof}
Note that $\phi\to\phi$ is a propositional tautology. Thus, $\vdash\phi\to\phi$. Hence, $\vdash\E_{D\setminus C}(\phi\to\phi)$ by Strategic Necessitation inference rule. At the same time, by Epistemic Cooperation axiom,
$
\vdash\E_{D\setminus C}(\phi\to\phi)\to(\E_C\phi\to\E_D\phi)
$
due to the assumption $C\subseteq D$.  Therefore, $\vdash\E_C\phi\to\E_D\phi$ by Modus Ponens inference rule.
\end{proof}

Although our logical system has three modalities, the system contains necessitation inference rules 
only for two of them. The lemma below shows that the necessitation rule for the third modality is admissible.
\begin{lemma}\label{s necessitation}
For each finite $C\subseteq \mathcal{A}$, inference rule $\dfrac{\phi}{\S_C\phi}$ is admissible in our logical system.
\end{lemma}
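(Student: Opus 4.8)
The plan is to derive $\S_C\phi$ from $\phi$ by routing through the know-how modality, using the fact that the system already contains a necessitation rule for $\E_C$ together with the Strategic Truth axiom that connects $\E_C$ to $\S_C$.

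Concretely, suppose $\vdash\phi$. First I would apply the Strategic Necessitation inference rule $\dfrac{\phi}{\E_C\phi}$ to obtain $\vdash\E_C\phi$. Next, Strategic Truth axiom gives $\vdash\E_C\phi\to\S_C\phi$. Finally, Modus Ponens applied to these two yields $\vdash\S_C\phi$. Hence the rule $\dfrac{\phi}{\S_C\phi}$ is admissible, which is exactly the statement of the lemma.

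There is essentially no obstacle here: the argument is a three-line derivation that reuses tools already available in the excerpt (Strategic Necessitation, Strategic Truth, Modus Ponens). The only thing worth remarking is that the proof establishes the rule as a \emph{derived} rule in the strong sense — it transforms any proof of $\phi$ into a proof of $\S_C\phi$ uniformly — so "admissible" could in fact be strengthened, though the weaker phrasing suffices for how the rule is used later. One could alternatively try to mimic the proof of Lemma~\ref{subset lemma E} by taking the tautology $\vdash\phi\to\phi$, applying Strategic Necessitation to get $\vdash\E_{\varnothing}(\phi\to\phi)$, and then combining with the Cooperation axiom and an empty-coalition argument; but this is a needless detour compared with the direct route via Strategic Truth.
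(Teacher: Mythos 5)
Your proof is correct and is essentially identical to the paper's own argument: apply Strategic Necessitation to get $\vdash\E_C\phi$, then use the Strategic Truth axiom and Modus Ponens to conclude $\vdash\S_C\phi$. No issues.
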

\begin{proof}
Assumption $\vdash\phi$ implies $\vdash\E_C\phi$ by Strategic Necessitation inference rule. Hence, $\vdash\S_C\phi$ by Strategic Truth axiom and Modus Ponens inference rule.
\end{proof}

The next result is a counterpart of Lemma~\ref{subset lemma E}. It states that the existence of a strategy by a sub-coalition implies the existence of a strategy by the entire coalition.

\begin{lemma}\label{subset lemma S}
$\vdash\S_C\phi\to \S_D\phi$, where $C\subseteq D$.
\end{lemma}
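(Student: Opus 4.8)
The plan is to mirror exactly the proof of Lemma~\ref{subset lemma E}, replacing the know-how modality with the strategic modality and using the Cooperation axiom in place of Epistemic Cooperation. The key observation is that the Cooperation axiom has the shape $\S_C(\phi\to\psi)\to(\S_D\phi\to\S_{C\cup D}\psi)$ whenever $C\cap D=\varnothing$, so by instantiating it with the disjoint coalitions $D\setminus C$ and $C$ (whose union is $D$, since $C\subseteq D$) and with the formula $\phi\to\phi$ in the first slot, we get $\S_{D\setminus C}(\phi\to\phi)\to(\S_C\phi\to\S_D\phi)$.

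First I would note that $\phi\to\phi$ is a propositional tautology, so $\vdash\phi\to\phi$. Next I would apply the admissible inference rule $\dfrac{\phi}{\S_C\phi}$ from Lemma~\ref{s necessitation} — this is precisely the step where the $\S$ proof differs from the $\E$ proof, which used the genuine Strategic Necessitation rule; here the necessitation for $\S$ is only admissible, but Lemma~\ref{s necessitation} has already established it — to conclude $\vdash\S_{D\setminus C}(\phi\to\phi)$. Then, using the instance of the Cooperation axiom described above together with Modus Ponens, I would derive $\vdash\S_C\phi\to\S_D\phi$.

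I do not expect any real obstacle here; the only point requiring a moment's care is making sure the side condition $C\cap D=\varnothing$ of the Cooperation axiom is satisfied by the chosen coalitions, which it is because $(D\setminus C)\cap C=\varnothing$, and that their union $(D\setminus C)\cup C$ equals $D$, which holds precisely because of the hypothesis $C\subseteq D$.
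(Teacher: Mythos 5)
Your proposal is correct and follows essentially the same route as the paper's own proof: derive $\vdash\phi\to\phi$, apply the admissible rule from Lemma~\ref{s necessitation} to get $\vdash\S_{D\setminus C}(\phi\to\phi)$, and then use the Cooperation axiom instance $\S_{D\setminus C}(\phi\to\phi)\to(\S_C\phi\to\S_D\phi)$ with Modus Ponens. The check of the disjointness side condition and of $(D\setminus C)\cup C=D$ is exactly the point the paper relies on as well.
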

\begin{proof}
Note that $\phi\to\phi$ is a propositional tautology. Thus, $\vdash\phi\to\phi$. Hence,  $\vdash\S_{D\setminus C}(\phi\to\phi)$ by Lemma~\ref{s necessitation}.
At the same time, by Cooperation axiom,
$
\vdash\S_{D\setminus C}(\phi\to\phi)\to(\S_C\phi\to\S_D\phi)
$
due to the assumption $C\subseteq D$.  Therefore, $\vdash\S_C\phi\to\S_D\phi$ by Modus Ponens inference rule.
\end{proof}

\section{Soundness}\label{soundness section}

In this section we prove the soundness of our logical system. The proof of the soundness of multiagent S5 axioms and  inference rules is standard. Below we show the soundness of each of the remaining axioms and the Strategic Necessitation inference rule as a separate lemma. The soundness theorem for the whole logical system is stated at the end of this section as Theorem~\ref{soundness theorem}.

\begin{lemma}
If $w\Vdash\S_C(\phi\to\psi)$, $w\Vdash\S_D\phi$, and $C\cap D=\varnothing$, then $w\Vdash\S_{C\cup D}\psi$.
\end{lemma}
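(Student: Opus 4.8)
The plan is to unfold Definition~\ref{sat} for the strategic modality and combine the two witnessing strategy profiles into a single one for the union coalition, using the disjointness hypothesis $C\cap D=\varnothing$ to make the combination well-defined. First I would use $w\Vdash\S_C(\phi\to\psi)$ to obtain a strategy profile $\mathbf{s}_1=\{s_a\}_{a\in C}\in V^C$ such that $w\to_{\mathbf{s}_1} w'$ implies $w'\Vdash\phi\to\psi$ for every $w'\in W$; similarly, from $w\Vdash\S_D\phi$ I would obtain $\mathbf{s}_2=\{s_a\}_{a\in D}\in V^D$ such that $w\to_{\mathbf{s}_2}w'$ implies $w'\Vdash\phi$ for every $w'\in W$. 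Since $C$ and $D$ are disjoint, I can define a strategy profile $\mathbf{s}=\{s_a\}_{a\in C\cup D}\in V^{C\cup D}$ by letting $s_a$ be given by $\mathbf{s}_1$ when $a\in C$ and by $\mathbf{s}_2$ when $a\in D$.

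The key observation is then a monotonicity property of the arrow relation with respect to coalition size: if $w\to_{\mathbf{s}} u$ for the combined profile $\mathbf{s}\in V^{C\cup D}$, then also $w\to_{\mathbf{s}_1}u$ and $w\to_{\mathbf{s}_2}u$, because any full profile $\mathbf{s'}\in V^\mathcal{A}$ witnessing $w\to_{\mathbf{s}}u$ in the sense of Definition~\ref{single arrow} (i.e. agreeing with $\mathbf{s}$ on $C\cup D$) in particular agrees with $\mathbf{s}_1$ on $C$ and with $\mathbf{s}_2$ on $D$. Hence, for any $w'\in W$ with $w\to_{\mathbf{s}}w'$, we get both $w'\Vdash\phi\to\psi$ and $w'\Vdash\phi$, and therefore $w'\Vdash\psi$ by clause~3 of Definition~\ref{sat}. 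This shows $\mathbf{s}$ witnesses $w\Vdash\S_{C\cup D}\psi$.

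I do not expect a serious obstacle here; the only point requiring a little care is the well-definedness of the combined profile, which is exactly where the disjointness hypothesis $C\cap D=\varnothing$ is used — without it, $\mathbf{s}_1$ and $\mathbf{s}_2$ could disagree on a shared agent and no consistent combined vote would exist. The arrow-monotonicity step is immediate from Definition~\ref{single arrow} since enlarging the coalition only adds constraints on the witnessing full profile, so a witness for the larger coalition is automatically a witness for each sub-coalition.
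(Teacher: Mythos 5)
Your proposal is correct and follows essentially the same route as the paper's proof: extract the two witnessing profiles, merge them into a profile for $C\cup D$ (well-defined by disjointness), and use the fact that $w\to_{\mathbf{s}}w'$ for the merged profile implies $w\to_{\mathbf{s}_1}w'$ and $w\to_{\mathbf{s}_2}w'$ by Definition~\ref{single arrow}. No gaps.
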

\begin{proof}
Suppose that $w\Vdash\S_C(\phi\to\psi)$. Then, by Definition~\ref{sat}, there is a strategy profile $\mathbf s^1=\{s^1_a\}_{a\in C}\in V^C$ such that $w'\Vdash\phi\to\psi$ for each $w'\in W$ where $w\to_{\mathbf s^1}w'$. Similarly, assumption $w\Vdash\S_D\phi$ implies that there is a strategy $\mathbf s^2=\{s^2_a\}_{a\in D}\in V^D$ such that $w'\Vdash\phi$ for each $w'\in W$ where $w\to_{\mathbf s^2}w'$. Let strategy profile ${\mathbf s}=\{s_a\}_{a\in C\cup D}$ be defined as follows:
$$
s_a=
\begin{cases}
s^1_a, & \mbox{ if } a\in C,\\
s^2_a, & \mbox{ if } a\in D.
\end{cases}
$$
Strategy profile $\mathbf s$ is well-defined due to the assumption $C\cap D=\varnothing$ of the lemma.

Consider any epistemic state $w'\in W$ such that $w\to_{\mathbf s}w'$. By Definition~\ref{sat}, it suffices to show that $w'\Vdash\psi$. Indeed, assumption $w\to_{\mathbf s}w'$, by Definition~\ref{single arrow}, implies that $w\to_{\mathbf s^1}w'$ and $w\to_{\mathbf s^2}w'$. Thus, $w'\Vdash\phi\to\psi$ and $w'\Vdash\phi$ by the choice of strategies $\mathbf s^1$ and $\mathbf s^2$. Therefore, $w'\Vdash\psi$ by Definition~\ref{sat}. 
\end{proof}

\begin{lemma}
If $w\Vdash\neg\E_C\phi$, then $w\Vdash\K_C\neg\E_C\phi$.
\end{lemma}
\begin{proof}
Consider any epistemic state $u\in W$ such that $w\sim_C u$. By Definition~\ref{sat}, it suffices to show that $u\nVdash\E_C\phi$. Assume the opposite. Thus, $u\Vdash\E_C\phi$. Then, again by Definition~\ref{sat}, there is a strategy profile $\mathbf{s}\in V^C$ where $u''\Vdash\phi$ for all $u',u''\in W$ such that $u\sim_C u'$ and $u'\to_\mathbf{s} u''$. Recall that  $w\sim_C u$. Thus, by Corollary~\ref{sim set corollary}, $u''\Vdash\phi$ for all $u',u''\in W$ such that $w\sim_C u'$ and $u'\to_\mathbf{s} u''$. Therefore, $w\Vdash\E_C\phi$, by Definition~\ref{sat}. The latter contradicts the assumption of the lemma.
\end{proof}

\begin{lemma}
If $w\Vdash\E_C(\phi\to\psi)$, $w\Vdash\E_D\phi$, and $C\cap D=\varnothing$, then $w\Vdash\E_{C\cup D}\psi$.
\end{lemma}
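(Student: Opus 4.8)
The plan is to mirror the soundness argument for the ordinary Cooperation axiom given just above, adding the bookkeeping needed for the indistinguishability relation. So I would start by unpacking the two hypotheses via Definition~\ref{sat}: from $w\Vdash\E_C(\phi\to\psi)$ I extract a strategy profile $\mathbf{s}^1=\{s^1_a\}_{a\in C}\in V^C$ witnessing that $w''\Vdash\phi\to\psi$ whenever $w\sim_C w'$ and $w'\to_{\mathbf{s}^1}w''$, and from $w\Vdash\E_D\phi$ I extract $\mathbf{s}^2=\{s^2_a\}_{a\in D}\in V^D$ witnessing that $w''\Vdash\phi$ whenever $w\sim_D w'$ and $w'\to_{\mathbf{s}^2}w''$.

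Next I would glue these into a single profile $\mathbf{s}=\{s_a\}_{a\in C\cup D}$ defined by $s_a=s^1_a$ for $a\in C$ and $s_a=s^2_a$ for $a\in D$; this is well-defined precisely because $C\cap D=\varnothing$. The goal, by Definition~\ref{sat}, is to show this $\mathbf{s}$ witnesses $w\Vdash\E_{C\cup D}\psi$, so I take arbitrary $w',w''\in W$ with $w\sim_{C\cup D}w'$ and $w'\to_{\mathbf{s}}w''$ and aim to conclude $w''\Vdash\psi$.

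The two routine observations to record are: (i) $w\sim_{C\cup D}w'$ implies both $w\sim_C w'$ and $w\sim_D w'$ by Definition~\ref{sim set}; and (ii) $w'\to_{\mathbf{s}}w''$ implies both $w'\to_{\mathbf{s}^1}w''$ and $w'\to_{\mathbf{s}^2}w''$, since by Definition~\ref{single arrow} the transition $(w',\mathbf{s}',w'')\in M$ witnessing $w'\to_{\mathbf{s}}w''$ agrees with $\mathbf{s}^1$ on $C$ and with $\mathbf{s}^2$ on $D$. Combining (i) and (ii) with the choices of $\mathbf{s}^1$ and $\mathbf{s}^2$ gives $w''\Vdash\phi\to\psi$ and $w''\Vdash\phi$, hence $w''\Vdash\psi$ by the clause for implication in Definition~\ref{sat}.

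I do not anticipate a genuine obstacle here; the only point requiring a little care is the restriction clause of Definition~\ref{single arrow}, namely checking that a transition consistent with the combined profile $\mathbf{s}$ on $C\cup D$ is automatically consistent with $\mathbf{s}^1$ on $C$ and with $\mathbf{s}^2$ on $D$ — which is immediate from how $\mathbf{s}$ was defined. Note also that, unlike the case for $\S$, no appeal to the nontermination property of $M$ is needed, since the $\E$ clause is purely universally quantified over successors.
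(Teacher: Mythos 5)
Your proposal is correct and follows essentially the same route as the paper's own proof: extract the two witnessing profiles, merge them on $C\cup D$ (well-defined by disjointness), and use that $w\sim_{C\cup D}w'$ refines to $w\sim_C w'$ and $w\sim_D w'$ while $w'\to_{\mathbf{s}}w''$ refines to $w'\to_{\mathbf{s}^1}w''$ and $w'\to_{\mathbf{s}^2}w''$. No differences worth noting.
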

\begin{proof}
Suppose that $w\Vdash\E_C(\phi\to\psi)$. Thus, by Definition~\ref{sat}, there is a strategy profile $\mathbf s^1=\{s^1_a\}_{a\in C}\in V^C$ such that $w''\Vdash\phi\to\psi$ for all epistemic states $w',w''$ where $w\sim_C w'$ and $w'\to_{\mathbf s^1}w''$. Similarly, assumption $w\Vdash\E_D\phi$ implies that there is a strategy $\mathbf s^2=\{s^2_a\}_{a\in D}\in V^D$ such that $w''\Vdash\phi$ for all $w',w''$ where $w\sim_D w'$ and $w'\to_{\mathbf s^2}w''$. Let strategy profile ${\mathbf s}=\{s_a\}_{a\in C\cup D}$ be defined as follows:
$$
s_a=
\begin{cases}
s^1_a, & \mbox{ if } a\in C,\\
s^2_a, & \mbox{ if } a\in D.
\end{cases}
$$
Strategy profile $\mathbf s$ is well-defined due to the assumption $C\cap D=\varnothing$ of the lemma. 

Consider any epistemic states $w',w''\in W$ such that $w\sim_{C\cup D} w'$ and $w'\to_{\mathbf s}w''$. By Definition~\ref{sat}, it suffices to show that $w''\Vdash\psi$. Indeed, by Definition~\ref{sim set} assumption $w\sim_{C\cup D} w'$ implies that $w\sim_{C} w'$ and $w\sim_{D} w'$. At the same time, by Definition~\ref{single arrow}, assumption $w'\to_{\mathbf s}w''$ implies that $w'\to_{\mathbf s^1}w''$ and $w'\to_{\mathbf s^2}w''$. Thus, $w''\Vdash\phi\to\psi$ and $w''\Vdash\phi$ by the choice of strategies $\mathbf s^1$ and $\mathbf s^2$. Therefore, $w''\Vdash\psi$ by Definition~\ref{sat}. 
\end{proof}

\begin{lemma}
If $w\Vdash\E_C\phi$, then $w\Vdash\S_C\phi$.
\end{lemma}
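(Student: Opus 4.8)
The plan is to prove this directly from Definition~\ref{sat}, using the same strategy profile that witnesses the know-how modality as the witness for the strategy modality. The only nontrivial ingredient is the reflexivity of the coalition indistinguishability relation, which is available from Corollary~\ref{sim set corollary}.

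First I would unpack the hypothesis: by item~6 of Definition~\ref{sat}, the assumption $w\Vdash\E_C\phi$ yields a strategy profile $\mathbf s\in V^C$ such that $w''\Vdash\phi$ for all $w',w''\in W$ with $w\sim_C w'$ and $w'\to_{\mathbf s}w''$. I claim this same $\mathbf s$ witnesses $w\Vdash\S_C\phi$ according to item~5 of Definition~\ref{sat}. To verify this, consider an arbitrary $w'\in W$ with $w\to_{\mathbf s}w'$; it suffices to show $w'\Vdash\phi$. Since $\sim_C$ is an equivalence relation by Corollary~\ref{sim set corollary}, it is reflexive, so $w\sim_C w$. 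Instantiating the property of $\mathbf s$ with the intermediate state taken to be $w$ itself (i.e. taking the "$w'$" in the definition of $\E_C$ to be $w$ and the "$w''$" to be our $w'$), the facts $w\sim_C w$ and $w\to_{\mathbf s}w'$ give $w'\Vdash\phi$, as desired.

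There is essentially no obstacle here: the argument is a one-line instantiation once one notices that $\E_C$ quantifies over all states indistinguishable from $w$, and $w$ is trivially among them. The only thing to be careful about is citing reflexivity of $\sim_C$ rather than just reflexivity of each $\sim_a$, which is exactly what Corollary~\ref{sim set corollary} provides; alternatively one could unfold Definition~\ref{sim set} and use reflexivity of each $\sim_a$ directly, but the corollary is cleaner.
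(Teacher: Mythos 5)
Your proof is correct and follows essentially the same route as the paper's: both take the strategy profile $\mathbf s$ witnessing $\E_C\phi$, invoke Corollary~\ref{sim set corollary} to get $w\sim_C w$, and instantiate the intermediate state as $w$ itself to conclude $w\Vdash\S_C\phi$. No differences worth noting.
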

\begin{proof}
Suppose that $w\Vdash\E_C\phi$. Thus, by Definition~\ref{sat}, there is a strategy profile $\mathbf{s}\in V^C$ such that $w''\Vdash\phi$ for all epistemic states $w',w''\in W$, where $w\sim_C w'$ and $w'\to_\mathbf{s} w''$. By Corollary~\ref{sim set corollary}, $w\sim_C w$. Hence, $w''\Vdash\phi$ for each epistemic state $w''\in W$, where $w\to_\mathbf{s} w''$. Therefore, $w\Vdash\S_C\phi$ by Definition~\ref{sat}.
\end{proof}

\begin{lemma}
If $w\Vdash\E_C(\phi\to\psi)$ and $w\Vdash\K_C\S_\varnothing\phi$, then $w\Vdash\E_C\psi$.
\end{lemma}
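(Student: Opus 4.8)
The plan is to unfold Definition~\ref{sat} on each of the two hypotheses and then exhibit a single strategy profile that witnesses $\E_C\psi$. First I would use $w\Vdash\E_C(\phi\to\psi)$ to obtain a strategy profile $\mathbf{s}\in V^C$ such that $w''\Vdash\phi\to\psi$ for all $w',w''\in W$ with $w\sim_C w'$ and $w'\to_{\mathbf s}w''$. This very profile $\mathbf{s}$ will be the witness I claim works for $\E_C\psi$; the whole argument then reduces to showing that $\mathbf{s}$ also forces $\phi$ along every relevant transition.

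The key step is exactly that: fix arbitrary $w',w''\in W$ with $w\sim_C w'$ and $w'\to_{\mathbf s}w''$. From $w\Vdash\K_C\S_\varnothing\phi$ and the $\K_C$-clause of Definition~\ref{sat} we get $w'\Vdash\S_\varnothing\phi$. Unfolding the $\S_\varnothing$-clause (with $V^\varnothing$ a singleton), there is a strategy profile $\mathbf{t}$ of the empty coalition such that $w'\to_{\mathbf t}v$ implies $v\Vdash\phi$ for every $v\in W$. Now Corollary~\ref{empty coalition corollary} is the bridge: the transition $w'\to_{\mathbf s}w''$, being consistent with the profile $\mathbf s$ of coalition $C$, is in particular consistent with the empty profile, i.e.\ $w'\to_{\mathbf t}w''$, hence $w''\Vdash\phi$. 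Combining $w''\Vdash\phi$ with $w''\Vdash\phi\to\psi$ from the choice of $\mathbf s$, Definition~\ref{sat} gives $w''\Vdash\psi$. Since $w',w''$ were arbitrary, $\mathbf s$ witnesses $w\Vdash\E_C\psi$.

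The only point that requires a moment's thought — the "main obstacle", though it is minor — is noticing that the $\S_\varnothing$ guarantee, which a priori only constrains transitions consistent with the empty profile, actually constrains every transition out of $w'$, and in particular the one selected by $\mathbf{s}$; Corollary~\ref{empty coalition corollary} delivers precisely this. Note also that I do not need Corollary~\ref{sim set corollary} here: the indistinguishability condition $w\sim_C w'$ is simply carried along verbatim from one application of Definition~\ref{sat} to the next.
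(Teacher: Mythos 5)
Your proof is correct and follows essentially the same route as the paper's: take the witness profile $\mathbf s$ from $\E_C(\phi\to\psi)$, use the $\K_C$ hypothesis to get $w'\Vdash\S_\varnothing\phi$ at each indistinguishable state, and apply Corollary~\ref{empty coalition corollary} to transfer the empty-coalition guarantee to the transition $w'\to_{\mathbf s}w''$. The step you flag as the main obstacle is exactly the one the paper resolves the same way.
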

\begin{proof}
Suppose that $w\Vdash\E_C(\phi\to\psi)$. Thus, by Definition~\ref{sat}, there is a strategy profile $\mathbf{s}\in V^C$ such that $w''\Vdash\phi\to\psi$ for all epistemic states $w',w''\in W$ where $w\sim_C w'$ and $w'\to_\mathbf{s} w''$.

Consider any epistemic states $w'_0,w''_0\in W$ such that $w\sim_C w'_0$ and $w'_0\to_\mathbf{s} w''_0$. By Definition~\ref{sat}, it suffices to show that $w''_0\Vdash\psi$. 

Indeed, by Definition~\ref{sat}, the assumption $w\Vdash\K_C\S_\varnothing\phi$ together with $w\sim_C w'_0$ imply that $w'_0\Vdash\S_\varnothing\phi$. Hence, by Definition~\ref{sat}, there is a strategy profile $\mathbf{s'}$ of empty coalition $\varnothing$ such that $w''\Vdash\phi$ for each $w''$ where $w'_0\to_\mathbf{s'}w''$. Thus, $w''_0\Vdash\phi$ due to Corollary~\ref{empty coalition corollary} and $w'_0\to_\mathbf{s} w''_0$.
By the choice of strategy profile $\mathbf{s}$, statements
$w\sim_C w'_0$ and $w'_0\to_\mathbf{s} w''_0$ imply $w''_0\Vdash\phi\to\psi$. 
Finally, by Definition~\ref{sat}, statements $w''_0\Vdash\phi\to\psi$ and $w''_0\Vdash\phi$ imply that $w''_0\Vdash\psi$.
\end{proof}

\begin{lemma}
If $w\Vdash\K_\varnothing\phi$, then $w\Vdash\E_\varnothing\phi$.
\end{lemma}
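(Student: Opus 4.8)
The plan is to unwind Definition~\ref{sat} on both sides of the implication, exploiting the fact that the indistinguishability relation of the empty coalition, $\sim_\varnothing$, is the total relation on $W$. First I would note that by Definition~\ref{sim set} the condition $w_1\sim_\varnothing w_2$ holds vacuously for every pair $w_1,w_2\in W$; hence item~4 of Definition~\ref{sat} turns the hypothesis $w\Vdash\K_\varnothing\phi$ into the statement that $w'\Vdash\phi$ for \emph{every} state $w'\in W$, i.e.\ $\phi$ holds globally.

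Next I would exhibit the witness strategy profile required by item~6 of Definition~\ref{sat} for $\E_\varnothing\phi$. Since $V^\varnothing$ is the set of functions from $\varnothing$ to $V$, it consists of a single element, the empty function $\mathbf{s}$, and this is the strategy profile I would use. To verify the defining condition, take arbitrary $w',w''\in W$ with $w\sim_\varnothing w'$ and $w'\to_{\mathbf{s}}w''$; whatever these states are, $w''$ is a state of $W$, so $w''\Vdash\phi$ follows immediately from the first step. By Definition~\ref{sat} this gives $w\Vdash\E_\varnothing\phi$, completing the argument.

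There is essentially no obstacle here. The one point worth keeping in mind is that the semantics of $\E_\varnothing\phi$ is not vacuously satisfied in general — it genuinely demands that $\phi$ hold at every $\to_{\mathbf{s}}$-successor of every state — so the proof really does need the full strength of $\K_\varnothing\phi$, which, precisely because $\sim_\varnothing$ is total, amounts to global truth of $\phi$. (As a sanity check, this same reading is what underlies the soundness of the \emph{Empty Coalition} axiom.)
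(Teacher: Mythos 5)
Your proof is correct and follows essentially the same route as the paper's: both take the empty strategy profile as the witness for $\E_\varnothing\phi$ and reduce the verification to the observation that $\sim_\varnothing$ relates every pair of states, so $\K_\varnothing\phi$ forces $\phi$ at every state of $W$, in particular at every relevant successor $w''$. The only cosmetic difference is that the paper phrases the key step as ``$w\sim_\varnothing w''$ by Definition~\ref{sim set}'' rather than first restating $\K_\varnothing\phi$ as global truth, but the content is identical.
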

\begin{proof}
Let $\mathbf{s}=\{s_a\}_{a\in\varnothing}$ be the empty strategy profile. Consider any epistemic states $w',w''\in W$ such that $w\sim_\varnothing w'$ and $w'\to_\mathbf{s} w''$. By Definition~\ref{sat}, it suffices to show that $w''\Vdash\phi$. Indeed $w\sim_\varnothing w''$ by Definition~\ref{sim set}. Therefore, $w''\Vdash\phi$ by assumption $w\Vdash\K_\varnothing\phi$ and Definition~\ref{sat}.
\end{proof}

\begin{lemma}
$w\nVdash\S_C\bot$. 
\end{lemma}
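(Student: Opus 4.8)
Proof plan for the Nontermination soundness lemma ($w \nVdash \S_C\bot$).

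The plan is to argue directly from Definition~\ref{sat} that $\S_C\bot$ can never be satisfied, using the non-halting condition built into the definition of an aggregation mechanism (clause 4 of Definition~\ref{transition system}). First I would suppose, for contradiction, that $w \Vdash \S_C\bot$ for some state $w$. By Definition~\ref{sat}, this gives a strategy profile $\mathbf{s} = \{s_a\}_{a \in C} \in V^C$ such that for every $w' \in W$, if $w \to_{\mathbf{s}} w'$ then $w' \Vdash \bot$. Since $w' \Vdash \bot$ is impossible for any state $w'$ (as $\bot$ is the negation of a tautology, no state satisfies it), it follows that there is \emph{no} state $w'$ with $w \to_{\mathbf{s}} w'$.

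The key step is to derive a contradiction from the non-existence of such a $w'$. I would extend the strategy profile $\mathbf{s}$ of coalition $C$ to a full strategy profile $\mathbf{s'} = \{s'_a\}_{a \in \mathcal{A}} \in V^{\mathcal{A}}$ by assigning arbitrary votes in $V$ to the agents in $\mathcal{A} \setminus C$ (this is possible because $V$ is nonempty by clause 3 of Definition~\ref{transition system}, and agrees with $\mathbf{s}$ on $C$). By clause 4 of Definition~\ref{transition system}, applied to state $w$ and profile $\mathbf{s'}$, there exists $w' \in W$ with $(w, \mathbf{s'}, w') \in M$. Then, by Definition~\ref{single arrow}, since $\mathbf{s'}$ agrees with $\mathbf{s}$ on $C$, we have $w \to_{\mathbf{s}} w'$, contradicting the conclusion of the previous paragraph.

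There is no real obstacle here; the proof is short. The only point requiring any care is making explicit the extension of the partial profile $\mathbf{s}$ to a total profile $\mathbf{s'}$ and invoking the non-halting clause correctly — that is, remembering that the semantics of $\S_C$ quantifies over $\to_{\mathbf{s}}$-successors, which by Definition~\ref{single arrow} always exist precisely because the mechanism $M$ is total in the required sense. I would close by stating that this contradiction establishes $w \nVdash \S_C\bot$ for every state $w$.
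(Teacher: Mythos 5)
Your proposal is correct and follows essentially the same route as the paper's proof: extend the coalition's profile $\mathbf{s}$ to a full profile $\mathbf{s'}$ using the nonemptiness of $V$, invoke the non-halting clause of Definition~\ref{transition system} to obtain a successor $w'$ with $w\to_{\mathbf s}w'$, and derive a contradiction with $w'\Vdash\bot$. The only difference is cosmetic ordering of the contradiction.
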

\begin{proof}
Suppose that $w\Vdash\S_C\bot$. Thus, by Definition~\ref{sat}, there is a strategy profile $\mathbf{s}=\{s_a\}_{a\in \mathcal{A}}\in V^C$ such that $u\Vdash\bot$ for each $u\in W$ where $w\to_{\mathbf{s}}u$. 

Note that by Definition~\ref{transition system}, the domain of choices $V$ is not empty. Thus, strategy profile $\mathbf{s}$ can be extended to a strategy profile $\mathbf{s}'=\{s'_a\}_{a\in \mathcal{A}}\in V^\mathcal{A}$ such that $s'_a=s_a$ for each $a\in C$. 

By Definition~\ref{transition system}, there must exist a state $w'\in W$ such that $(w,\mathbf{s}',w')\in M$. Hence, $w\to_{\mathbf{s}}w'$ by Definition~\ref{single arrow}. Therefore, $w'\Vdash\bot$ by the choice of strategy $\mathbf{s}$, which contradicts Definition~\ref{sat}.
\end{proof}

\begin{lemma}
If $w\Vdash\phi$ for any epistemic state $w\in W$ of an epistemic transition system $(W,\{\sim_a\}_{a\in \mathcal{A}},V,M,\pi)$, then $w\Vdash\S_C\phi$ for every epistemic state $w\in W$.
\end{lemma}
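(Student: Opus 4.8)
The plan is to unfold Definition~\ref{sat} for the modality $\S_C$ and observe that, since $\phi$ holds at \emph{every} state of the system, \emph{any} strategy profile of $C$ witnesses $\S_C\phi$. Concretely, I would fix an arbitrary epistemic state $w\in W$ and recall that, by Definition~\ref{sat}, to establish $w\Vdash\S_C\phi$ it suffices to exhibit a strategy profile $\mathbf{s}\in V^C$ such that $w'\Vdash\phi$ for every $w'\in W$ with $w\to_{\mathbf{s}}w'$.

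The first step is to check that $V^C$ is nonempty, so that such an $\mathbf{s}$ exists at all. If $C=\varnothing$ this is immediate, the empty function being the unique element of $V^\varnothing$; if $C\neq\varnothing$ it follows from the assumption in Definition~\ref{transition system} that the domain of choices $V$ is nonempty. So pick any $\mathbf{s}\in V^C$. Then, for an arbitrary $w'\in W$ with $w\to_{\mathbf{s}}w'$, we have in particular $w'\in W$, so the hypothesis of the lemma yields $w'\Vdash\phi$. Hence $\mathbf{s}$ satisfies the required condition, and $w\Vdash\S_C\phi$ by Definition~\ref{sat}. Since $w$ was arbitrary, this holds at every epistemic state of the system.

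There is essentially no obstacle here: the only point needing any attention is the nonemptiness of $V^C$, and the particular choice of $\mathbf{s}$ is genuinely irrelevant precisely because $\phi$ is assumed to hold everywhere. This lemma is the semantic counterpart of the Necessitation-style rules; note that it is distinct from the admissible rule $\frac{\phi}{\S_C\phi}$ of Lemma~\ref{s necessitation}, which is obtained purely syntactically via Strategic Necessitation and Strategic Truth.
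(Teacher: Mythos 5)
Your proof is correct and follows essentially the same route as the paper: both arguments pick an arbitrary strategy profile in $V^C$ (which exists because $V$ is nonempty) and observe that the conclusion $w'\Vdash\phi$ holds trivially for every successor since $\phi$ holds at every state. The paper simply instantiates a concrete constant profile $s_a=v$ for a fixed $v\in V$, while you note that the choice is irrelevant; the substance is identical.
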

\begin{proof}
By Definition~\ref{transition system}, set $V$ is not empty. Let $v\in V$. Consider strategy profile $\mathbf{s}=\{s_a\}_{a\in C}$ of coalition $C$ such that $s_a=v$ for each $s\in C$. Note that $w'\Vdash\phi$ for each $w'\in W$ due to the assumption of the lemma. Therefore, $w\Vdash\S_C\phi$ by Definition~\ref{sat}. 
\end{proof}

Taken together, the lemmas above imply the soundness theorem for our logical system stated below.
\begin{theorem}\label{soundness theorem}
If $\vdash\phi$, then $w\Vdash\phi$ for each epistemic state $w\in W$ of each epistemic transition system $(W,\{\sim_a\}_{a\in \mathcal{A}},V,M,\pi)$. \qed
\end{theorem}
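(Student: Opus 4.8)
The plan is a routine induction on the length of a derivation of $\phi$ in the logical system, carrying the inductive hypothesis in the form ``$\phi$ holds at \emph{every} epistemic state of \emph{every} epistemic transition system''. This uniform strength of the hypothesis is exactly what will make the two necessitation rules sound: a formula true at a single state need not have its $\K_C$- or $\E_C$-box true there.

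For the base case, $\phi$ is either a propositional tautology or an instance of one of the eleven axioms. Propositional tautologies are handled by the usual truth-functional argument on clauses~2 and~3 of Definition~\ref{sat}, applied statewise. The four multiagent S5 axioms (Truth, Negative Introspection, Distributivity, Monotonicity) are valid by the standard Kripke-model arguments, using that each $\sim_C$ is an equivalence relation (Corollary~\ref{sim set corollary}) and that $\sim_D\,\subseteq\,\sim_C$ whenever $C\subseteq D$ (immediate from Definition~\ref{sim set}) for Monotonicity. The remaining seven axioms --- Cooperation, Strategic Negative Introspection, Epistemic Cooperation, Strategic Truth, Epistemic Determinicity, Empty Coalition, and Nontermination --- are precisely the soundness lemmas already established above in this section, so nothing further is needed for them.

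For the inductive step, $\phi$ is obtained from previously derived formulas by Modus Ponens, Necessitation, or Strategic Necessitation. Modus Ponens preserves validity by clause~3 of Definition~\ref{sat}, applied at each state. For Necessitation $\dfrac{\psi}{\K_C\psi}$: if $\psi$ holds at every state of every system, then for any state $w$ and any $w'$ with $w\sim_C w'$ we have $w'\Vdash\psi$, hence $w\Vdash\K_C\psi$ by clause~4. For Strategic Necessitation $\dfrac{\psi}{\E_C\psi}$: since $V\ne\varnothing$ (Definition~\ref{transition system}, clause~3) the set $V^C$ is nonempty, so fix any strategy profile $\mathbf s\in V^C$; as $\psi$ is valid, every $w''$ with $w\sim_C w'$ and $w'\to_{\mathbf s}w''$ satisfies $\psi$, hence $w\Vdash\E_C\psi$ by clause~6 --- this is essentially the content of the final lemma above.

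There is no genuine obstacle here; the only points deserving care are (i) stating the inductive hypothesis with the ``all states of all systems'' strength, for the reason noted above, and (ii) recalling that the soundness of Nontermination rests essentially on the never-halting condition (clause~4 of Definition~\ref{transition system}) together with $V\ne\varnothing$, both already discharged in the corresponding lemma. Assembling the base and inductive cases yields Theorem~\ref{soundness theorem}.
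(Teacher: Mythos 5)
Your proof is correct and follows essentially the same route as the paper: the paper establishes the soundness of each non-S5 axiom as a separate lemma and then asserts that "taken together, the lemmas above imply the soundness theorem," leaving implicit exactly the induction on derivations (with the "all states of all systems" inductive hypothesis) that you spell out. The only cosmetic discrepancy is that the paper's final lemma verifies the rule $\phi/\S_C\phi$ rather than Strategic Necessitation $\phi/\E_C\phi$ itself, but your direct argument for $\E_C\psi$ via clause~6 of Definition~\ref{sat} is sound and closes that gap.
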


\section{Completeness}\label{completeness section}

This section is dedicated to the proof of the following completeness theorem for our logical system.

\begin{theorem}\label{completeness theorem}
If $w\Vdash\phi$ for each epistemic state $w$ of each epistemic transition system, then $\vdash\phi$.
\end{theorem}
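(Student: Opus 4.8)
The plan is to prove completeness by the standard canonical model construction, building an epistemic transition system out of maximal consistent sets and establishing a truth lemma. The deepest difficulty, flagged by the authors themselves in the introduction, is getting the semantics of the know-how modality $\E_C$ right in the canonical model: a witness strategy for $\E_C\phi$ must work uniformly across all states indistinguishable by $C$, so the canonical transitions cannot be defined naively state-by-state. I expect the technical heart to be the construction of pairs of maximal consistent sets ``in harmony'' and ``in complete harmony'' (Sections~\ref{harmony subsection} and \ref{complete harmony subsection}), which is the device that lets a single abstract ``choice'' simultaneously realize the strategic commitments of $\E_C$-formulae across a whole $\sim_C$-equivalence class while respecting $\S_\varnothing$-constraints (via Epistemic Determinicity) and the nontermination requirement.

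Concretely, I would proceed as follows. First, take the contrapositive: assume $\nvdash\phi$, so $\{\neg\phi\}$ is consistent; by Lindenbaum extend it to a maximal consistent set, which will be the state refuting $\phi$. Second, define the canonical set of states $W$ to be all maximal consistent sets of $\Phi$ (or, to control cardinality for Lemma~\ref{complete harmony lemma}, maximal consistent sets in a suitably saturated countable sublanguage), with $w\sim_C w'$ iff $w$ and $w'$ contain exactly the same $\K_C$-formulae; Corollary~\ref{sim set corollary} and the S5 axioms make each $\sim_C$ an equivalence relation, and the Monotonicity axiom gives $\sim_D\,\subseteq\,\sim_C$ for $C\subseteq D$. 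Third — the crux — define the domain of choices $V$ and the aggregation mechanism $M$ so that: (a) for every state $w$ and every formula $\S_C\phi\in w$ there is a profile of $C$ forcing $\phi$ at $w$; (b) for every $\E_C\phi\in w$ there is a \emph{single} profile of $C$ forcing $\phi$ from \emph{every} state $\sim_C$-equivalent to $w$; and (c) $M$ is serial, yielding $\neg\S_C\bot$. The harmony machinery is exactly what supplies the successor states: given the $\E_C$- and $\S_\varnothing$-commitments carried by a class of states, one builds matching maximal consistent sets (the ``harmonious partners'') to serve as $\to_{\mathbf s}$-successors, and ``complete harmony'' upgrades this so that the witnessing choice is total, i.e. defined coherently for all states at once rather than piecemeal.

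Fourth, with $M$ in hand I would prove the Truth Lemma by induction on $\phi$: the Boolean and $\K_C$ cases are routine (the latter using Truth, Negative Introspection, Distributivity in the usual S5 way). For $\S_C$: the ``$\Rightarrow$'' direction uses clause (a) of the mechanism; the ``$\Leftarrow$'' direction uses a standard argument that if no profile forces $\phi$ then one can find, via Cooperation and the consistency of $\Phi$, a $\to_{\mathbf s}$-successor containing $\neg\phi$. For $\E_C$: ``$\Rightarrow$'' uses clause (b); ``$\Leftarrow$'' uses Strategic Negative Introspection (so that failure of $\E_C\phi$ is known to $C$ and hence holds across the whole class) together with Epistemic Cooperation and Epistemic Determinicity to locate a ``bad'' successor from some $\sim_C$-equivalent state — this is where the harmony construction must be invoked in the forward direction as well. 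Finally, apply the Truth Lemma to the maximal consistent set containing $\neg\phi$ to get a state where $\phi$ fails, contradicting the hypothesis, and conclude $\vdash\phi$. I expect essentially all the genuine effort to sit in step three and in the $\E_C$ clauses of step four; the rest is bookkeeping of the kind already illustrated in Section~\ref{examples section} and Section~\ref{soundness section}.
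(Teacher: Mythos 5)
Your overall roadmap --- contrapositive plus Lindenbaum, a canonical transition system, harmony and complete harmony as the device that manufactures the successors witnessing $\neg\E_C\psi$, and a truth lemma by induction --- matches the paper's proof. But there is a genuine gap in your step two. You take the canonical states to be all maximal consistent sets and stipulate $w\sim_C w'$ iff $w$ and $w'$ contain the same $\K_C$-formulae. In an epistemic transition system only the individual relations $\sim_a$ are primitive; for a coalition, $\sim_C$ must by Definition~\ref{sim set} equal $\bigcap_{a\in C}\sim_a$. With your stipulation this identity fails: two maximal consistent sets can agree on all $\K_{\{a\}}$-formulae for every single $a\in C$ while disagreeing on some $\K_C$-formula, because Monotonicity only yields the inclusion $\{\phi:\K_{\{a\}}\phi\in w\}\subseteq\{\phi:\K_{C}\phi\in w\}$ in one direction. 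Hence in the flat canonical model the relation $\bigcap_{a\in C}\sim_a$ that the semantics actually uses for $\K_C$ is strictly coarser than the relation you need, and the left-to-right half of the $\K_C$ clause of the truth lemma breaks. This is the standard obstruction for distributed knowledge, and it is precisely why the paper abandons the flat model and unravels it into a tree (Definitions~\ref{canonical worlds} and~\ref{canonical sim}): states are paths $X_0,C_1,X_1,\dots,C_n,X_n$, two states are $a$-indistinguishable when every edge on the unique connecting path carries a coalition containing $a$, and Lemmas~\ref{down lemma}--\ref{k child all} then recover the $\K_C$ clause by walking that path.

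A second, smaller issue is that your step three states desiderata for $V$ and $M$ rather than a construction, and the construction carries real content: in the paper a vote is a pair $(\phi,u)\in\Phi\times W$ of a formula and a \emph{guess} of the current state (Definition~\ref{canonical domain}), and the mechanism honours an $\S_C$-commitment only when every member of $C$ guesses the current state correctly, but honours an $\E_C$-commitment regardless of the guesses (Definition~\ref{canonical M}). This asymmetry is what makes $\S_C$ strategies non-uniform and $\E_C$ strategies uniform in the canonical model, and the proof of Lemma~\ref{e child exists} additionally needs a pigeonhole argument over $|C|+1$ copies of the harmonious partner to pick an indistinguishable state $w'$ that no member of $C$ has guessed, so that $\S_D$-commitments sitting in $hd(w')$ cannot be triggered and spoil the bad successor. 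Without some such device your clauses (a) and (b) interfere with each other. These omissions are fixable --- the paper fixes them --- but as written the plan does not yet go through.
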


\subsection{Positive Introspection}\label{positive introspection subsection}

The proof of Theorem~\ref{completeness theorem} is divided into several parts. In this section we prove the positive introspection principle for distributed knowledge modality from the rest of modality $\K$ axioms  in our logical system.  This is a well-known result that we reproduce to keep the presentation self-sufficient. The positive introspection principle is used later in the proof of the completeness.

\begin{lemma}\label{positive introspection lemma}
$\vdash \K_C\phi\to\K_C\K_C\phi$.
\end{lemma}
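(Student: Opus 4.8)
The goal is to derive $\K_C\phi\to\K_C\K_C\phi$ from the Truth, Negative Introspection, Distributivity, and Monotonicity axioms for $\K$, together with propositional reasoning, Necessitation, and Modus Ponens. This is the standard S5 derivation of positive introspection from negative introspection, so the plan is to mimic the argument already used in the proof of Lemma~\ref{strategic positive introspection lemma} (which does exactly this for the $\E/\K$ pair), now applied to two copies of the $\K_C$ modality.

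The plan is as follows. First I would take the Truth axiom instance $\K_C\neg\K_C\phi\to\neg\K_C\phi$ and contrapose it to get $\vdash \K_C\phi\to\neg\K_C\neg\K_C\phi$. Next I would take the Negative Introspection axiom instance $\neg\K_C\phi\to\K_C\neg\K_C\phi$, contrapose it to get $\vdash \neg\K_C\neg\K_C\phi\to\K_C\phi$, then apply Necessitation to obtain $\vdash\K_C(\neg\K_C\neg\K_C\phi\to\K_C\phi)$, and then use Distributivity plus Modus Ponens to get
\begin{equation*}
\vdash \K_C\neg\K_C\neg\K_C\phi\to\K_C\K_C\phi.
\end{equation*}
It remains to connect $\K_C\phi$ to $\K_C\neg\K_C\neg\K_C\phi$: apply the Negative Introspection axiom to the formula $\neg\K_C\phi$, namely $\neg\K_C\neg\K_C\phi\to\K_C\neg\K_C\neg\K_C\phi$, and compose it with the first step $\K_C\phi\to\neg\K_C\neg\K_C\phi$ by propositional reasoning to obtain $\vdash\K_C\phi\to\K_C\neg\K_C\neg\K_C\phi$. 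Chaining this with the displayed implication gives $\vdash\K_C\phi\to\K_C\K_C\phi$ by propositional reasoning.

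I do not expect any serious obstacle here: everything is a routine combination of contraposition, Necessitation, Distributivity, and Modus Ponens, and the Monotonicity axiom is not even needed for this particular lemma (it is only needed elsewhere in the completeness argument). The one point to be careful about is bookkeeping the nested negations correctly so that the right instance of Negative Introspection is applied to $\neg\K_C\phi$ rather than to $\phi$; beyond that the derivation is mechanical and parallels the proof of Lemma~\ref{strategic positive introspection lemma} line for line.
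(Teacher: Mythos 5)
Your proposal is correct and is essentially identical to the paper's own derivation: both contrapose Truth to get $\K_C\phi\to\neg\K_C\neg\K_C\phi$, contrapose Negative Introspection and push it under $\K_C$ via Necessitation and Distributivity to get $\K_C\neg\K_C\neg\K_C\phi\to\K_C\K_C\phi$, and bridge the two with the Negative Introspection instance for $\neg\K_C\phi$. Your observation that Monotonicity is not needed and that the argument parallels Lemma~\ref{strategic positive introspection lemma} also matches the paper.
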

\begin{proof}
Formula $\neg\K_C\phi\to\K_C\neg\K_C\phi$ is an instance of Negative Introspection axiom. Thus, $\vdash \neg\K_C\neg\K_C\phi\to \K_C\phi$ by the law of contrapositive in the propositional logic. Hence,
$\vdash \K_C(\neg\K_C\neg\K_C\phi\to \K_C\phi)$ by Necessitation inference rule. Thus, by  Distributivity axiom and Modus Ponens inference rule, 
\begin{equation}\label{pos intro eq 2}
   \vdash \K_C\neg\K_C\neg\K_C\phi\to \K_C\K_C\phi.
\end{equation}

At the same time, $\K_C\neg\K_C\phi\to\neg\K_C\phi$ is an instance of Truth axiom. Thus, $\vdash \K_C\phi\to\neg\K_C\neg\K_C\phi$ by contraposition. Hence, taking into account the following instance of  Negative Introspection axiom $\neg\K_C\neg\K_C\phi\to\K_C\neg\K_C\neg\K_C\phi$,
one can conclude that $\vdash \K_C\phi\to\K_C\neg\K_C\neg\K_C\phi$. The latter, together with statement~(\ref{pos intro eq 2}), implies the statement of the lemma by the laws of propositional reasoning.
\end{proof}

\subsection{Consistent Sets of Formulae}\label{consistent sets section}

The proof of the completeness consists in constructing a canonical model in which states are maximal consistent sets of formulae. This is a standard technique in modal logic that we modified significantly to work in the setting of our logical system. The standard way to apply this technique to a modal operator $\Box$ is to create a ``child" state $w'$ such that $\neg\psi\in w'$ for each ``parent" state $w$ where $\neg\Box\psi\in w$. In the simplest case when $\Box$ is a distributed knowledge modality $\K_C$, the standard technique requires no modification and the construction of a ``child" state is based on the following lemma:

\begin{lemma}\label{k consist}
For any consistent set of formulae $X$, any formula $\neg\K_C\psi\in X$, and any formulae $\K_C\phi_1,\dots,\K_C\phi_n\in X$, the set of
formulae $\{\neg\psi,\phi_1,\dots,\phi_n\}$ is consistent.
\end{lemma}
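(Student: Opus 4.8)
The plan is to prove the contrapositive: assume the set $\{\neg\psi,\phi_1,\dots,\phi_n\}$ is inconsistent and derive that $X$ is inconsistent, contradicting the hypothesis. This is the standard ``child state'' argument for the distributed knowledge modality, and the only slightly nonstandard ingredient is that we must use both the Distributivity axiom and the Monotonicity axiom for $\K_C$ (monotonicity is needed only implicitly here since the coalition $C$ is fixed throughout, so in fact plain distributivity suffices).

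First I would unfold the assumed inconsistency: if $\{\neg\psi,\phi_1,\dots,\phi_n\}$ is inconsistent, then by definition $\vdash (\phi_1\wedge\dots\wedge\phi_n)\to\psi$, i.e. $\vdash \phi_1\to(\phi_2\to(\cdots\to(\phi_n\to\psi)\cdots))$, using propositional reasoning to curry the conjunction. Next I would apply the Necessitation inference rule to this theorem to obtain $\vdash \K_C(\phi_1\to(\phi_2\to(\cdots\to(\phi_n\to\psi)\cdots)))$. Then, repeatedly applying the Distributivity axiom $\K_C(\alpha\to\beta)\to(\K_C\alpha\to\K_C\beta)$ together with Modus Ponens, I would ``peel off'' each $\K_C\phi_i$ in turn: from $\K_C\phi_1\in X$ we get $\K_C(\phi_2\to(\cdots))$, then using $\K_C\phi_2\in X$ we get $\K_C(\phi_3\to(\cdots))$, and so on, until after $n$ steps we obtain $X\vdash \K_C\psi$.

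Finally, since $\neg\K_C\psi\in X$ by hypothesis and we have just derived $X\vdash\K_C\psi$, the set $X$ proves both $\K_C\psi$ and $\neg\K_C\psi$, hence $X\vdash\bot$, so $X$ is inconsistent. This contradicts the assumption that $X$ is consistent, completing the proof.

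I do not anticipate a genuine obstacle here; this is a routine distribution-of-the-box argument. The only point requiring mild care is the bookkeeping of the iterated application of Distributivity and Modus Ponens, which I would either write out as an explicit induction on $n$ (claiming that $\{\K_C\phi_1,\dots,\K_C\phi_n\}\vdash \K_C\chi$ whenever $\vdash (\phi_1\wedge\dots\wedge\phi_n)\to\chi$) or simply state as following ``by $n$ applications of the Distributivity axiom.'' Using the induction formulation keeps the argument fully rigorous while remaining short.
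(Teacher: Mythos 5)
Your proposal is correct and follows essentially the same route as the paper's own proof: unfold the inconsistency into $\vdash\phi_1\to(\phi_2\to\cdots(\phi_n\to\psi)\cdots)$, apply Necessitation, peel off each $\K_C\phi_i$ via Distributivity and Modus Ponens, and derive $X\vdash\K_C\psi$, contradicting $\neg\K_C\psi\in X$. Your side remark that Monotonicity is not actually needed (since $C$ is fixed) is also accurate.
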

\begin{proof}
Assume the opposite. Then, 
$
\phi_1,\dots,\phi_n\vdash\psi
$.
Thus, by the deduction theorem for propositional logic applied $n$ times,
$$\vdash\phi_1\to(\phi_2\to\dots(\phi_n\to\psi)\dots).$$    
Hence, by Necessitation inference rule,
$$\vdash\K_C(\phi_1\to(\phi_2\to\dots(\phi_n\to\psi)\dots)).$$   
By Distributivity axiom and Modus Ponens inference rule,
$$\K_C\phi_1\vdash\K_C(\phi_2\to\dots(\phi_n\to\psi)\dots).$$ 
By repeating the last step $(n-1)$ times,
$$\K_C\phi_1,\dots,\K_C\phi_n\vdash\K_C\psi.$$
Hence, $X\vdash \K_C\psi$ by the choice of formula $\K_C\phi_1,\dots,\K_C\phi_n$, which contradicts the consistency of the set $X$ due to the assumption $\neg\K_C\psi\in X$.
\end{proof}

If $\Box$ is the modality $\S_C$, then the standard technique needs to be modified. Namely, while $\neg\S_C\psi\in w$ means that coalition $C$ can not achieve goal $\psi$, its pairwise disjoint sub-coalitions $D_1,\dots, D_n\subseteq C$ might still achieve their own goals $\phi_1,\dots,\phi_n$.   An equivalent of Lemma~\ref{k consist} for modality $\S_C$ is the following statement.

\begin{lemma}\label{i neq j S}
For any consistent set of formulae $X$, and any subsets $D_1,\dots,D_n$  of a coalition $C$, any formula $\neg\S_C\psi\in X$, and any $\S_{D_1}\phi_1,\dots,\S_{D_n}\phi_n\in X$, if $D_i\cap D_j=\varnothing$ for all integers $i,j\le n$ such that $i\neq j$, then the set of formulae $\{\neg\psi,\phi_1,\dots,\phi_n\}$ is consistent.
\end{lemma}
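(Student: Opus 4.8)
The plan is to mimic the proof of Lemma~\ref{i neq j S}'s sibling, Lemma~\ref{k consist}, but using the Cooperation axiom in place of Distributivity. Assume for contradiction that the set $\{\neg\psi,\phi_1,\dots,\phi_n\}$ is inconsistent. Then $\phi_1,\dots,\phi_n\vdash\psi$, and by the deduction theorem for propositional logic applied $n$ times, $\vdash\phi_1\to(\phi_2\to\cdots(\phi_n\to\psi)\cdots)$. The goal is to push this through the strategic necessitation/cooperation machinery to obtain $X\vdash\S_C\psi$, contradicting $\neg\S_C\psi\in X$ together with consistency of $X$.

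The key steps, in order: first apply Lemma~\ref{s necessitation} (the admissible $\S$-necessitation rule, which we may use since it is established earlier) to conclude $\vdash\S_{D_1}\bigl(\phi_1\to(\phi_2\to\cdots(\phi_n\to\psi)\cdots)\bigr)$. Then repeatedly apply the Cooperation axiom together with Modus Ponens: from $\S_{D_1}(\phi_1\to\chi_1)$, where $\chi_1$ abbreviates $\phi_2\to\cdots(\phi_n\to\psi)\cdots$, and $\S_{D_2}\phi_2\in X$ (wait---one must be careful about the index alignment). More precisely, at stage $i$ we have (derived from $X$) the formula $\S_{D_1\cup\cdots\cup D_{i-1}}\bigl(\phi_i\to(\phi_{i+1}\to\cdots(\phi_n\to\psi)\cdots)\bigr)$, and we combine it with $\S_{D_i}\phi_i\in X$ via Cooperation to get $\S_{D_1\cup\cdots\cup D_i}\bigl(\phi_{i+1}\to\cdots(\phi_n\to\psi)\cdots\bigr)$. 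Here the disjointness hypothesis $D_i\cap D_j=\varnothing$ for $i\neq j$ is exactly what licenses each application of Cooperation, since the side condition $C\cap D=\varnothing$ requires $D_i$ to be disjoint from $D_1\cup\cdots\cup D_{i-1}$. After $n$ iterations we obtain $X\vdash\S_{D_1\cup\cdots\cup D_n}\psi$, and since $D_1\cup\cdots\cup D_n\subseteq C$, Lemma~\ref{subset lemma S} yields $X\vdash\S_C\psi$. This contradicts $\neg\S_C\psi\in X$ and the consistency of $X$.

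The main obstacle I anticipate is purely bookkeeping: getting the nested-implication indexing right so that each Cooperation application has the correct antecedent shape $\S_{D_i}\phi_i$ and the correct remaining conclusion $\S_{\bigcup_{j\le i}D_j}(\phi_{i+1}\to\cdots)$, and making sure the disjointness hypothesis is invoked in the form ``$D_i$ disjoint from $D_1\cup\cdots\cup D_{i-1}$'' rather than merely pairwise. This is best handled by an explicit induction on $n$ (or on the number of Cooperation steps performed), stating the induction hypothesis as: for every $k\le n$, $X\vdash\S_{D_1\cup\cdots\cup D_k}\bigl(\phi_{k+1}\to(\phi_{k+2}\to\cdots(\phi_n\to\psi)\cdots)\bigr)$, with the base case $k=0$ being the $\S$-necessitation step above (reading the empty union as $\varnothing$ and the degenerate nested implication as the full $\phi_1\to\cdots\to\psi$). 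There is nothing conceptually deep here beyond recognizing that Cooperation is the strategic analogue of Distributivity; the soundness of the step is guaranteed once the disjointness conditions are tracked carefully.
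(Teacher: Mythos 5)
Your proposal is correct and follows essentially the same route as the paper's own proof: necessitation of the nested implication via the admissible $\S$-rule, then $n$ applications of the Cooperation axiom with Modus Ponens to accumulate $X\vdash\S_{D_1\cup\dots\cup D_n}\psi$, and finally Lemma~\ref{subset lemma S} to reach $X\vdash\S_C\psi$ and contradict $\neg\S_C\psi\in X$. Your remark that each Cooperation step needs $D_i$ disjoint from the accumulated union (which follows from pairwise disjointness) is exactly the bookkeeping the paper performs implicitly.
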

\begin{proof}
Suppose that
$
\phi_1,\phi_2,\dots,\phi_n\vdash\psi.
$
Hence, by the deduction theorem for propositional logic applied $n$ times,
$$ 
\vdash\phi_1\to(\phi_2\to(\dots(\phi_n\to\psi)\dots)).
$$
Then, 
$
\vdash\S_{D_1}(\phi_1\to(\phi_2\to(\dots(\phi_n\to\psi)\dots)))
$
by Lemma~\ref{s necessitation}.
Hence, by Cooperation axiom and Modus Ponens inference rule,
$$
\vdash\S_{D_1}\phi_1\to\S_{\varnothing\cup D_1}(\phi_2\to(\dots(\phi_n\to\psi)\dots)).
$$
In other words,
$$
\vdash\S_{D_1}\phi_1\to\S_{D_1}(\phi_2\to(\dots(\phi_n\to\psi)\dots)).
$$
Then, by Modus Ponens inference rule,
$$
\S_{D_1}\phi_1\vdash\S_{D_1}(\phi_2\to(\dots(\phi_n\to\psi)\dots)).
$$
By Cooperation axiom and Modus Ponens inference rule, 
$$
\S_{D_1}\phi_1\vdash\S_{D_2}\phi_2\to\S_{D_1\cup D_2}(\dots(\phi_n\to\psi)\dots).
$$
Again, by Modus Ponens inference rule, 
$$
\S_{D_1}\phi_1,\S_{D_2}\phi_2 \vdash \S_{D_1\cup D_2}(\dots(\phi_n\to\psi)\dots).
$$
By repeating the previous steps $n-2$ times,
$$
\S_{D_1}\phi_1,\S_{D_2}\phi_2,\dots, \S_{D_n}\phi_n\vdash \S_{D_1\cup D_2\cup\dots\cup D_n}\psi.
$$

Recall that $\S_{D_1}\phi_1,\S_{D_2}\phi_2,\dots, \S_{D_n}\phi_n\in X$ by the assumption of the lemma. Thus,
$X\vdash \S_{D_1\cup D_2\cup\dots\cup D_n}\psi$. Therefore, $X\vdash \S_{C}\psi$ by Lemma~\ref{subset lemma S}. Since the set $X$ is consistent, the latter contradicts the assumption $\neg\S_C\psi\in X$ of the lemma.
\end{proof}

\subsection{Harmony}\label{harmony subsection}

If $\Box$ is the modality $\E_C$, then the standard technique needs even more significant modification. Namely, as it follows from Definition~\ref{sat}, assumption $\neg\E_C\psi\in w$ requires us to create not a single child of parent $w$, but two different children referred in Definition~\ref{sat} as states $w'$ and $w''$, see Figure~\ref{harmony figure}. Child $w'$ is a state of the system indistinguishable from state $w$ by coalition $C$. Child $w''$ is a state such that $\neg\psi\in w''$ and coalition $C$ cannot prevent the system to transition from $w'$ to $w''$.

\begin{figure}[ht]
\begin{center}
\vspace{-2mm}
\scalebox{.6}{\includegraphics{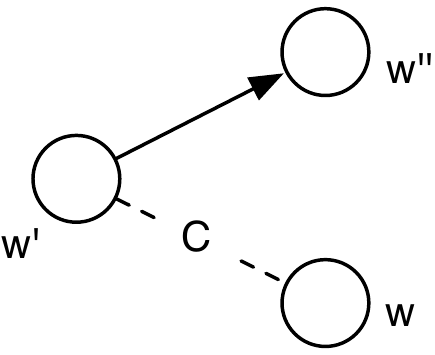}}
\vspace{0mm}
\caption{States $w'$ and $w'$ are maximal consistent sets of formulae in complete harmony.}\label{harmony figure}
\vspace{-2mm}
\end{center}
\vspace{-2mm}
\end{figure}

One might think that states $w'$ and $w''$ could be constructed in order: first state $w'$ and then state $w''$. It appears, however, that such an approach does not work because it does not guarantee that $\neg\psi\in w''$. To solve the issue, we construct states $w'$ and $w''$  simultaneously. While constructing states $w'$ and $w''$ as maximal consistent sets of formulae, it is important to maintain two relations between sets $w'$ and $w''$ that we call ``to be in harmony" and ``to be in complete harmony". In this section we define harmony relation and prove its basic properties. The next section is dedicated to the complete harmony relation.

Even though according to Definition~\ref{Phi} the language of our logical system only includes propositional connectives $\neg$ and $\to$, other connectives, including conjunction $\wedge$, can be defined in the standard way. By $\wedge Y$ we mean the conjunction of a finite set of formulae $Y$. If set $Y$ is a singleton, then $\wedge Y$ represents the single element of set $Y$. If set $Y$ is empty, then $\wedge Y$ is defined to be any propositional tautology.

\begin{definition}\label{harmony}
Pair $(X,Y)$ of sets of formulae is in harmony if
$X\nvdash \S_\varnothing\neg\wedge Y'$
for each finite set $Y'\subseteq Y$.
\end{definition}

\begin{lemma}\label{harmony con 1}
If pair $(X,Y)$ is in harmony, then set $X$ is consistent.
\end{lemma}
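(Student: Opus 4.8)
The plan is to prove the contrapositive: assuming $X$ is inconsistent, I will exhibit a finite subset $Y' \subseteq Y$ witnessing that $(X,Y)$ fails to be in harmony, i.e. with $X \vdash \S_\varnothing \neg \wedge Y'$.

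First I would recall that if $X$ is inconsistent, then $X \vdash \psi$ for every formula $\psi \in \Phi$ (inconsistency yields everything in propositional logic, hence in our system). In particular, $X \vdash \S_\varnothing \neg \wedge \varnothing$. Second, I would note that $\varnothing$ is a finite subset of $Y$, and that by the convention fixed just before Definition~\ref{harmony}, $\wedge \varnothing$ is a propositional tautology, so $\S_\varnothing \neg \wedge \varnothing$ is a legitimate formula of the shape appearing in the harmony condition. Therefore $X \vdash \S_\varnothing \neg \wedge Y'$ for the finite set $Y' = \varnothing \subseteq Y$, which directly contradicts Definition~\ref{harmony}. Hence if $(X,Y)$ is in harmony, $X$ must be consistent.

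There is essentially no obstacle here: the only thing to be careful about is that the empty set is an admissible choice of $Y'$ and that $\wedge \varnothing$ is defined as a tautology, so that the harmony condition does constrain $X$ even in the degenerate case. Once those conventions are in place, the argument is immediate from the fact that an inconsistent theory proves every formula.
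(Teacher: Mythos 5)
Your proposal is correct and follows essentially the same route as the paper's own proof: both argue by contraposition that an inconsistent $X$ derives every formula, in particular $\S_\varnothing\neg\wedge\varnothing$, and then take $Y'=\varnothing$ as the finite witness violating Definition~\ref{harmony}. Your added remarks about the conventions for $\wedge\varnothing$ are accurate but not a substantive departure.
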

\begin{proof}
If set $X$ is not consistent, then any formula can be derived from it. In particular, $X\vdash \S_\varnothing\neg\wedge \varnothing$. Therefore, pair $(X,Y)$ is not in harmony by Definition~\ref{harmony}.
\end{proof}

\begin{lemma}\label{harmony con 2}
If pair $(X,Y)$ is in harmony, then set $Y$ is consistent.
\end{lemma}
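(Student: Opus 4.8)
The plan is to mirror the proof of Lemma~\ref{harmony con 1}, but now exploiting the \emph{nontermination} axiom rather than the (near-trivial) observation about deriving everything from an inconsistent set. Suppose for contradiction that $Y$ is not consistent. Then there is a finite subset $Y'\subseteq Y$ with $Y'\vdash\bot$, so that $\vdash\neg\wedge Y'$ by the deduction theorem and propositional reasoning (more carefully: $\wedge Y'\vdash\bot$, hence $\vdash\wedge Y'\to\bot$, i.e. $\vdash\neg\wedge Y'$).

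Next I would push this through the $\S_\varnothing$ modality. By Lemma~\ref{s necessitation} (the admissible strategic necessitation rule), $\vdash\neg\wedge Y'$ gives $\vdash\S_\varnothing\neg\wedge Y'$. In particular $X\vdash\S_\varnothing\neg\wedge Y'$ for this finite $Y'\subseteq Y$, which directly contradicts the assumption that $(X,Y)$ is in harmony, by Definition~\ref{harmony}.

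I expect no real obstacle here; the only thing to get right is the direction of the implication and the choice of which necessitation rule to invoke — it must be $\S_\varnothing$-necessitation (available as the admissible rule of Lemma~\ref{s necessitation}), since the logical system has primitive necessitation rules only for $\K_C$ and $\E_C$. An alternative, essentially equivalent, route would be to derive $\vdash\E_\varnothing\neg\wedge Y'$ via Strategic Necessitation and then apply Strategic Truth to get $\vdash\S_\varnothing\neg\wedge Y'$; this is exactly the content of the proof of Lemma~\ref{s necessitation}, so either phrasing works. The edge case $Y'=\varnothing$ cannot arise here, since $\wedge\varnothing$ is a tautology and so $\varnothing$ is consistent; but even if one allowed it, the argument would still go through formally, so no separate case analysis is needed.
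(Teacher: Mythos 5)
Your proof is correct and follows essentially the same route as the paper: from inconsistency of $Y$ obtain a finite $Y'\subseteq Y$ with $\vdash\neg\wedge Y'$, apply the admissible $\S_\varnothing$-necessitation rule of Lemma~\ref{s necessitation} to get $X\vdash\S_\varnothing\neg\wedge Y'$, and contradict Definition~\ref{harmony}. The only blemish is your opening remark about ``exploiting the nontermination axiom'' --- the argument never uses it; the asymmetry with Lemma~\ref{harmony con 1} comes entirely from needing necessitation under $\S_\varnothing$ rather than from any axiom.
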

\begin{proof}
Suppose that $Y$ is inconsistent. Then, there is a finite set $Y'\subseteq Y$ such that $\vdash\neg\wedge Y'$. Hence, $\vdash\S_\varnothing\neg\wedge Y'$ by Lemma~\ref{s necessitation}. Thus, $X\vdash\S_\varnothing\neg\wedge Y'$. Therefore, by Definition~\ref{harmony}, pair $(X,Y)$ is not in harmony.
\end{proof}

\begin{lemma}\label{harmony step}
For any $\phi\in\Phi$, if pair $(X,Y)$ is in harmony, then either pair $(X\cup\{\neg\S_\varnothing\phi\},Y)$ or pair $(X,Y\cup\{\phi\})$ is in harmony.
\end{lemma}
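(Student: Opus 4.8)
The plan is to argue by contraposition: assume that neither $(X\cup\{\neg\S_\varnothing\phi\},Y)$ nor $(X,Y\cup\{\phi\})$ is in harmony, and derive that $(X,Y)$ itself is not in harmony. By Definition~\ref{harmony}, the failure of harmony for $(X\cup\{\neg\S_\varnothing\phi\},Y)$ gives a finite set $Y_1\subseteq Y$ with $X,\neg\S_\varnothing\phi\vdash\S_\varnothing\neg\wedge Y_1$, and the failure for $(X,Y\cup\{\phi\})$ gives a finite set $Y_2\subseteq Y\cup\{\phi\}$ with $X\vdash\S_\varnothing\neg\wedge Y_2$. Write $Y_2'=Y_2\setminus\{\phi\}\subseteq Y$, so that $\wedge Y_2$ is provably equivalent to $\phi\wedge\wedge Y_2'$ (or just $\wedge Y_2'$ if $\phi\notin Y_2$, in which case $X$ is already not in harmony and we are done). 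Set $Y'=Y_1\cup Y_2'\subseteq Y$; the goal is to show $X\vdash\S_\varnothing\neg\wedge Y'$.

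From the first derivation, by the deduction theorem, $X\vdash\neg\S_\varnothing\phi\to\S_\varnothing\neg\wedge Y_1$, i.e.\ $X\vdash\neg\S_\varnothing\neg\wedge Y_1\to\S_\varnothing\phi$; combining with monotonicity of $\S_\varnothing$ under propositional consequence (via Lemma~\ref{s necessitation}, Cooperation axiom, and $\wedge Y_1\to\wedge Y'$ hence $\neg\wedge Y'\to\neg\wedge Y_1$), I can first strengthen the conclusion of the second derivation to $X\vdash\S_\varnothing\neg\wedge Y'$ whenever $\phi\notin Y_2$, and otherwise proceed as follows. The key propositional fact is $\vdash\neg\wedge Y'\;\leftrightarrow\;\bigl((\phi\wedge\wedge Y_2')\to\neg\wedge Y_1\bigr)\wedge\text{(trivial rearrangement)}$—more precisely, since $\wedge Y'$ is $\wedge Y_1\wedge\wedge Y_2'$, we have $\vdash \neg\wedge Y' \leftrightarrow (\wedge Y_2' \to \neg\wedge Y_1)$. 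Then using Lemma~\ref{s necessitation} on the tautology $\phi\to(\wedge Y_2'\to(\phi\wedge\wedge Y_2'))$ and the Cooperation axiom with $C=D=\varnothing$, from $X\vdash\S_\varnothing\neg\wedge Y_2$ (equivalently $X\vdash\S_\varnothing\neg(\phi\wedge\wedge Y_2')$) and $X\vdash\S_\varnothing(\phi\to\psi)$-type steps, I can push everything through the single modality $\S_\varnothing$ to conclude $X\vdash\S_\varnothing\neg\wedge Y'$. This contradicts the harmony of $(X,Y)$.

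The main obstacle I anticipate is the bookkeeping in the last paragraph: the Cooperation axiom for $\S_\varnothing$ only lets me combine $\S_\varnothing(\phi\to\psi)$ with $\S_\varnothing\phi$ to get $\S_\varnothing\psi$ (taking $C=D=\varnothing$, so $C\cap D=\varnothing$ is satisfied), so I must carefully arrange the two pieces $\neg\S_\varnothing\phi\to\S_\varnothing\neg\wedge Y_1$ and $\S_\varnothing\neg(\phi\wedge\wedge Y_2')$ into that exact shape. The trick is a case split on whether $\S_\varnothing\phi$ holds: if $X\vdash\S_\varnothing\phi$, combine it via Cooperation with $\S_\varnothing(\phi\to(\wedge Y_2'\to\neg\wedge Y'))$—the latter from Lemma~\ref{s necessitation} applied to a tautology—and then with $\S_\varnothing\wedge Y_2'$... but here I must be careful since I only have $\S_\varnothing\neg(\phi\wedge \wedge Y_2')$, not $\S_\varnothing \wedge Y_2'$. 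So instead: if $X\vdash\S_\varnothing\phi$, use the tautology $\phi\to(\neg(\phi\wedge\wedge Y_2')\to\neg\wedge Y_2')$ hence $\phi\to(\neg(\phi\wedge\wedge Y_2')\to \neg\wedge Y')$ after weakening, apply $\S_\varnothing$-necessitation and Cooperation twice to get $X\vdash\S_\varnothing\neg\wedge Y'$; if $X\vdash\neg\S_\varnothing\phi$, then $X\vdash\S_\varnothing\neg\wedge Y_1$ by the first derivation, and weakening $\neg\wedge Y_1\to\neg\wedge Y'$ through $\S_\varnothing$ gives the same conclusion. Either way $(X,Y)$ fails harmony, completing the contrapositive.
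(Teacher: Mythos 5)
Your argument is, in substance, the paper's own proof: contraposition, extraction of finite sets $Y_1\subseteq Y$ and $Y_2\subseteq Y\cup\{\phi\}$ witnessing the failure of harmony of the two extended pairs, the target formula $\S_\varnothing\neg(\wedge Y_1\wedge\wedge(Y_2\setminus\{\phi\}))$, and the technique of pushing propositional tautologies through $\S_\varnothing$ via Lemma~\ref{s necessitation} together with the Cooperation axiom instantiated at $C=D=\varnothing$. The one step that does not survive scrutiny as literally written is your final ``case split on whether $\S_\varnothing\phi$ holds,'' phrased as a dichotomy between $X\vdash\S_\varnothing\phi$ and $X\vdash\neg\S_\varnothing\phi$. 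That dichotomy is not exhaustive: $X$ here is an arbitrary set in harmony, not a maximal consistent set (in Lemma~\ref{e child exists} the lemma is ultimately applied with first component $\{\chi\mid\K_C\chi\in hd(w)\}$, which is far from maximal), so $X$ may prove neither formula. The repair is immediate and is exactly what the paper does; you already hold both halves of it. Your two cases, read as derivations under a hypothesis rather than from its provability, give $X\vdash\neg\S_\varnothing\phi\to\S_\varnothing\neg\wedge Y'$ and $X\vdash\S_\varnothing\phi\to\S_\varnothing\neg\wedge Y'$, where $Y'=Y_1\cup(Y_2\setminus\{\phi\})\subseteq Y$; the propositional tautology $(\alpha\to\gamma)\to((\neg\alpha\to\gamma)\to\gamma)$ then yields $X\vdash\S_\varnothing\neg\wedge Y'$ outright, with no appeal to which of $\S_\varnothing\phi$, $\neg\S_\varnothing\phi$ the set $X$ actually decides. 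With that single rewording your proof is correct and coincides with the paper's.
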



\begin{proof}
Suppose that neither pair $(X\cup\{\neg\S_\varnothing\phi\},Y)$ nor pair $(X,Y\cup\{\phi\})$ is in harmony. Then, by Definition~\ref{harmony}, there are finite sets $Y_1\subseteq Y$ and $Y_2\subseteq Y\cup\{\phi\}$ such that
\begin{equation}\label{january eq}
X,\neg\S_\varnothing\phi\vdash \S_\varnothing\neg\wedge Y_1
\end{equation}
and 
\begin{equation}\label{february eq}
X\vdash \S_\varnothing\neg\wedge Y_2.
\end{equation}

Formula $\neg\wedge Y_1\to \neg((\wedge Y_1)\wedge(\wedge (Y_2\setminus\{\phi\})))$ is a propositional tautology. Thus, $\vdash\S_\varnothing(\neg\wedge Y_1\to \neg((\wedge Y_1)\wedge(\wedge (Y_2\setminus\{\phi\}))))$ by Lemma~\ref{s necessitation}. Then, by Cooperation axiom, statement~(\ref{january eq}), and Modus Ponens inference rule,
$
X,\neg\S_\varnothing\phi\vdash \S_{\varnothing\cup\varnothing}\neg((\wedge Y_1)\wedge(\wedge (Y_2\setminus\{\phi\})))
$. In other words, 
\begin{equation}\label{march eq}
X,\neg\S_\varnothing\phi\vdash \S_\varnothing\neg((\wedge Y_1)\wedge(\wedge (Y_2\setminus\{\phi\}))).
\end{equation}

Finally, formula $\neg\wedge Y_2\to (\phi\to \neg((\wedge Y_1)\wedge(\wedge (Y_2\setminus\{\phi\}))))$ is also a propositional tautology. Thus, by Lemma~\ref{s necessitation}, 
$$\vdash\S_\varnothing(\neg\wedge Y_2\to (\phi\to \neg((\wedge Y_1)\wedge(\wedge (Y_2\setminus\{\phi\}))))).$$ 
Then, by Cooperation axiom, statement~(\ref{february eq}), and Modus Ponens inference rule,
$X\vdash \S_\varnothing(\phi\to \neg((\wedge Y_1)\wedge(\wedge (Y_2\setminus\{\phi\}))))$. Thus, by Cooperation axiom and Modus Ponens inference rule, 
$$X\vdash \S_\varnothing\phi\to \S_\varnothing\neg((\wedge Y_1)\wedge(\wedge (Y_2\setminus\{\phi\}))).$$
By Modus Ponens inference rule,
$$
X, \S_\varnothing\phi\vdash  \S_\varnothing\neg((\wedge Y_1)\wedge(\wedge (Y_2\setminus\{\phi\}))).
$$
Hence, $X\vdash \S_\varnothing\neg((\wedge Y_1)\wedge(\wedge (Y_2\setminus\{\phi\})))$ by statement~(\ref{march eq}) and the laws of propositional reasoning. Recall that $Y_1$ and $Y_2\setminus\{\phi\}$ are subsets of $Y$. Therefore, pair $(X,Y)$ is not in harmony by Definition~\ref{harmony}. 
\end{proof}

The next lemma is an equivalent of Lemma~\ref{k consist} and Lemma~\ref{i neq j S} for modality $\E_C$.

\begin{lemma}\label{i neq j E}
For any consistent set of formulae $X$, any formula $\neg\E_C\psi\in X$, and any function $f:C\to \Phi$, pair $(Y,Z)$ is in harmony, where
\begin{eqnarray*}
&&\hspace{-1cm} Y=\{\phi\;|\;\K_C\phi\in X\}, \mbox{ and}\\
&&\hspace{-1cm} Z=\{\neg\psi\}\cup\{\chi\;|\; \exists D\subseteq C\;(\E_D\chi\in X \wedge \forall a\in D\;(f(a)=\chi))\}.
\end{eqnarray*}
\end{lemma}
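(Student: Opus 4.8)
The plan is to argue by contradiction. Suppose $(Y,Z)$ is not in harmony. Then, by Definition~\ref{harmony}, there is a finite set $Z'\subseteq Z$ with $Y\vdash\S_\varnothing\neg\wedge Z'$, and it suffices to derive that $X$ is inconsistent, contradicting the hypothesis of the lemma. Put $\gamma=\wedge(Z'\setminus\{\neg\psi\})$. Every formula $\chi\in Z'\setminus\{\neg\psi\}$ belongs to $Z$ through the second clause of the definition of $Z$, so for each such $\chi$ we may fix a set $D_\chi\subseteq C$ with $\E_{D_\chi}\chi\in X$ and $f(a)=\chi$ for every $a\in D_\chi$. The crucial observation is that the sets $\{D_\chi\}_{\chi\in Z'\setminus\{\neg\psi\}}$ are pairwise disjoint: if $a\in D_\chi\cap D_{\chi'}$ then $f(a)=\chi$ and $f(a)=\chi'$, so $\chi=\chi'$ because $f$ is a function. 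This is the only place where the assumption that $f$ is a function is used, and it is exactly what lets us apply the Epistemic Cooperation axiom below.

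First I would transfer the assumption from $Y$ to $X$. The formula $\neg\wedge Z'\to(\gamma\to\psi)$ is a propositional tautology: if $\neg\psi\in Z'$ its antecedent is (propositionally equivalent to) $\gamma\to\psi$, and if $\neg\psi\notin Z'$ its antecedent is equivalent to $\neg\gamma$, whence $\gamma\to\psi$. Hence, by Lemma~\ref{s necessitation} and the Cooperation axiom with both coalitions equal to $\varnothing$, we get $\vdash\S_\varnothing\neg\wedge Z'\to\S_\varnothing(\gamma\to\psi)$, so $Y\vdash\S_\varnothing(\gamma\to\psi)$. Since $Y=\{\phi\mid\K_C\phi\in X\}$, the standard reasoning for the distributed knowledge modality (the deduction theorem, Necessitation, and Distributivity, just as in the proof of Lemma~\ref{k consist}) upgrades this to $X\vdash\K_C\S_\varnothing(\gamma\to\psi)$.

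Next I would assemble $\E_C\gamma$ from the pieces $\E_{D_\chi}\chi\in X$. Because the coalitions $D_\chi$ are pairwise disjoint, a routine induction using Strategic Necessitation and the Epistemic Cooperation axiom fuses $\{\E_{D_\chi}\chi\}_{\chi\in Z'\setminus\{\neg\psi\}}$ into $\E_D\gamma$, where $D=\bigcup_\chi D_\chi\subseteq C$; when $Z'\setminus\{\neg\psi\}=\varnothing$ the formula $\gamma$ is a tautology and $X\vdash\E_C\gamma$ holds directly by Strategic Necessitation. By Lemma~\ref{subset lemma E} we then have $X\vdash\E_C\gamma$. From $X\vdash\E_C\gamma$, the propositional tautology $\gamma\to((\gamma\to\psi)\to\psi)$, Strategic Necessitation, and the Epistemic Cooperation axiom (combining the empty coalition with $C$), we obtain $X\vdash\E_C((\gamma\to\psi)\to\psi)$. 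The Epistemic Determinicity axiom instantiated with $\phi:=\gamma\to\psi$ is $\E_C((\gamma\to\psi)\to\psi)\to(\K_C\S_\varnothing(\gamma\to\psi)\to\E_C\psi)$; combined with the two facts just derived it gives $X\vdash\E_C\psi$. Since $\neg\E_C\psi\in X$, the set $X$ is inconsistent, a contradiction. Hence no such $Z'$ exists, and $(Y,Z)$ is in harmony.

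The only genuinely non-mechanical ingredient is the pairwise-disjointness observation together with the induction that fuses the formulas $\E_{D_\chi}\chi$ into a single know-how assertion about the conjunction $\gamma$; everything else is routine propositional and modal bookkeeping mirroring the proofs of Lemma~\ref{k consist}, Lemma~\ref{i neq j S}, and Lemma~\ref{subset lemma E}. I would also keep an eye on the degenerate cases $Z'=\varnothing$ and $Z'=\{\neg\psi\}$, but, as noted, the uniform argument above already handles them.
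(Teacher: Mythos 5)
Your proof is correct and follows essentially the same route as the paper's: argue by contradiction from Definition~\ref{harmony}, transfer $\S_\varnothing$-derivability over $Y$ to $\K_C\S_\varnothing$-derivability in $X$, use the fact that $f$ is a function to get pairwise disjointness of the coalitions $D_\chi$, fuse the formulas $\E_{D_\chi}\chi$ via Epistemic Cooperation, and close with Epistemic Determinicity against $\neg\E_C\psi\in X$. The only (cosmetic) difference is that you first assemble $\E_C\gamma$ and instantiate Epistemic Determinicity with $\phi:=\gamma\to\psi$, whereas the paper fuses the $\E_{D_i}\chi_i$ directly into $\E_C(\neg\wedge Z'\to\psi)$ and instantiates with $\phi:=\neg\wedge Z'$; both variants are sound.
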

\begin{proof}
Suppose that pair $(Y,Z)$ is not in harmony. Thus, by Definition~\ref{harmony}, there is a finite $Z'\subseteq Z$ such that 
$Y\vdash\S_\varnothing\neg\wedge Z'$. Since a derivation uses only finitely many assumptions, there are formulae $K_C\phi_1,\K_C\phi_2\dots,\K_C\phi_n\in X$ such that
$$
\phi_1,\phi_2\dots,\phi_n\vdash \S_\varnothing\neg\wedge Z'.
$$
Then, by the deduction theorem for propositional logic applied $n$ times,
$$
\vdash\phi_1\to(\phi_2\to(\dots\to(\phi_n\to \S_\varnothing\neg\wedge Z')\dots)).
$$
Hence, by Necessitation inference rule,
$$
\vdash\K_C(\phi_1\to(\phi_2\to(\dots\to(\phi_n\to \S_\varnothing\neg\wedge Z')\dots))).
$$
Then, by Distributivity axiom and Modus Ponens inference rule,
$$
\vdash\K_C\phi_1\to\K_C(\phi_2\to(\dots\to(\phi_n\to \S_\varnothing\neg\wedge Z')\dots)).
$$
Thus, by Modus Ponens inference rule,
$$
\K_C\phi_1\vdash\K_C(\phi_2\to(\dots\to(\phi_n\to \S_\varnothing\neg\wedge Z')\dots)).
$$
By repeating the previous two steps $(n-1)$ times,
$$
\K_C\phi_1,\K_C\phi_2\dots, \K_C\phi_n\vdash \K_C\S_\varnothing\neg\wedge Z'.
$$
Hence, by the choice of formulae $K_C\phi_1,\K_C\phi_2,\dots,\K_C\phi_n$,
\begin{equation}\label{may eq}
X\vdash \K_C\S_\varnothing\neg\wedge Z'.
\end{equation}
Since set $Z'$ is a subset of set $Z$, by the choice of set $Z$, there must exist formulae $\E_{D_1}\chi_1,\dots,\E_{D_n}\chi_n\in X$ such that $D_1,\dots,D_n\subseteq C$,
\begin{equation}\label{chi eq}
    \forall i\le n\;\forall a\in D_i\;(f(a)=\chi_i),
\end{equation}
and
the following formula is a tautology, even if $\neg\psi\notin Z'$:
\begin{equation}\label{choice of chi}
\chi_1\to(\chi_2\to\dots(\chi_n\to(\neg\psi\to\wedge Z'))\dots).
\end{equation}
Without loss of generality, we can assume that formulae $\chi_1,\dots,\chi_n$ are {\em pairwise distinct}. 

\begin{claim}\label{disjoint claim}
$D_i\cap D_j=\varnothing$ for each $i,j\le n$ such that $i\neq j$.
\end{claim}
\noindent{\sc Proof of Claim.}
Suppose the opposite. Then, there is $a\in D_i\cap D_j$. Thus, $\chi_i=f(a)=\chi_j$ by statement~(\ref{chi eq}). This contradicts the assumption that formulae $\chi_1,\dots,\chi_n$ are pairwise distinct. 
\qed

Since formula~(\ref{choice of chi}) is a propositional tautology, by the law of contrapositive, the following formula is also a propositional tautology:
$$
\chi_1\to(\chi_2\to\dots(\chi_n\to(\neg\wedge Z'\to\psi))\dots).
$$
Thus, by Strategic Necessitation inference rule,
$$
\vdash\E_\varnothing(\chi_1\to(\chi_2\to\dots(\chi_n\to(\neg\wedge Z'\to\psi))\dots)).
$$
Hence, by Epistemic Cooperation axiom and Modus Ponens inference rule,
$$
\vdash\E_{D_1}\chi_{1}\to\E_{\varnothing\cup D_{1}}(\chi_{2}\to\dots(\chi_{n}\to(\neg\wedge Z'\to\psi))\dots).
$$
Then, by Modus Ponens inference rule,
$$
\E_{D_1}\chi_{1}\vdash\E_{ D_{1}}(\chi_{2}\to\dots(\chi_{n}\to(\neg\wedge Z'\to\psi))\dots).
$$
By Epistemic Cooperation axiom, Claim~\ref{disjoint claim}, and Modus Ponens inference rule,
$$
\E_{D_1}\chi_1\vdash\E_{ D_2}\chi_2\to\E_{ D_1\cup D_2}(\dots(\chi_n\to(\neg\wedge Z'\to\psi))\dots).
$$
By Modus Ponens inference rule,
$$
\E_{D_1}\chi_1,\E_{ D_2}\chi_2\vdash\E_{ D_1\cup D_2}(\dots(\chi_n\to(\neg\wedge Z'\to\psi))\dots).
$$
By repeating the previous two steps $(n-2)$ times,
$$
\E_{D_1}\chi_1,\E_{ D_2}\chi_2,\dots,\E_{ D_n}\chi_n\vdash\E_{ D_1\cup D_2\cup\dots \cup D_n}(\neg\wedge Z'\to\psi).
$$
Recall that $\E_{D_1}\chi_1,\E_{D_2}\chi_2,\dots, \E_{D_n}\chi_n\in X$ by the choice of $\E_{D_1}\chi_1$, \dots, $\E_{D_n}\chi_n$. Thus,
$X\vdash \E_{D_1\cup D_2\cup\dots\cup D_n}(\neg\wedge Z'\to\psi)$. 
Hence, because $D_1,\dots,D_n\subseteq C$, by Lemma~\ref{subset lemma E}, $X\vdash \E_{C}(\neg\wedge Z'\to\psi)$. Then, $X\vdash \E_{C}\psi$ by Epistemic Determinicity axiom and statement~(\ref{may eq}). Since the set $X$ is consistent, this contradicts the assumption $\neg\E_C\psi\in X$ of the lemma.
\end{proof}

\subsection{Complete Harmony}\label{complete harmony subsection}

\begin{definition}\label{complete harmony definition}
A pair in harmony $(X,Y)$ is in {\em complete} harmony if for each $\phi\in\Phi$ either $\neg\S_\varnothing\phi\in X$ or $\phi\in Y$.
\end{definition}
\begin{lemma}\label{complete harmony lemma}
For each pair in harmony $(X,Y)$, there is a pair in complete harmony $(X',Y')$ such that $X\subseteq X'$ and $Y\subseteq Y'$.
\end{lemma}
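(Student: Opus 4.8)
The plan is to build $(X',Y')$ by a straightforward stagewise construction, at each stage deciding one formula of $\Phi$, and then to take unions. First I would invoke the standing assumption that the set of propositional variables is countable: together with Definition~\ref{Phi} this makes $\Phi$ countable, so fix an enumeration $\phi_1,\phi_2,\dots$ of all formulae of $\Phi$. Set $X_0=X$ and $Y_0=Y$; by hypothesis $(X_0,Y_0)$ is in harmony. Given a pair $(X_k,Y_k)$ in harmony, apply Lemma~\ref{harmony step} to the formula $\phi_{k+1}$: at least one of $(X_k\cup\{\neg\S_\varnothing\phi_{k+1}\},Y_k)$ and $(X_k,Y_k\cup\{\phi_{k+1}\})$ is in harmony, and we let $(X_{k+1},Y_{k+1})$ be such a pair. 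Finally put $X'=\bigcup_{k\ge 0}X_k$ and $Y'=\bigcup_{k\ge 0}Y_k$. By construction $X=X_0\subseteq X'$ and $Y=Y_0\subseteq Y'$, and for every $\phi\in\Phi$, writing $\phi=\phi_n$, stage $n$ put either $\neg\S_\varnothing\phi_n$ into $X_n\subseteq X'$ or $\phi_n$ into $Y_n\subseteq Y'$, so the completeness condition of Definition~\ref{complete harmony definition} holds once we know $(X',Y')$ is in harmony.

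The remaining step, and the one requiring care, is to verify that harmony is preserved in the limit. I would argue by contradiction: if $(X',Y')$ is not in harmony, then by Definition~\ref{harmony} there is a finite $Z\subseteq Y'$ with $X'\vdash\S_\varnothing\neg\wedge Z$. A derivation uses only finitely many premises from $X'$, say $\chi_1,\dots,\chi_m$; since the sequence $(X_k)_k$ is increasing, each $\chi_i$ lies in some $X_{k_i}$, and likewise each element of the finite set $Z$ lies in some $Y_{l_j}$. Taking $N$ to be the maximum of all these indices, we get $\chi_1,\dots,\chi_m\in X_N$ and $Z\subseteq Y_N$, hence $X_N\vdash\S_\varnothing\neg\wedge Z$, so $(X_N,Y_N)$ is not in harmony, contradicting the construction. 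Therefore $(X',Y')$ is in harmony, and combined with the previous paragraph it is in complete harmony.

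The only genuine obstacle is this finiteness/limit bookkeeping, and it is mild: the definition of harmony quantifies over \emph{finite} subsets of the second component and a proof uses only \emph{finitely} many assumptions, so nothing can go wrong at a limit stage. The countability of $\Phi$ is what guarantees the enumeration exists and that every formula is eventually treated; Lemma~\ref{harmony step} does all the real work at each individual stage. No further case analysis or computation is needed beyond what is sketched above.
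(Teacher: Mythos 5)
Your proof is correct and follows essentially the same route as the paper's: enumerate the countable set $\Phi$, extend the pair one formula at a time via Lemma~\ref{harmony step}, take unions, and verify that harmony survives the limit because derivations and the relevant subsets of the second component are finite. If anything, your limit argument is slightly more direct than the paper's (which takes a small detour through the tautology $\neg\wedge Y''\to\neg\wedge Y_n$), but the substance is identical.
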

\begin{proof}
Recall that the set of agent $\mathcal{A}$ is finite and the set of propositional variables is countable. Thus, the set of all formulae $\Phi$ is also countable.
Let $\phi_1,\phi_2,\dots$ be an enumeration of all formulae in $\Phi$. We define two chains of sets $X_1\subseteq X_2\subseteq \dots$ and $Y_1\subseteq Y_2\subseteq \dots$ such that pair $(X_n,Y_n)$ is in harmony for each $n\ge 1$. These two chains are defined recursively as follows: 
\begin{enumerate}
    \item $X_1=X$ and $Y_1=Y$,
    \item if pair $(X_n,Y_n)$ is in harmony, then, by Lemma~\ref{harmony step}, either pair $(X_n\cup\{\neg\S_\varnothing\phi_n\},Y_n)$ or pair $(X_n,Y_n\cup\{\phi_n\})$ is in harmony. Let $(X_{n+1},Y_{n+1})$ be $(X_n\cup\{\neg\S_\varnothing\phi_n\},Y_n)$ in the former case and $(X_n,Y_n\cup\{\phi_n\})$ in the latter case.
\end{enumerate}
Let $X'=\bigcup_nX_n$ and $Y'=\bigcup_n Y_n$. Note that $X=X_1\subseteq X'$ and $Y=Y_1\subseteq Y'$.

We next show that pair $(X',Y')$ is in harmony. Suppose the opposite. Then, by Definition~\ref{harmony}, there is a finite set $Y''\subseteq Y'$ such that $X'\vdash\S_\varnothing\neg\wedge Y''$. Since a deduction uses only finitely many assumptions, there must exist $n_1\ge 1$ such that 
\begin{equation}\label{april eq}
X_{n_1} \vdash\S_\varnothing\neg\wedge Y''.
\end{equation}
At the same time, since set $Y''$ is finite, there must exist $n_2\ge 1$ such that $Y''\subseteq Y_{n_2}$. Let $n=\max\{n_1,n_2\}$. Note that $\neg\wedge Y''\to\neg\wedge Y_n$ is a tautology because $Y''\subseteq Y_{n_2}\subseteq Y_n$. Thus,  $\vdash\S_\varnothing(\neg\wedge Y''\to\neg\wedge Y_n)$ by Lemma~\ref{s necessitation}. Then, $\vdash\S_\varnothing\neg\wedge Y''\to\S_\varnothing\neg\wedge Y_n$ by  Cooperation axiom and Modus Ponens inference rule. Hence, $X_{n_1}\vdash\S_\varnothing\neg\wedge Y_n$ due to statement~(\ref{april eq}). 
Thus,
$X_{n}\vdash\S_\varnothing\neg\wedge Y_n$, because $X_{n_1}\subseteq X_n$. Then, pair $(X_n,Y_n)$ is not in harmony, which contradicts the choice of pair $(X_n,Y_n)$. Therefore, pair $(X',Y')$ is in harmony.

We finally show that pair $(X',Y')$ is in complete harmony. Indeed, consider any $\phi\in \Phi$. Since $\phi_1,\phi_2,\dots$ is an enumeration of all formulae in $\Phi$, there must exist $k\ge 1$ such that $\phi=\phi_k$. Then, by the choice of pair $(X_{k+1},Y_{k+1})$, either $\neg\S_\varnothing\phi=\neg\S_\varnothing\phi_k\in X_{k+1}\subseteq X'$ or $\phi=\phi_k\in Y_{k+1}\subseteq Y'$. Therefore, pair $(X',Y')$ is in complete harmony.
\end{proof}

\subsection{Canonical Epistemic Transition System}\label{ets section}

The construction of a canonical model, called the {\em canonical epistemic transition system}, for the proof of the completeness is based on the ``unravelling" technique~\cite{s75slfm}. Informally, epistemic states in this system are nodes in a tree. In this tree, each node is labeled with a maximal consistent set of formulae and each edge is labeled with a coalition. Formally, epistemic states are defined as sequences representing paths in such a tree. In the rest of this section we fix a maximal consistent set of formulae $X_0$ and define a canonical epistemic transition system $ETS(X_0)=(W,\{\sim_a\}_{a\in \mathcal{A}},V,M,\pi)$.

\begin{definition}\label{canonical worlds}
The set of epistemic states $W$ consists of all finite sequences $X_0,C_1,X_1,C_2,\dots,C_n,X_n$, such that
\begin{enumerate}
    \item $n\ge 0$,
    \item $X_i$ is a maximal consistent subset of $\Phi$ for each $i\ge 1$,
    \item $C_i$ is a coalition for each $i\ge 1$,
    \item $\{\phi\;|\;\K_{C_i}\phi\in X_{i-1}\}\subseteq X_i$ for each $i\ge 1$.
\end{enumerate}
\end{definition}

We say that two nodes of the tree are indistinguishable to an agent $a$ if every edge along the unique path connecting these two nodes is labeled with a coalition containing agent $a$.

\begin{definition}\label{canonical sim}
For any state $w=X_0,C_1,X_1,C_2,\dots,C_n,X_n$ and any state $w'=X_0,C'_1,X'_1,C'_2,\dots,C'_m,X'_m$, let $w\sim_a w'$ if there is an integer $k$ such that
\begin{enumerate}
    \item $0\le k\le\min\{n,m\}$,
    \item $X_i=X'_i$ for each $i$ such that $1\le i\le k$,
    \item $C_i=C'_i$ for each $i$ such that $1\le i\le k$,
    \item $a\in C_i$ for each $i$ such that $k<i\le n$,
    \item $a\in C'_i$ for each $i$ such that $k<i\le m$.
\end{enumerate}
\end{definition}

For any state $w=X_0,C_1,X_1,C_2,\dots,C_n,X_n$, by $hd(w)$ we denote the set $X_n$. The abbreviation $hd$ stands for ``head".

\begin{lemma}\label{down lemma}
For any $w=X_0,C_1,X_1,C_2,\dots,C_n,X_n\in W$ and any integer $k\le n$, if $\K_C\phi\in X_n$ and $C\subseteq C_i$ for each integer $i$ such that $k<i\le n$, then $\K_C\phi\in X_k$.
\end{lemma}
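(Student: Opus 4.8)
The plan is to prove the statement by induction on the difference $n-k$. When $n-k=0$ the claim is immediate, since then $X_k=X_n$ and $\K_C\phi\in X_n$ by hypothesis.

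For the inductive step, assume $k<n$ and $\K_C\phi\in X_n$, with $C\subseteq C_i$ for all $i$ such that $k<i\le n$; in particular $C\subseteq C_n$. The first goal is to show $\K_C\phi\in X_{n-1}$. I would argue by contradiction: if $\K_C\phi\notin X_{n-1}$, then $\neg\K_C\phi\in X_{n-1}$ by the maximality of $X_{n-1}$, hence $\K_C\neg\K_C\phi\in X_{n-1}$ by the Negative Introspection axiom (together with Modus Ponens and the fact that a maximal consistent set contains all theorems and is closed under Modus Ponens). Since $C\subseteq C_n$, the Monotonicity axiom gives $\vdash\K_C\neg\K_C\phi\to\K_{C_n}\neg\K_C\phi$, so $\K_{C_n}\neg\K_C\phi\in X_{n-1}$. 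Now item~4 of Definition~\ref{canonical worlds}, applied to the edge labeled $C_n$ between $X_{n-1}$ and $X_n$, yields $\neg\K_C\phi\in X_n$, which contradicts $\K_C\phi\in X_n$ and the consistency of $X_n$. Hence $\K_C\phi\in X_{n-1}$.

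Once $\K_C\phi\in X_{n-1}$ is established, I would apply the induction hypothesis to the shorter state $X_0,C_1,X_1,\dots,C_{n-1},X_{n-1}$, which belongs to $W$ because every prefix of a sequence satisfying Definition~\ref{canonical worlds} again satisfies it. Since $k\le n-1$ and $C\subseteq C_i$ still holds for all $i$ with $k<i\le n-1$, the induction hypothesis gives $\K_C\phi\in X_k$, completing the argument.

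I do not expect a serious obstacle here: the only point requiring care is the direction of information flow in the unravelling tree --- item~4 of Definition~\ref{canonical worlds} only pushes $\K_{C_i}$-formulae from a parent node to its child --- which is why the step must be carried out contrapositively, converting $\K_C\phi\in X_n$ into a constraint on $X_{n-1}$ via Negative Introspection and Monotonicity rather than attempting to ``lift'' $\K_C\phi$ directly. One could alternatively iterate item~4 to pass from $X_n$ to $X_k$ in one sweep, but the one-step induction keeps the bookkeeping cleanest.
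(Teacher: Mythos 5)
Your proof is correct and is essentially the paper's own argument: the key step in both is identical — contrapositively converting $\K_C\phi\in X_{i}$ into a constraint on $X_{i-1}$ via Negative Introspection, Monotonicity (using $C\subseteq C_i$), and item~4 of Definition~\ref{canonical worlds}. The only cosmetic difference is that you organize the iteration as an induction on $n-k$, while the paper picks a maximal index $m$ with $\K_C\phi\notin X_m$ and derives the contradiction at the edge from $X_m$ to $X_{m+1}$.
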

\begin{proof}
Suppose that there is $k\le n$ such that $\K_C\phi\notin X_k$. Let $m$ be the maximal such $k$. Note that $m<n$ due to the assumption $\K_C\phi\in X_n$ of the lemma. Thus, $m< m+1\le n$.

Assumption $\K_C\phi\notin X_{m}$ implies $\neg\K_C\phi\in X_m$ due to the maximality of the set $X_{m}$. Hence, $X_{m}\vdash \K_C\neg\K_C\phi$ by Negative Introspection axiom. Thus, $X_{m}\vdash \K_{C_{m+1}}\neg\K_C\phi$ by Monotonicity axiom and the assumption $C\subseteq C_{m+1}$ of the lemma (recall that $m+1\le n$). 
Then, $\K_{C_{m+1}}\neg\K_C\phi\in X_{m}$ due to the maximality of the set $X_{m}$.  Hence, $\neg\K_C\phi\in X_{m+1}$ by Definition~\ref{canonical worlds}.
Thus, $\K_C\phi\notin X_{m+1}$ due to the consistency of the set $X_{m+1}$, which is a contradiction with the choice of integer $m$.
\end{proof}

\begin{lemma}\label{up lemma}
For any $w=X_0,C_1,X_1,C_2,\dots,C_n,X_n\in W$ and any integer $k\le n$, if $\K_C\phi\in X_{k}$ and $C\subseteq C_i$ for each integer $i$ such that $k<i\le n$, then $\phi\in X_n$.
\end{lemma}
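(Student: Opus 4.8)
The plan is to prove Lemma~\ref{up lemma} by a short induction that ``climbs" the path from $X_k$ up to $X_n$, mirroring the downward argument of Lemma~\ref{down lemma} but substituting positive introspection for negative introspection. Concretely, I would first establish the auxiliary claim that, under the hypotheses of the lemma, $\K_C\phi\in X_i$ for every $i$ with $k\le i\le n$, and then read off $\phi\in X_n$ from the Truth axiom together with the maximality of $X_n$.

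For the auxiliary claim I induct on $i$, starting at $i=k$, where $\K_C\phi\in X_k$ is precisely the hypothesis. For the inductive step, suppose $\K_C\phi\in X_i$ with $k\le i<n$. Since $k<i+1\le n$, the hypothesis of the lemma gives $C\subseteq C_{i+1}$. From $\K_C\phi\in X_i$ I obtain $X_i\vdash\K_C\K_C\phi$ using Lemma~\ref{positive introspection lemma} and Modus Ponens, and then $X_i\vdash\K_{C_{i+1}}\K_C\phi$ by the Monotonicity axiom and Modus Ponens, invoking $C\subseteq C_{i+1}$. Maximality of $X_i$ yields $\K_{C_{i+1}}\K_C\phi\in X_i$, and clause~4 of Definition~\ref{canonical worlds} then gives $\K_C\phi\in X_{i+1}$, which closes the induction.

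Taking $i=n$ in the claim gives $\K_C\phi\in X_n$; the Truth axiom $\K_C\phi\to\phi$ and Modus Ponens give $X_n\vdash\phi$, and consistency of $X_n$ forces $\phi\in X_n$, as desired. I do not expect a real obstacle: the only point requiring care is checking that the Monotonicity axiom is applicable at each step, which is exactly why we need $C\subseteq C_i$ for all $i$ strictly between $k$ and $n$ (each step only uses it for the single index $i+1$ in that range). The conceptual content is the use of the derived positive introspection principle of Lemma~\ref{positive introspection lemma}, which is what lets us lift $\K_C\phi$ one level along the path before applying the defining property of edges in Definition~\ref{canonical worlds}.
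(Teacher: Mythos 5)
Your proof is correct and follows essentially the same route as the paper's: an induction along the path from $X_k$ to $X_n$ that lifts $\K_C\phi$ one level at a time via Lemma~\ref{positive introspection lemma}, the Monotonicity axiom, and clause~4 of Definition~\ref{canonical worlds}, finishing with the Truth axiom (the paper phrases it as induction on the distance $n-k$ rather than via your auxiliary claim, but the content is identical). One small wording nit: the final step $X_n\vdash\phi\Rightarrow\phi\in X_n$ rests on the \emph{maximality} of $X_n$ (together with consistency), not on consistency alone.
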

\begin{proof}
We prove the lemma by induction on the distance between $n$ and $k$. In the base case $n=k$. Then the assumption $\K_C\phi\in X_{n}$ implies $X_n\vdash\phi$ by Truth axiom. Therefore, $\phi\in X_n$ due to the maximality of set $X_n$.

Suppose that $k<n$. Assumption $\K_C\phi\in X_{k}$ implies $X_k\vdash \K_C\K_C\phi$ by Lemma~\ref{positive introspection lemma}. Thus, $X_k\vdash \K_{C_{k+1}}\K_C\phi$ by Monotonicity axiom, the condition $k<n$ of the inductive step, and the assumption $C\subseteq C_{k+1}$ of the lemma. Then, $\K_{C_{k+1}}\K_C\phi\in X_k$ by the maximality of set $X_k$. 
Hence, $\K_C\phi\in X_{k+1}$ by Definition~\ref{canonical worlds}. Therefore, $\phi\in X_n$ by the induction hypothesis. 
\end{proof}

\begin{lemma}\label{k child all}
If $\K_C\phi\in hd(w)$ and $w\sim_Cw'$, then $\phi\in hd(w')$.
\end{lemma}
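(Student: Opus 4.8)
The plan is to reduce everything to the two ``propagation'' lemmas, Lemma~\ref{down lemma} and Lemma~\ref{up lemma}, which slide a formula of the shape $\K_C\phi$ down to an ancestor and then back up to a descendant, as long as every edge crossed is labelled by a coalition containing $C$. So first I would fix the notation $w=X_0,C_1,X_1,\dots,C_n,X_n$ and $w'=X_0,C'_1,X'_1,\dots,C'_m,X'_m$, and let $k^\ast$ be the largest integer with $0\le k^\ast\le\min\{n,m\}$ such that $X_i=X'_i$ and $C_i=C'_i$ for all $1\le i\le k^\ast$ (the length of the longest common prefix of the two branches). This is well defined because $k=0$ always qualifies, conditions~2 and~3 of Definition~\ref{canonical sim} being vacuous in that case.

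The key combinatorial step is the following. For each agent $a\in C$, the hypothesis $w\sim_C w'$ gives $w\sim_a w'$, hence by Definition~\ref{canonical sim} an integer $k_a$ satisfying its five clauses. Clauses~2 and~3 show that $k_a$ is one of the indices competing in the definition of $k^\ast$, so $k_a\le k^\ast$. Since then $(k^\ast,n]\subseteq(k_a,n]$, clause~4 gives $a\in C_i$ for all $k^\ast<i\le n$, and clause~5 likewise gives $a\in C'_i$ for all $k^\ast<i\le m$. As this holds for every $a\in C$, we obtain $C\subseteq C_i$ for $k^\ast<i\le n$ and $C\subseteq C'_i$ for $k^\ast<i\le m$.

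The rest is mechanical. From $\K_C\phi\in hd(w)=X_n$ together with $C\subseteq C_i$ for $k^\ast<i\le n$, Lemma~\ref{down lemma} applied with $k=k^\ast$ yields $\K_C\phi\in X_{k^\ast}$. By the choice of $k^\ast$ we have $X_{k^\ast}=X'_{k^\ast}$, so $\K_C\phi\in X'_{k^\ast}$. Then from $C\subseteq C'_i$ for $k^\ast<i\le m$, Lemma~\ref{up lemma} applied to the state $w'$ with $k=k^\ast$ gives $\phi\in X'_m=hd(w')$, which is exactly the claim.

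The one point that needs care — and really the only content of the argument — is that $\sim_C$ is defined agent by agent, so the witnessing index $k_a$ varies with $a$; the proof works precisely because the common prefix length $k^\ast$ dominates every $k_a$, which is what makes a single application each of Lemma~\ref{down lemma} and Lemma~\ref{up lemma} suffice. Everything else (well-definedness of $k^\ast$, the containments of index ranges, and $k^\ast\le n$, $k^\ast\le m$) is routine bookkeeping.
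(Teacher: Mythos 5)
Your proof is correct and follows exactly the route the paper intends: the paper's own proof is a one-line appeal to Lemma~\ref{down lemma}, Lemma~\ref{up lemma}, and Definition~\ref{canonical sim} via uniqueness of paths in a tree, and your argument is precisely the careful expansion of that appeal, with the common-prefix length $k^\ast$ dominating each agent's witness $k_a$. Nothing is missing.
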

\begin{proof}
The statement follows from Lemma~\ref{down lemma}, Lemma~\ref{up lemma}, and Definition~\ref{canonical sim} because there is a unique path between any two nodes in a tree.
\end{proof}

At the beginning of Section~\ref{consistent sets section}, we discussed that if a parent node contains a modal formula $\neg\Box\psi$, then it must have a child node containing formula $\neg\psi$. Lemma~\ref{k consist} in Section~\ref{consistent sets section} provides a foundation for constructing such a child node for modality $\K_C$. The proof of the next lemma describes the construction of the child node for this modality.

\begin{lemma}\label{k child exists}
If $\K_C\phi\notin hd(w)$, then there is an epistemic state $w'\in W$ such that $w\sim_C w'$ and $\phi\notin hd(w')$.
\end{lemma}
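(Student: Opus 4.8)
The plan is to produce $w'$ by extending the path $w$ with one additional edge, labeled by the coalition $C$, leading to a suitably chosen maximal consistent set. Since $\K_C\phi\notin hd(w)$ and $hd(w)$ is a maximal consistent set, we have $\neg\K_C\phi\in hd(w)$. Consider the set of formulae $\Gamma=\{\neg\phi\}\cup\{\chi\mid \K_C\chi\in hd(w)\}$. First I would show $\Gamma$ is consistent: consistency is a finitary property, so it suffices to check that every finite subset of $\Gamma$ is consistent, and any such subset is contained in $\{\neg\phi,\chi_1,\dots,\chi_k\}$ for some $\K_C\chi_1,\dots,\K_C\chi_k\in hd(w)$, which is consistent by Lemma~\ref{k consist} (with $X=hd(w)$ and $\psi=\phi$). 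By the standard Lindenbaum construction, extend $\Gamma$ to a maximal consistent set $Y\subseteq\Phi$.

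Next, writing $w=X_0,C_1,X_1,\dots,C_n,X_n$, define $w'$ to be the sequence $X_0,C_1,X_1,\dots,C_n,X_n,C,Y$. I would verify $w'\in W$ by checking the conditions of Definition~\ref{canonical worlds}: conditions~1--3 are immediate (the new set $Y$ is maximal consistent and $C$ is a coalition), and condition~4 at the last step requires exactly $\{\chi\mid \K_C\chi\in hd(w)\}\subseteq Y$, which holds because $\{\chi\mid \K_C\chi\in hd(w)\}\subseteq\Gamma\subseteq Y$.

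Then I would check $w\sim_C w'$. By Definition~\ref{sim set} it suffices to show $w\sim_a w'$ for every $a\in C$. In Definition~\ref{canonical sim} take $k=n$: the first $n$ sets and the first $n$ coalitions of $w$ and $w'$ coincide, the range $k<i\le n$ is empty, and the only remaining edge of $w'$ beyond position $n$ is the one labeled $C$, which contains $a$ since $a\in C$. Hence $w\sim_a w'$ for each $a\in C$, so $w\sim_C w'$. Finally, $\phi\notin hd(w')=Y$, since $\neg\phi\in Y$ and $Y$ is consistent.

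All steps here are routine; the only point that deserves a little care is the transition from the finitary Lemma~\ref{k consist} to the consistency of the possibly infinite set $\Gamma$, which relies on the standard fact that a set of formulae is consistent precisely when all of its finite subsets are consistent.
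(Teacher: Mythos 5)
Your proposal is correct and follows essentially the same route as the paper: build $Y_0=\{\neg\phi\}\cup\{\chi\mid\K_C\chi\in hd(w)\}$, justify its consistency via Lemma~\ref{k consist}, take a maximal consistent extension $Y$, and let $w'$ be the sequence $w,C,Y$. The only difference is that you spell out the reduction from the finitary Lemma~\ref{k consist} to the consistency of the whole (possibly infinite) set, which the paper leaves implicit.
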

\begin{proof}
Assumption $\K_C\phi\notin hd(w)$ implies that $\neg\K_C\phi\in hd(w)$ due to the maximality of the set $hd(w)$.
Thus, by Lemma~\ref{k consist}, set 
$Y_0=\{\neg\phi\}\cup\{\psi\;|\;\K_C\psi\in hd(w)\}$
is consistent. Let $Y$ be a maximal consistent extension of set $Y_0$ and $w'$ be sequence $w,C,Y$. In other words, sequence $w'$ is an extension of sequence $w$ by two additional elements: $C$ and $Y$. Note that $w'\in W$ due to Definition~\ref{canonical worlds} and the choice of set $Y_0$. Furthermore, $w\sim_C w'$ by Definition~\ref{canonical sim}. To finish the proof, we need to show that $\phi\notin hd(w')$. Indeed, $\neg\phi\in Y_0\subseteq Y=hd(w')$ by the choice of $Y_0$. Therefore, $\phi\notin hd(w')$ due to the consistency of the set $hd(w')$.
\end{proof}

In the next two definitions we specify the domain of votes and the vote aggregation mechanism of the canonical transition system. Informally, a vote $(\phi,w)$ of each agent consists of two components: the actual vote $\phi$ and a key $w$. The actual vote $\phi$ is a formula from $\Phi$  in support of what the agent votes. Recall that the agent does not know in which exact state the system is, she only knows the equivalence class of this state with respect to the indistinguishability relation. The key $w$ is the agent's guess of the epistemic state where the system is. Informally, agent's vote has more power to force the formula to be satisfied in the next state if she guesses the current state correctly.

Although each agent is free to vote for any formula she likes, the vote aggregation mechanism would grant agent's wish only under certain circumstances. Namely, if the system is in state $w$ and set $hd(w)$ contains formula $\S_C\phi$, then the mechanism guarantees that formula $\phi$ is satisfied in the next state as long as each member of coalition $C$ votes for formula $\phi$ and correctly guesses the current epistemic state. In other words, in order for formula $\phi$ to be guaranteed in the next state all members of the coalition $C$ must cast vote $(\phi,w)$. This means that if $\S_C\phi\in hd(w)$, then coalition $C$ has a strategy to force $\phi$ in the next state. Since the strategy requires each member of the coalition to guess correctly the current state, such a strategy is not a know-how strategy.  

The vote aggregation mechanism is more forgiving if the epistemic state $w$ contains formula $\E_C\phi$. In this case the mechanism guarantees that formula $\phi$ is satisfied in the next state if all members of the coalition vote for formula $\phi$; it does not matter if they guess the current state correctly or not. This means that if $\E_C\phi\in hd(w)$, then coalition $C$ has a know-how strategy to force $\phi$ in the next state. The strategy consists in each member of the coalition voting for formula $\phi$ and specifying an arbitrary epistemic state as the key. 

Formal definitions of the domain of choices and of the vote aggregation mechanism in the canonical epistemic transition system are given below.

\begin{definition}\label{canonical domain}
The domain of choices $V$ is $\Phi\times W$.
\end{definition}

For any pair $u=(x,y)$, let $pr_1(u)=x$ and $pr_2(u)=y$.

\begin{definition}\label{canonical M}
The mechanism $M$ of the canonical model is the set of all tuples $(w,\{s_a\}_{a\in \mathcal{A}},w')$ such that for each formula $\phi\in\Phi$ and each coalition $C$,
\begin{enumerate}
    \item  if $\S_C\phi\in hd(w)$ and $s_a=(\phi,w)$ for each $a\in C$, then $\phi\in hd(w')$, and
    \item  if $\E_C\phi\in hd(w)$ and $pr_1(s_a)=\phi$ for each $a\in C$, then $\phi\in hd(w')$.
\end{enumerate}
\end{definition}

The next two lemmas prove that the vote aggregation mechanism specified in Definition~\ref{canonical M} acts as discussed in the informal description given earlier.

\begin{lemma}\label{s child all}
Let $w,w'\in W$ be epistemic states, $\S_C\phi\in hd(w)$ be a formula, and $\mathbf{s}=\{s_a\}_{a\in C}$ be a strategy profile of coalition $C$. If $w\to_{\mathbf s} w'$ and $s_a=(\phi,w)$ for each $a\in C$, then $\phi\in hd(w')$.
\end{lemma}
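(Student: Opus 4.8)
The plan is to unwind the two relevant definitions and then read off the conclusion directly; this lemma is essentially a bookkeeping step connecting Definition~\ref{single arrow} with the first clause of Definition~\ref{canonical M}.

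First I would use the hypothesis $w\to_{\mathbf s}w'$. By Definition~\ref{single arrow}, this means there is a strategy profile $\mathbf{s'}=\{s'_a\}_{a\in\mathcal{A}}\in V^\mathcal{A}$ of the set of all agents such that $(w,\mathbf{s'},w')\in M$ and $s'_a=s_a$ for each $a\in C$. Combining the latter equality with the hypothesis $s_a=(\phi,w)$ for each $a\in C$, we get that $s'_a=(\phi,w)$ for every $a\in C$.

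Next I would invoke membership $(w,\mathbf{s'},w')\in M$ and Definition~\ref{canonical M}. That definition asserts that for \emph{each} formula and \emph{each} coalition certain implications hold; I would instantiate it with the particular formula $\phi$ and coalition $C$ of the lemma. Its first clause then states: if $\S_C\phi\in hd(w)$ and $s'_a=(\phi,w)$ for each $a\in C$, then $\phi\in hd(w')$. The antecedent of this implication holds — the first conjunct is the hypothesis $\S_C\phi\in hd(w)$ of the lemma, and the second conjunct is what was established in the previous step — so the conclusion $\phi\in hd(w')$ follows, which is exactly what we want.

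I do not anticipate any real obstacle here: the argument is a two-step chain of definition unfolding, with the only mild subtlety being that $w\to_{\mathbf s}w'$ only records agreement of the votes on $C$, so one must pass through the extended profile $\mathbf{s'}$ before the mechanism $M$ becomes applicable. No axioms or earlier lemmas beyond these two definitions are needed.
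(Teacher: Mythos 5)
Your proof is correct and follows exactly the paper's own argument: unfold Definition~\ref{single arrow} to obtain the extended profile $\mathbf{s'}$ agreeing with $\mathbf{s}$ on $C$, then apply the first clause of Definition~\ref{canonical M} to the triple $(w,\mathbf{s'},w')\in M$. No difference in substance.
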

\begin{proof}
Suppose that $w\to_{\mathbf s} w'$. Thus, by Definition~\ref{single arrow}, there is a strategy profile $\mathbf{s'}=\{s'_a\}_{a\in\mathcal{A}}\in V^\mathcal{A}$ such that $s'_a=s_a$ for each $a\in C$ and $(w,\mathbf{s'},w')\in M$.
Therefore, $\phi\in hd(w')$ by Definition~\ref{canonical M} and the assumption $s_a=(\phi,w)$ for each $a\in C$. 
\end{proof}

\begin{lemma}\label{ks child all}
Let $w,w',w''\in W$ be epistemic states, $\E_C\phi\in hd(w)$ be a formula, and $\mathbf{s}=\{s_a\}_{a\in C}$ be a strategy profile of coalition $C$. If $w\sim_C w'$,  $w'\to_{\mathbf s} w''$, and $pr_1(s_a)=\phi$ for each $a\in C$, then $\phi\in hd(w'')$.
\end{lemma}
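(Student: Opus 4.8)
The plan is to reduce the statement to clause~2 of Definition~\ref{canonical M}, exactly the way Lemma~\ref{s child all} reduces to clause~1, but with one preliminary step: since the transition in the hypothesis starts at $w'$ rather than at $w$, I first need to propagate the formula $\E_C\phi$ from $hd(w)$ to $hd(w')$.

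First I would invoke strategic positive introspection. By Lemma~\ref{strategic positive introspection lemma}, $\vdash\E_C\phi\to\K_C\E_C\phi$, so from $\E_C\phi\in hd(w)$ and the maximality (hence deductive closure) of the set $hd(w)$ we obtain $\K_C\E_C\phi\in hd(w)$. Since $w\sim_C w'$, Lemma~\ref{k child all} then yields $\E_C\phi\in hd(w')$.

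Next I would unfold the arrow relation. From $w'\to_{\mathbf s}w''$ and Definition~\ref{single arrow}, there is a strategy profile $\mathbf{s'}=\{s'_a\}_{a\in\mathcal{A}}\in V^\mathcal{A}$ with $s'_a=s_a$ for each $a\in C$ and $(w',\mathbf{s'},w'')\in M$. In particular $pr_1(s'_a)=pr_1(s_a)=\phi$ for each $a\in C$ by the hypothesis on $\mathbf s$. Applying clause~2 of Definition~\ref{canonical M} to the membership $\E_C\phi\in hd(w')$ and the tuple $(w',\mathbf{s'},w'')\in M$ gives $\phi\in hd(w'')$, as required.

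The only real subtlety, and the place where this argument genuinely differs from the proof of Lemma~\ref{s child all}, is that very first move: the hypothesis supplies $\E_C\phi$ at $w$, but the vote aggregation mechanism of Definition~\ref{canonical M} only ``fires'' at the source of a transition, which here is $w'$, not $w$. This is precisely why strategic positive introspection, together with the transfer of knowledge along $\sim_C$ provided by Lemma~\ref{k child all}, is needed; once $\E_C\phi\in hd(w')$ has been secured, the remainder is a direct appeal to the definition of $M$.
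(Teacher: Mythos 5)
Your proposal is correct and follows essentially the same route as the paper's own proof: strategic positive introspection (Lemma~\ref{strategic positive introspection lemma}) plus maximality to get $\K_C\E_C\phi\in hd(w)$, Lemma~\ref{k child all} to transfer $\E_C\phi$ to $hd(w')$, then unfolding Definition~\ref{single arrow} and applying clause~2 of Definition~\ref{canonical M}. No gaps.
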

\begin{proof} Suppose that $\E_C\phi\in hd(w)$. Thus, $hd(w)\vdash \K_C\E_C\phi$ by Lemma~\ref{strategic positive introspection lemma}. Hence, $\K_C\E_C\phi\in hd(w)$ due to the maximality of the set $hd(w)$. Thus, $\E_C\phi\in hd(w')$ by Lemma~\ref{k child all} and the assumption $w\sim_C w'$.
By Definition~\ref{single arrow},
assumption $w'\to_{\mathbf s} w''$  implies that there is a strategy profile $\mathbf{s'}=\{s'_a\}_{a\in\mathcal{A}}$ such that $s'_a=s_a$ for each $a\in C$ and $(w',\mathbf{s'},w'')\in M$. 
Since $\E_C\phi\in hd(w')$, $pr_1(s'_a)=pr_1(s_a)=\phi$ for each $a\in C$, and $(w',\mathbf{s'},w'')\in M$, we have $\phi\in hd(w'')$ by Definition~\ref{canonical M}. 
\end{proof}

The lemma below provides a construction of a child node for modality $\S_C$. Although the proof follows the outline of the proof of Lemma~\ref{k child exists} for modality $\K_C$, it is significantly more involved because of the need to show that a transition from a parent node to a child node satisfies the constraints of the vote aggregation mechanism from Definition~\ref{canonical M}.  

\begin{lemma}\label{s child exists}
For any epistemic state $w\in W$, any formula $\neg\S_C\psi\in hd(w)$, and any strategy profile $\mathbf{s}=\{s_a\}_{a\in C}\in V^C$, there is a state $w'\in W$ such that $w\to_\mathbf{s} w'$ and $\psi\notin hd(w')$.
\end{lemma}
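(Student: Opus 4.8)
The plan is to build $w'$ as the one-step extension of the sequence $w$ by the \emph{empty} coalition, $w'=w,\varnothing,Y$, where $Y$ is a maximal consistent set chosen below, and to witness $w\to_{\mathbf s}w'$ by extending $\mathbf s$ to a profile $\mathbf s'=\{s'_a\}_{a\in\mathcal A}\in V^{\mathcal A}$ in which every agent $a\in\mathcal A\setminus C$ casts a ``neutral'' vote $s'_a=(\tau,w)$, for a fixed propositional tautology $\tau$. The point of the neutral votes is that any clause of the aggregation mechanism (Definition~\ref{canonical M}) that mentions a coalition $D\not\subseteq C$ then forces at most the tautology $\tau$ into $hd(w')$, which is harmless. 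Labelling the new edge by $\varnothing$ keeps the membership requirement of Definition~\ref{canonical worlds} minimal, namely $\{\chi\;|\;\K_\varnothing\chi\in hd(w)\}\subseteq Y$; note in particular that no indistinguishability is created by a $\varnothing$-labelled edge, which is exactly what one wants for a strategic successor.

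Given $\mathbf s'$, the mechanism $M$ forces into $hd(w')$ precisely the formulas in $\Gamma_1\cup\Gamma_2$ together with the tautology $\tau$, where, writing $D_\phi=\{a\in C\;|\;s_a=(\phi,w)\}$ and $E_\phi=\{a\in C\;|\;pr_1(s_a)=\phi\}$,
$$\Gamma_1=\{\phi\;|\;\S_{D_\phi}\phi\in hd(w)\},\qquad \Gamma_2=\{\phi\;|\;\E_{E_\phi}\phi\in hd(w)\}$$
(clause~1 of Definition~\ref{canonical M} together with Lemma~\ref{subset lemma S}, and clause~2 together with Lemma~\ref{subset lemma E}, respectively). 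Accordingly I will take $Y$ to be any maximal consistent extension of
$$Y_0=\{\neg\psi\}\cup\Gamma_1\cup\Gamma_2\cup\{\chi\;|\;\K_\varnothing\chi\in hd(w)\}.$$

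The crux is the consistency of $Y_0$, which I reduce to Lemma~\ref{i neq j S}. The key combinatorial observation is that $D_\phi\subseteq E_\phi$ and that the sets $E_\phi$ are pairwise disjoint for distinct $\phi$, because a vote has a single first coordinate. Since consistency is finitary, take any finite $F\subseteq Y_0$; it involves $\neg\psi$ (possibly), finitely many pairwise distinct $\phi_1,\dots,\phi_j\in\Gamma_1\cup\Gamma_2$, and finitely many $\chi_1,\dots,\chi_l$ with $\K_\varnothing\chi_k\in hd(w)$. For each $\phi_i$, Strategic Truth turns $\E_{E_{\phi_i}}\phi_i\in hd(w)$ into $\S_{E_{\phi_i}}\phi_i\in hd(w)$, so in either case $\S_{F_i}\phi_i\in hd(w)$ for some coalition $F_i\subseteq E_{\phi_i}\subseteq C$; for each $\chi_k$, Empty Coalition followed by Strategic Truth turns $\K_\varnothing\chi_k$ into $\S_\varnothing\chi_k\in hd(w)$. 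The coalitions $F_1,\dots,F_j$ together with the copies of $\varnothing$ are pairwise disjoint subsets of $C$, and $\neg\S_C\psi\in hd(w)$ with $hd(w)$ consistent, so Lemma~\ref{i neq j S} yields that $F$ is consistent; hence $Y_0$ is consistent.

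It then remains to verify the bookkeeping: $w'=w,\varnothing,Y\in W$ by Definition~\ref{canonical worlds} (using $\{\chi\;|\;\K_\varnothing\chi\in hd(w)\}\subseteq Y_0\subseteq Y$); $(w,\mathbf s',w')\in M$ by checking the two clauses of Definition~\ref{canonical M} according to whether the coalition involved is a subset of $C$ (if so, the forced formula lies in $\Gamma_1$ resp.\ $\Gamma_2\subseteq Y$; if not, the neutral votes force at most $\tau\in Y$); hence $w\to_{\mathbf s}w'$ by Definition~\ref{single arrow}; and finally $\psi\notin hd(w')$ because $\neg\psi\in Y$ and $Y$ is consistent. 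The main obstacle is exactly the consistency of $Y_0$, and within it the point that $\S$-monotonicity lets the clause-1 coalitions $D_\phi$ be absorbed into the pairwise disjoint family $\{E_\phi\}_\phi$, so that a single application of Lemma~\ref{i neq j S} simultaneously dominates both clauses of the mechanism and the $\K_\varnothing$-obligations.
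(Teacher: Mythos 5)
Your proposal is correct and follows essentially the same route as the paper's proof: the child is the sequence $w,\varnothing,Y$ with $Y$ a maximal consistent extension of a set containing $\neg\psi$ and everything the mechanism can force given $\mathbf s$ padded by neutral votes $(\top,w)$, and consistency is reduced to Lemma~\ref{i neq j S} via the observation that the coalitions involved are pairwise disjoint because each agent's vote has a single first coordinate. The only differences are presentational (splitting the forced formulas into $\Gamma_1,\Gamma_2$ and listing the $\K_\varnothing$-obligations explicitly rather than absorbing them into the single defining condition, which the paper does via the vacuous $\forall a\in\varnothing$ clause).
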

\begin{proof}
Let $Y_0$ be the following set of formulae 
\begin{eqnarray*}
\{\neg\psi\}\cup\{\phi\;|\; \exists D\subseteq C(\S_D\phi\in hd(w) \wedge\forall a\in D (pr_1(s_a)=\phi))\}.
\end{eqnarray*}
We first show that set $Y_0$ is consistent. Suppose the opposite. Thus, there must exist formulae  $\phi_1,\dots,\phi_n\in Y_0$ and subsets $D_1,\dots,D_n\subseteq C$ such that (i) $\S_{D_i}\phi_i\in hd(w)$ for each integer $i\le n$, (ii) $pr_1(s_a)=\phi_i$ for each $i\le n$ and each $a\in D_i$, and (iii) set $\{\neg\psi,\phi_1,\dots,\phi_n\}$ is inconsistent. Without loss of generality we can assume that formulae  $\phi_1,\dots,\phi_n$ are {\em pairwise distinct}. 

\begin{claim}\label{claim 1}
Sets $D_i$ and $D_j$ are disjoint for each $i\neq j$.
\end{claim}
\noindent{\sc Proof of Claim.}
Assume that $d\in D_i\cap D_j$, then $pr_1(s_d)=\phi_i$ and $pr_1(s_d)=\phi_j$. Hence, $\phi_i=\phi_j$, which contradicts the assumption that formulae $\phi_1,\dots,\phi_n$ are pairwise distinct. Therefore, sets $D_i$ and $D_j$ are disjoint for each $i\neq j$. \qed

By Lemma~\ref{i neq j S}, it follows from Claim~\ref{claim 1} that set $Y_0$ is consistent. Let $Y$ be any maximal consistent extension of $Y_0$ and $w'$ be the sequence $w,\varnothing,Y$. In other words, $w'$ is an extension of sequence $w$ by two additional elements: $\varnothing$ and $Y$.

\begin{claim}
$w'\in W$.
\end{claim}
\noindent{\sc Proof of Claim.}
By Definition~\ref{canonical worlds}, it suffices to show that, for each formula $\phi\in \Phi$, if $\K_\varnothing\phi\in hd(w)$, then $\phi\in Y$. Indeed, suppose that $\K_\varnothing\phi\in hd(w)$. Thus, $hd(w)\vdash\E_\varnothing\phi$ by Empty Coalition axiom. Hence, $hd(w)\vdash\S_\varnothing\phi$ by Strategic Truth axiom. Then, $\S_\varnothing\phi\in hd(w)$ due to the maximality of set $hd(w)$. Therefore, $\phi\in Y_0\subseteq Y$ by the choice of sets $Y_0$ and $Y$.
\qed

Let $\top$ be any propositional tautology. For example, $\top$ could be formula $\psi\to\psi$. Define strategy profile $\mathbf{s'}=\{s'_a\}_{a\in \mathcal{A}}$ as follows
\begin{equation}\label{canonical vote}
s'_a=
\begin{cases}
s_a, & \mbox{ if } a\in C,\\
(\top,w), & \mbox{ otherwise}.
\end{cases}
\end{equation}

\begin{claim}\label{claim 2}  
For any formula $\phi\in\Phi$ and any $D\subseteq \mathcal{A}$, if  
$\S_D\phi\in hd(w)$ and $s'_a=(\phi,w)$ for each $a\in D$, then $\phi\in hd(w')$.
\end{claim}
\noindent{\sc Proof of Claim.}
 Consider any formula $\phi\in\Phi$ and any set $D\subseteq \mathcal{A}$ such that $\S_D\phi\in hd(w)$ and $s'_a=(\phi,w)$ for each agent $a\in D$. We need to show that $\phi\in hd(w')$.
 
 \noindent{\em Case 1:} $D\subseteq C$. In this case, $s_a=s'_a=(\phi,w)$ for each $a\in D$ by definition~(\ref{canonical vote}). Thus, $\phi\in Y_0\subseteq Y=hd(w')$ by the choice of set $Y_0$.

\noindent{\em Case 2:} There is $a_0\in D$ such that $a_0\notin C$. Then, $s'_{a_0}=(\top,w)$ by definition~(\ref{canonical vote}). Note that $s'_{a_0}=(\phi,w)$ by the choice of the set $D$. Thus, $(\top,w)=(\phi,w)$. Hence, formula $\phi$ is the tautology $\top$. Therefore, $\phi\in hd(w')$ because set $hd(w')$ is maximal. \qed

\begin{claim}\label{claim 3}  
For any formula $\phi\in\Phi$ and any $D\subseteq \mathcal{A}$, if       $\E_D\phi\in hd(w)$ and $pr_1(s'_a)=\phi$ for each $a\in D$, then $\phi\in hd(w')$.
\end{claim}

\noindent{\sc Proof of Claim.}
 Consider any formula $\phi\in\Phi$ and any set $D\subseteq \mathcal{A}$ such that $\E_D\phi\in hd(w)$ and $pr_1(s'_a)=\phi$ for each agent $a\in D$. We need to show that $\phi\in hd(w')$.
 
 \noindent{\em Case 1:} $D\subseteq C$. In this case, $pr_1(s_a)=pr_1(s'_a)=\phi$ for each agent $a\in D$ by definition~(\ref{canonical vote}) and the choice of set $D$. Thus, $\phi\in Y_0\subseteq Y=hd(w')$ by the choice of set $Y_0$.

\noindent{\em Case 2:} There is agent $a_0\in D$ such that $a_0\notin C$. Then, $s'_{a_0}=(\top,w)$ by definition~(\ref{canonical vote}). Note that $pr_1(s'_{a_0})=\phi$ by the choice of set $D$. Thus, $\top=\phi$. Hence, formula $\phi$ is the tautology $\top$. Therefore, $\phi\in hd(w')$ because set $hd(w')$ is maximal. \qed

 By Definition~\ref{canonical M}, Claim~\ref{claim 2} and Claim~\ref{claim 3} together imply that $(w,\mathbf{s'},w')\in M$. Hence, $w\to_{\mathbf s} w'$ by Definition~\ref{single arrow} and definition~(\ref{canonical vote}). To finish the proof of the lemma, note that $\psi\notin hd(w')$ because set $hd(w')$ is consistent and $\neg\psi\in Y_0\subseteq Y=hd(w')$. 
\end{proof}

 The next lemma shows the construction of a child node for modality $\E_C$. The proof is similar to the proof of Lemma~\ref{s child exists} except that, instead of constructing a single child node, we construct two sibling nodes that are in complete harmony. The intuition was discussed at the beginning of Section~\ref{harmony subsection}. 

\begin{lemma}\label{e child exists}
For any state $w\in W$, any formula $\neg\E_C\psi\in hd(w)$, and any strategy profile $\mathbf{s}=\{s_a\}_{a\in C}\in V^C$, there are epistemic states $w',w''\in W$ such that $\psi\notin hd(w'')$, $w\sim_C w'$, and $w'\to_\mathbf{s} w''$.
\end{lemma}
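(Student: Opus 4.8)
The plan is to construct $w'$ and $w''$ \emph{simultaneously}, as heads of maximal consistent sets obtained from a pair in complete harmony, in the spirit of Lemma~\ref{s child exists} but now producing two sibling nodes rather than one child. Concretely, I would let $f\colon C\to\Phi$ be given by $f(a)=pr_1(s_a)$, apply Lemma~\ref{i neq j E} with $X=hd(w)$ and this $f$ to obtain a pair in harmony $(Y,Z)$ with $Y=\{\phi\mid\K_C\phi\in hd(w)\}$ and $Z=\{\neg\psi\}\cup\{\chi\mid\exists D\subseteq C\,(\E_D\chi\in hd(w)\wedge\forall a\in D\,(f(a)=\chi))\}$, then use Lemma~\ref{complete harmony lemma} to extend it to a pair in complete harmony $(Y^*,Z^*)$, and finally take maximal consistent extensions $\hat Y\supseteq Y^*$ and $\hat Z\supseteq Z^*$. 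I would let $w'$ be the sequence $w,C,\hat Y$ and $w''$ the sequence $w',\varnothing,\hat Z$. Then $w\sim_C w'$ holds by Definition~\ref{canonical sim}, and $w'\in W$ since $\{\phi\mid\K_C\phi\in hd(w)\}=Y\subseteq\hat Y$; also $\neg\psi\in Z\subseteq\hat Z=hd(w'')$ gives $\psi\notin hd(w'')$. To see $w''\in W$ I would check $\{\phi\mid\K_\varnothing\phi\in\hat Y\}\subseteq\hat Z$: if $\K_\varnothing\phi\in\hat Y$ then $\S_\varnothing\phi\in\hat Y$ by the Empty Coalition and Strategic Truth axioms, and complete harmony forces $\phi\in Z^*\subseteq\hat Z$ (otherwise $\neg\S_\varnothing\phi\in Y^*\subseteq\hat Y$, contradicting consistency of $\hat Y$).

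The core is to exhibit a strategy profile $\mathbf{s'}\in V^\mathcal{A}$ extending $\mathbf s$ with $(w',\mathbf{s'},w'')\in M$, which yields $w'\to_{\mathbf s}w''$ by Definition~\ref{single arrow}. Following Lemma~\ref{s child exists}, I would put $s'_a=s_a$ for $a\in C$ and $s'_a=(\top,w')$ for $a\notin C$, and verify the two clauses of Definition~\ref{canonical M} for $(w',\mathbf{s'},w'')$, for every coalition $D$ and formula $\phi$. If $D\not\subseteq C$, some agent in $D$ votes $(\top,w')$, which forces the witness formula to be $\top$, so the conclusion $\top\in hd(w'')=\hat Z$ is trivial. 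If $D=\varnothing$, both clauses reduce to ``$\S_\varnothing\phi\in\hat Y$ implies $\phi\in\hat Z$'', which is again the complete-harmony argument used above (noting $\K_\varnothing\phi,\E_\varnothing\phi$ each entail $\S_\varnothing\phi$). The substantive case of clause~2 is $\varnothing\neq D\subseteq C$ with $\E_D\phi\in hd(w')$ and $pr_1(s_a)=\phi$ for all $a\in D$: here I would use strategic positive introspection (Lemma~\ref{strategic positive introspection lemma}) to get $\K_D\E_D\phi\in hd(w')$, then Lemma~\ref{k child all} with $w'\sim_D w$ (which follows from $w\sim_C w'$ and $D\subseteq C$ by Corollary~\ref{sim set corollary}) to conclude $\E_D\phi\in hd(w)$; since $f(a)=pr_1(s_a)=\phi$ for all $a\in D$, the definition of $Z$ then gives $\phi\in Z\subseteq\hat Z$, as required.

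The one genuinely delicate case is clause~1 of Definition~\ref{canonical M} for $\varnothing\neq D\subseteq C$, i.e.\ $\S_D\phi\in hd(w')$ and $s_a=(\phi,w')$ for all $a\in D$; note that $\S_D$ is not introspective, so the trick used for clause~2 fails. This is exactly where the ``key'' second component of the votes earns its keep: the clause can fire only if $pr_2(s_a)=w'$ for every $a\in D$, and each $pr_2(s_a)$ is a \emph{fixed} element of $W$. Since $\mathcal A$ is finite and $w'=(w,C,\hat Y)$, there are only finitely many maximal consistent sets $M$ for which $(w,C,M)$ could equal some $pr_2(s_a)$, so I would strengthen the earlier step and choose the maximal consistent extension $\hat Y\supseteq Y^*$ to differ from all of them; with this choice every nonempty $D\subseteq C$ contains an agent $a$ with $pr_2(s_a)\neq w'$, so clause~1 is vacuously satisfied for all such $D$. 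Making this ``fresh'' choice available and precise is the main obstacle of the proof; the remaining verifications are routine maximal-consistent-set bookkeeping. Assembling the cases gives $(w',\mathbf{s'},w'')\in M$, hence $w'\to_{\mathbf s}w''$, together with $w\sim_C w'$ and $\psi\notin hd(w'')$, completing the argument.
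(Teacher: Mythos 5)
Your overall architecture matches the paper's: the same sets $Y$ and $Z$, the same invocation of Lemma~\ref{i neq j E} with $f(a)=pr_1(s_a)$ followed by Lemma~\ref{complete harmony lemma}, and the same case analysis over Definition~\ref{canonical M}, with the $D=\varnothing$ cases discharged by complete harmony and the $D\not\subseteq C$ cases by the $(\top,w')$ vote. (Your explicit derivation of $\E_D\phi\in hd(w)$ from $\E_D\phi\in hd(w')$ via Lemma~\ref{strategic positive introspection lemma} and Lemma~\ref{k child all} is correct, and in fact spells out a step the paper's Claim~\ref{summer} leaves implicit.) The gap is exactly where you flag ``the main obstacle'': you assert that the maximal consistent extension $\hat Y\supseteq Y^*$ can be chosen to differ from the finitely many maximal consistent sets $M$ with $(w,C,M)=pr_2(s_a)$ for some $a\in C$. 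A consistent set need not admit more than one maximal consistent extension. Nothing in the axioms prevents $hd(w)$ from containing $\K_C\phi$ or $\K_C\neg\phi$ for every $\phi$ (by soundness, a model in which $\sim_a$ is the identity for all $a\in C$ yields such maximal consistent sets, and this is compatible with $\neg\E_C\psi\in hd(w)$ when the transition is nondeterministic); then $Y=\{\phi\mid\K_C\phi\in hd(w)\}$ already decides every formula, $\hat Y$ is forced, and an adversarial profile $\mathbf s$ can set $pr_2(s_a)=(w,C,\hat Y)$ for some $a\in C$. In that situation clause~1 of Definition~\ref{canonical M} fires for a nonempty $D\subseteq C$ with some $\S_D\phi\in\hat Y$, and neither the definition of $Z$ nor complete harmony (which only governs $\S_\varnothing$) gives you $\phi\in\hat Z$, so $(w',\mathbf{s'},w'')\in M$ can fail.

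The paper escapes this by varying the \emph{sequence} rather than the maximal consistent set. It forms $n+1$ states $w_1,\dots,w_{n+1}$, where $n=|C|$ and $w_k$ extends $w$ by $2k$ elements alternating $C$ and $Y''$; each $w_k$ lies in $W$ (the closure condition reduces to Truth axiom for the repeated steps), each has head $Y''$, and each satisfies $w\sim_C w_k$ because every added edge is labeled $C$. These are pairwise distinct elements of $W$, so by the pigeonhole principle some $w'=w_{i_0}$ equals no $pr_2(s_a)$ with $a\in C$, which makes clause~1 vacuous for nonempty $D\subseteq C$ exactly as you intend. Replacing your ``fresh extension'' step with this pigeonhole construction repairs the argument; the rest of your verification then goes through.
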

\begin{proof}
By Definition~\ref{canonical domain}, for each $a\in C$, vote $s_a$ is a pair. Let 
\begin{eqnarray*}
&&\hspace{-7mm}Y = \{\phi\;|\; \K_C\phi\in hd(w)\}, \mbox{ and}\\
&&\hspace{-7mm}Z = \{\neg\psi\} \cup \{\phi\;|\;\exists D\subseteq C\;(\E_D\phi\in hd(w) \wedge \forall a\in D\;(pr_1(s_a)=\phi))\}.
\end{eqnarray*}
By Lemma~\ref{i neq j E} where $f(x)=pr_1(s_x)$, pair $(Y,Z)$ is in harmony. By Lemma~\ref{complete harmony lemma}, there is a pair $(Y',Z')$ in complete harmony such that $Y\subseteq Y'$ and $Z\subseteq Z'$. By Lemma~\ref{harmony con 1} and Lemma~\ref{harmony con 2}, sets $Y'$ and $Z'$ are consistent. Let $Y''$ and $Z''$ be maximal consistent extensions of sets $Y'$ and $Z'$, respectively.

Recall that set $\mathcal{A}$ is finite. Thus, set $C\subseteq\mathcal{A}$ is also finite.
Let integer $n$ be the cardinality of set $C$. Consider $(n+1)$ sequences $w_1,w_2,\dots,w_{n+1}$, where sequence $w_k$ is an extension of sequence $w$ that adds $2k$ additional elements:
\begin{eqnarray*}
&&w_1=w,C,Y''\\
&&w_2=w,C,Y'',C,Y''\\
&&w_3=w,C,Y'',C,Y'',C,Y''\\
&&\dots\\
&&w_{n+1}=w,\underbrace{C,Y'',\dots,C,Y''}_{2(n+1)\mbox{ elements}}.
\end{eqnarray*} 
\begin{claim}\label{alpha}
$w_k\in W$ for each $k\le n+1$.
\end{claim}
\noindent{\sc Proof of Claim.} We prove the claim by induction on integer $k$. 

\noindent{\em Base Case:} By Definition~\ref{canonical worlds}, it suffices to show that if $\K_C\phi\in hd(w)$, then $\phi\in hd(w_1)$. Indeed, if $\K_C\phi\in hd(w)$, then $\phi\in Y$ by the choice of set $Y$. Therefore, $\phi\in Y\subseteq Y'\subseteq Y''=hd(w_1)$.

\noindent{\em Induction Step:} By Definition~\ref{canonical worlds}, it suffices to show that if $\K_C\phi\in hd(w_k)$, then $\phi\in hd(w_{k+1})$ for each $k\ge 1$. In other words, we need to prove that if $\K_C\phi\in Y''$, then $\phi\in Y''$, which follows from Truth axiom and the maximality of set $Y''$.
\qed

By the pigeonhole principle, there is $i_0\le n$ such that $pr_2(s_a)\neq w_{i_0}$ for all $a\in C$. Let $w'$ be epistemic state $w_{i_0}$. Thus,
\begin{equation}\label{choice of w'}
pr_2(s_a)\neq w' \mbox{ for each $a\in C$}.
\end{equation}

Let $w''$ be the sequence $w,\varnothing, Z''$. In other words, sequence $w''$ is an extension of sequence $w$ by two additional elements: $\varnothing$ and $Z''$.  Finally, let strategy profile $\mathbf{s'}=\{s'_a\}_{a\in \mathcal{A}}$ be defined as follows
\begin{equation}\label{canonical vote 2}
s'_a=
\begin{cases}
s_a, & \mbox{ if } a\in C,\\
(\top,w'), & \mbox{ otherwise}.
\end{cases}
\end{equation}

\begin{claim}\label{beta}
$w''\in W$.
\end{claim}
\noindent{\sc Proof of Claim.}
By Definition~\ref{canonical worlds}, it suffices to show that if $\K_\varnothing\phi\in hd(w)$, then $\phi\in hd(w'')$ for each formula $\phi\in\Phi$. Indeed, by Empty Coalition axiom, assumption $\K_\varnothing\phi\in hd(w)$ implies that $hd(w)\vdash \E_\varnothing\phi$. Hence, $\E_\varnothing\phi\in hd(w)$ by the maximality of the set $hd(w)$. Thus, $\phi\in Z$ by the choice of set $Z$. Therefore, $\phi\in Z\subseteq Z'\subseteq Z''=hd(w'')$.
\qed

\begin{claim}\label{gamma}
$w\sim_C w'$.
\end{claim}
\noindent{\sc Proof of Claim.}
By Definition~\ref{canonical sim}, $w\sim_C w_i$ for each integer $i\le n+1$. In particular, $w\sim_C w_{i_0}=w'$.
\qed

\begin{claim}\label{delta}
$\psi\notin hd(w'')$.
\end{claim}
\noindent{\sc Proof of Claim.}
Note that $\neg\psi\in Z$ by the choice of set $Z$. Thus, $\neg\psi\in Z\subseteq Z'\subseteq Z'' = hd(w'')$. Therefore, $\psi\notin hd(w'')$ due to the consistency of the set $hd(w'')$.
\qed

\begin{claim}\label{winter}
Let $\phi$ be a formula in $\Phi$ and $D$ be a subset of $\mathcal{A}$. If $\S_D\phi\in hd(w')$ and $s'_a=(\phi,w')$ for each $a\in D$, then $\phi\in hd(w'')$.
\end{claim}

\noindent{\sc Proof of Claim.} Note that either set $D$ is empty or it contains an element $a_0$. In the latter case, element $a_0$ either belongs or does not belong to set $C$.

\noindent{\em Case I:} $D=\varnothing$. Recall that pair $(Y',Z')$ is in complete harmony. Thus, by Definition~\ref{complete harmony definition}, either $\neg\S_\varnothing\phi\in Y'\subseteq Y''=hd(w')$ or $\phi\in Z'\subseteq Z''=hd(w'')$. Assumption $\S_D\phi\in hd(w')$ implies that $\neg\S_\varnothing\phi\notin hd(w')$ due to the consistency of the set $hd(w')$ and the assumption $D=\varnothing$ of the case. Therefore, $\phi\in hd(w'')$.

\noindent{\em Case II:} there is an element $a_0\in C\cap D$. Thus, $a_0\in C$. Hence, $pr_2(s_{a_0})\neq w'$ by inequality (\ref{choice of w'}). Then, $s_{a_0}\neq (\phi,w')$. Thus, $s'_{a_0}\neq (\phi,w')$ by definition~(\ref{canonical vote 2}). Recall that $a_0\in C\cap D\subseteq D$.
This contradicts the assumption that $s'_a=(\phi,w')$ for each $a\in D$.

\noindent{\em Case III:} there is an element $a_0\in D\setminus C$. Thus, $s'_{a_0}=(\top,w')$ by definition~(\ref{canonical vote 2}). At the same time, $s'_{a_0}=(\phi,w')$ by the second assumption of the claim. Hence, formula $\phi$ is the propositional tautology $\top$. Therefore, $\phi\in hd(w'')$ due to the maximality of the set $hd(w'')$.
\qed

\begin{claim}\label{summer}
Let $\phi$ be a formula in $\Phi$ and $D$ be a subset of $\mathcal{A}$.
If $\E_D\phi\in hd(w')$ and $pr_1(s'_a)=\phi$ for each $a\in D$, then $\phi\in hd(w'')$.
\end{claim}
\noindent{\sc Proof of Claim.}

\noindent{\em Case I:} $D\subseteq C$. Suppose that $pr_1(s'_a)=\phi$ for each $a\in D$ and $\E_D\phi\in hd(w')$. Thus, $\phi\in Z$ by the choice of set $Z$. Therefore, $\phi\in Z\subseteq Z'\subseteq Z''=hd(w'')$. 

\noindent{\em Case II:} $D\nsubseteq C$. Consider any $a_0\in D\setminus C$. Note that $s'_{a_0}=(\top,w')$ by definition~(\ref{canonical vote 2}). At the same time, $pr_1(s'_{a_a})=\phi$ by the second assumption of the claim. Hence, formula $\phi$ is the propositional tautology $\top$. Therefore, $\phi\in hd(w'')$ due to the maximality of the set $hd(w'')$.
\qed

Claim~\ref{winter} and Claim~\ref{summer}, by Definition~\ref{canonical M}, imply that $(w',\{s'_a\}_{a\in \mathcal{A}},w'')\in M$. Thus, $w'\to_\mathbf{s} w''$ by Definition~\ref{single arrow} and definition~(\ref{canonical vote 2}). 
This together with Claim~\ref{alpha}, Claim~\ref{beta}, Claim~\ref{gamma}, and Claim~\ref{delta} completes the proof of the lemma.
\end{proof}

\begin{definition}\label{canonical pi}
$\pi(p)=\{w\in W\;|\; p\in hd(w)\}$.
\end{definition}

This concludes the definition of tuple  $(W,\{\sim_a\}_{a\in \mathcal{A}},V,M,\pi)$.

\begin{lemma}
Tuple $(W,\{\sim_a\}_{a\in \mathcal{A}},V,M,\pi)$ is an epistemic transition system.
\end{lemma}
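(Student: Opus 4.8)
The plan is to check the five conditions of Definition~\ref{transition system} one at a time. Conditions~1 and~5 are immediate from Definition~\ref{canonical worlds} and Definition~\ref{canonical pi}. Condition~3 holds because $V=\Phi\times W$ by Definition~\ref{canonical domain}, and this set is nonempty: $\Phi$ contains every propositional variable and $W$ contains at least the one-element sequence $X_0$. The inclusion $M\subseteq W\times V^\mathcal{A}\times W$ demanded by condition~4 is part of Definition~\ref{canonical M}. What remains is to show (i) that each $\sim_a$ is an equivalence relation (condition~2), and (ii) that $M$ is serial, that is, for every $w\in W$ and every $\mathbf{s}\in V^\mathcal{A}$ there exists $w'\in W$ with $(w,\mathbf{s},w')\in M$ (the remaining half of condition~4).

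For (i), I would verify reflexivity, symmetry and transitivity of the relation of Definition~\ref{canonical sim}. Reflexivity of $w\sim_a w$ follows by choosing the witness $k$ to be the length of the sequence $w$, which makes items~4 and~5 of Definition~\ref{canonical sim} vacuous. Symmetry is clear because items~1--5 of that definition are symmetric under exchanging $w$ with $w'$. For transitivity, if $w\sim_a w'$ holds with witness $k$ and $w'\sim_a w''$ with witness $l$, I would take $m=\min\{k,l\}$ as a witness for $w\sim_a w''$: the three sequences agree on their first $m$ coalitions and states through the two given agreements, while for every index $i$ exceeding $m$ but not exceeding the length of $w$ one obtains $a\in C_i$, and likewise with $w''$ in place of $w$, by splicing --- using the equalities $C_i=C'_i$ for $i\le k$ and $C'_i=C''_i$ for $i\le l$ together with the membership coming from whichever of the two hypotheses applies to the relevant range. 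This is routine case-checking; informally it is just the statement that, in the unravelling tree, joining two paths each of whose edges is labelled by a coalition containing $a$ yields another such path.

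The heart of the lemma is the seriality claim~(ii). Fix $w\in W$ and $\mathbf{s}=\{s_a\}_{a\in\mathcal{A}}\in V^\mathcal{A}$. Recall that $hd(w)$ is a maximal consistent set. By the Nontermination axiom $\vdash\neg\S_{\mathcal{A}}\bot$, so $\neg\S_{\mathcal{A}}\bot\in hd(w)$. Applying Lemma~\ref{s child exists} with coalition $C=\mathcal{A}$, formula $\psi=\bot$, and the strategy profile $\mathbf{s}\in V^{\mathcal{A}}=V^{C}$ yields a state $w'\in W$ with $w\to_{\mathbf{s}}w'$ (and, as a bonus, $\bot\notin hd(w')$). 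Since $C=\mathcal{A}$, unwinding $w\to_{\mathbf{s}}w'$ through Definition~\ref{single arrow} gives a profile $\mathbf{s}'\in V^{\mathcal{A}}$ with $s'_a=s_a$ for every $a\in\mathcal{A}$ and $(w,\mathbf{s}',w')\in M$; but $s'_a=s_a$ for all $a$ forces $\mathbf{s}'=\mathbf{s}$, so $(w,\mathbf{s},w')\in M$, which is exactly what seriality requires.

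The only conceptual point, and hence the main thing to get right, is the observation that seriality of $M$ should not be attacked by hand from Definition~\ref{canonical M} but is precisely Lemma~\ref{s child exists} instantiated at the full coalition $\mathcal{A}$ and the formula $\bot$; the rest is the (mildly tedious) equivalence-relation bookkeeping for $\sim_a$, whose transitivity is the easiest place to slip.
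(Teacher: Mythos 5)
Your proof is correct and its core argument --- deriving $\neg\S_{\mathcal{A}}\bot\in hd(w)$ from the Nontermination axiom and then invoking Lemma~\ref{s child exists} at the full coalition $\mathcal{A}$ to obtain seriality of $M$ --- is exactly the paper's proof. The paper treats the remaining conditions (including that each $\sim_a$ is an equivalence relation) as immediate and omits them, so your additional bookkeeping for Definition~\ref{canonical sim} is just extra thoroughness, not a different route.
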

\begin{proof}
By Definition~\ref{transition system}, it suffices to show that for each $w\in W$ and each $\mathbf{s}\in V^\mathcal{A}$ there is $w'\in W$ such that $(w,\mathbf{s},w')\in M$. 

Recall that set $\mathcal{A}$ is finite. Thus, $\vdash\neg\S_\mathcal{A}\bot$ by Nontermination axiom. Hence, $\neg\S_\mathcal{A}\bot\in hd(w)$. By Lemma~\ref{s child exists}, there is $w'\in W$ such that $w\to_{\mathbf{s}}w'$. Therefore, $(w,\mathbf{s},w')\in M$ by Definition~\ref{single arrow}.  
\end{proof}

\begin{lemma}\label{main induction lemma}
$w\Vdash\phi$ iff $\phi\in hd(w)$ for each epistemic state $w\in W$ and each formula $\phi\in\Phi$.
\end{lemma}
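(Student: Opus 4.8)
The plan is to prove the lemma by induction on the structural complexity of $\phi$, reducing each modal case to the corresponding ``child node'' lemmas constructed above. First I would dispose of the base case: if $\phi$ is a propositional variable $p$, then $w\Vdash p$ iff $w\in\pi(p)$ iff $p\in hd(w)$, directly by Definition~\ref{sat} and Definition~\ref{canonical pi}. The boolean cases $\phi=\neg\psi$ and $\phi=\psi_1\to\psi_2$ are the usual ones, combining the induction hypothesis with the maximality and consistency of the set $hd(w)$.

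For $\phi=\K_C\psi$: if $\K_C\psi\in hd(w)$, then Lemma~\ref{k child all} gives $\psi\in hd(w')$ for every $w'$ with $w\sim_C w'$, so by the induction hypothesis $w'\Vdash\psi$ for all such $w'$, whence $w\Vdash\K_C\psi$; conversely, if $\K_C\psi\notin hd(w)$, then Lemma~\ref{k child exists} supplies a state $w'$ with $w\sim_C w'$ and $\psi\notin hd(w')$, so $w'\nVdash\psi$ by the induction hypothesis, and $w\nVdash\K_C\psi$.

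For $\phi=\S_C\psi$: if $\S_C\psi\in hd(w)$, I would exhibit the strategy profile $\mathbf{s}=\{s_a\}_{a\in C}$ with $s_a=(\psi,w)$ for each $a\in C$; Lemma~\ref{s child all} then shows $w\to_{\mathbf{s}}w'$ implies $\psi\in hd(w')$, hence $w'\Vdash\psi$ by the induction hypothesis, so $w\Vdash\S_C\psi$. Conversely, if $\S_C\psi\notin hd(w)$, then $\neg\S_C\psi\in hd(w)$ by maximality, and for an \emph{arbitrary} strategy profile $\mathbf{s}\in V^C$ Lemma~\ref{s child exists} produces $w'$ with $w\to_{\mathbf{s}}w'$ and $\psi\notin hd(w')$, so $w'\nVdash\psi$; since no profile witnesses $\S_C\psi$, we conclude $w\nVdash\S_C\psi$. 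The case $\phi=\E_C\psi$ runs in complete parallel: for the positive direction use the same profile $s_a=(\psi,w)$ (only $pr_1(s_a)=\psi$ is needed) together with Lemma~\ref{ks child all}, and for the negative direction use Lemma~\ref{e child exists} applied to an arbitrary $\mathbf{s}\in V^C$.

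I expect no serious obstacle: the genuinely hard work has already been discharged in Lemma~\ref{s child exists} and, especially, Lemma~\ref{e child exists}, which rests on the harmony and complete-harmony machinery. The only points that require care in writing this up are keeping the quantifier alternation straight in the two strategic cases — the positive direction needs one explicit good strategy profile, while the negative direction needs, for every strategy profile, a successor (resp.\ a pair of siblings) falsifying $\psi$ — and checking that the induction hypothesis is only ever invoked on the strict subformula $\psi$ of $\S_C\psi$ or $\E_C\psi$.
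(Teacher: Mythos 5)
Your proposal is correct and follows essentially the same route as the paper: induction on the structure of $\phi$, with the base and boolean cases handled via Definitions~\ref{sat} and~\ref{canonical pi} and maximality/consistency, and each modal case discharged by the corresponding pair of lemmas (Lemma~\ref{k child all}/\ref{k child exists}, Lemma~\ref{s child all}/\ref{s child exists}, Lemma~\ref{ks child all}/\ref{e child exists}), including the same witness profile $s_a=(\psi,w)$ in the positive strategic cases. The quantifier-alternation point you flag is exactly how the paper organizes the two directions, so no changes are needed.
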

\begin{proof}
We prove the lemma by induction on the structural complexity of formula $\phi$. If formula $\phi$ is a propositional variable, then the required follows from Definition~\ref{sat} and Definition~\ref{canonical pi}. The cases of formula $\phi$ being a negation or an implication follow from Definition~\ref{sat}, and the maximality and the consistency of the set $hd(w)$ in the standard way.

Let formula $\phi$ have the form $\K_C\psi$. 

\noindent$(\Rightarrow)$ Suppose that $\K_C\psi\notin hd(w)$. Then, by Lemma~\ref{k child exists}, there is $w'\in W$ such that $w\sim_C w'$ and $\psi\notin hd(w')$. Hence, $w'\nVdash \psi$ by the induction hypothesis. Therefore, $w\nVdash\K_C\psi$ by Definition~\ref{sat}.

\noindent$(\Leftarrow)$ Assume that $\K_C\psi\in hd(w)$. Consider any $w'\in W$ such that $w\sim_C w'$. By Definition~\ref{sat}, it suffices to show that $w'\Vdash\psi$. Indeed, $\psi\in hd(w')$ by Lemma~\ref{k child all}. Therefore, by the induction hypothesis, $w'\Vdash\psi$.

Let formula $\phi$ have the form $\S_C\psi$.

\noindent$(\Rightarrow)$ Suppose that $\S_C\psi\notin hd(w)$. Then, $\neg\S_C\psi\in hd(w)$ due to the maximality of the set $hd(w)$. Hence, by Lemma~\ref{s child exists}, for any strategy profile $\mathbf{s}\in V^C$, there is an epistemic state $w'\in W$ such that $w\to_\mathbf{s}w'$ and $\psi\notin hd(w')$. Thus, by the induction hypothesis, for any strategy profile $\mathbf{s}\in V^C$, there is a state $w'\in W$ such that $w\to_\mathbf{s}w'$ and $w'\nVdash\psi$. Then, $w\nVdash\S_C\psi$ by Definition~\ref{sat}.

\noindent$(\Leftarrow)$ Assume that $\S_C\psi\in hd(w)$. Consider strategy profile ${\mathbf s}=\{s_a\}_{a\in C}\in V^C$ such that $s_a=(\psi,w)$ for each $a\in C$. By Lemma~\ref{s child all}, for any epistemic state $w'\in W$, if  $w\to_{\mathbf s} w'$, then $\psi\in hd(w')$. Hence, by the induction hypothesis, for any epistemic state $w'\in W$, if  $w\to_{\mathbf s} w'$, then $w'\Vdash \psi$. Therefore, $w\Vdash\S_C\psi$ by Definition~\ref{sat}.

Finally, let formula $\phi$ have the form $\E_C\psi$. 

\noindent$(\Rightarrow)$ Suppose that $\E_C\psi\notin hd(w)$. Then, $\neg\E_C\psi\in hd(w)$ due to the maximality of the set $hd(w)$. Hence, by Lemma~\ref{e child exists}, for any strategy profile $\mathbf{s}\in V^C$, there are epistemic states $w',w''\in W$ such that $w\sim_C w'$, $w'\to_\mathbf{s}w''$, and $\psi\notin hd(w'')$. Thus, $w''\nVdash\psi$ by the induction hypothesis. Therefore, $w\nVdash \E_C\psi$ by Definition~\ref{sat}.

\noindent$(\Leftarrow)$ Assume that $\E_C\psi\in hd(w)$. Consider a strategy profile ${\mathbf s}=\{s_a\}_{a\in C}\in V^C$ such that $s_a=(\psi,w)$ for each $a\in C$. By Lemma~\ref{ks child all}, for all epistemic states $w',w''\in W$, if  $w\sim_C w'$, and $w'\to_{\mathbf s} w''$, then $\psi\in hd(w'')$. Hence, by the induction hypothesis, $w''\Vdash \psi$. Therefore, $w\Vdash\E_C\psi$ by Definition~\ref{sat}.
\end{proof}

\subsection{Completeness: the Final Step}

To finish the proof of Theorem~\ref{completeness theorem} stated at the beginning of Section~\ref{completeness section}, suppose that $\nvdash\phi$. Let $X_0$ be any maximal consistent subset of set $\Phi$ such that $\neg\phi\in X_0$. Consider the canonical epistemic transition system $ETS(X_0)$ defined in Section~\ref{ets section}.
Let $w$ be the single-element sequence $X_0$. Note that $w\in W$ by Definition~\ref{canonical worlds}. Thus, $w\Vdash \neg\phi$ by Lemma~\ref{main induction lemma}. Therefore, $w\nVdash\phi$ by Definition~\ref{sat}. 

\section{Conclusion}\label{conclusion section}

In this article we proposed a sound and complete logic system that captures an interplay between the  distributed knowledge, coalition strategies, and how-to strategies. In the future work we hope to explore know-how strategies of non-homogeneous coalitions in which different members contribute differently to the goals of the coalition. For example, ``incognito" members of a coalition might contribute only by sharing information, while ``open" members also contribute by voting.

\bibliography{sp} 

\begin{thebibliography}{10}

\bibitem{aa12aamas}
Thomas {\AA}gotnes and Natasha Alechina.
\newblock Epistemic coalition logic: completeness and complexity.
\newblock In {\em Proceedings of the 11th International Conference on
  Autonomous Agents and Multiagent Systems-Volume 2}, pages 1099--1106.
  International Foundation for Autonomous Agents and Multiagent Systems, 2012.

\bibitem{abvs10jal}
Thomas {\AA}gotnes, Philippe Balbiani, Hans van Ditmarsch, and Pablo Seban.
\newblock Group announcement logic.
\newblock {\em Journal of Applied Logic}, 8(1):62 -- 81, 2010.

\bibitem{avw09ai}
Thomas {\AA}gotnes, Wiebe van~der Hoek, and Michael Wooldridge.
\newblock Reasoning about coalitional games.
\newblock {\em Artificial Intelligence}, 173(1):45 -- 79, 2009.

\bibitem{aeh75sigop}
Eralp~A Akkoyunlu, Kattamuri Ekanadham, and RV~Huber.
\newblock Some constraints and tradeoffs in the design of network
  communications.
\newblock In {\em ACM SIGOPS Operating Systems Review}, volume~9, pages 67--74.
  ACM, 1975.

\bibitem{ahk02}
Rajeev Alur, Thomas~A. Henzinger, and Orna Kupferman.
\newblock Alternating-time temporal logic.
\newblock {\em Journal of the ACM}, 49(5):672--713, 2002.

\bibitem{b14sr}
Francesco Belardinelli.
\newblock Reasoning about knowledge and strategies: Epistemic strategy logic.
\newblock In {\em Proceedings 2nd International Workshop on Strategic
  Reasoning, {SR} 2014, Grenoble, France, April 5-6, 2014}, volume 146 of {\em
  {EPTCS}}, pages 27--33, 2014.

\bibitem{b07ijcai}
Stefano Borgo.
\newblock Coalitions in action logic.
\newblock In {\em 20th International Joint Conference on Artificial
  Intelligence}, pages 1822--1827, 2007.

\bibitem{fhmv95}
Ronald Fagin, Joseph~Y. Halpern, Yoram Moses, and Moshe~Y. Vardi.
\newblock {\em Reasoning about knowledge}.
\newblock MIT Press, Cambridge, MA, 1995.

\bibitem{fhlw17ijcai}
Raul Fervari, Andreas Herzig, Yanjun Li, and Yanjing Wang.
\newblock Strategically knowing how.
\newblock In {\em 26th International Joint Conference on Artificial
  Intelligence (IJCAI-17), August 19-25, 2017}, 2017.
\newblock (to appear).

\bibitem{g01tark}
Valentin Goranko.
\newblock Coalition games and alternating temporal logics.
\newblock In {\em Proceedings of the 8th conference on Theoretical aspects of
  rationality and knowledge}, pages 259--272. Morgan Kaufmann Publishers Inc.,
  2001.

\bibitem{g78os}
James~N Gray.
\newblock Notes on data base operating systems.
\newblock In {\em Operating Systems}, pages 393--481. Springer, 1978.

\bibitem{ja07jancl}
Wojciech Jamroga and Thomas {\AA}gotnes.
\newblock Constructive knowledge: what agents can achieve under imperfect
  information.
\newblock {\em Journal of Applied Non-Classical Logics}, 17(4):423--475, 2007.

\bibitem{jv04fm}
Wojciech Jamroga and Wiebe van~der Hoek.
\newblock Agents that know how to play.
\newblock {\em Fundamenta Informaticae}, 63(2-3):185--219, 2004.

\bibitem{mn12tocl}
Sara~Miner More and Pavel Naumov.
\newblock Calculus of cooperation and game-based reasoning about protocol
  privacy.
\newblock {\em ACM Trans. Comput. Logic}, 13(3):22:1--22:21, August 2012.

\bibitem{nt17aamas}
Pavel Naumov and Jia Tao.
\newblock Coalition power in epistemic transition systems.
\newblock In {\em Proceedings of the 2017 International Conference on
  Autonomous Agents and Multiagent Systems}, pages 723--731, 2017.

\bibitem{nt17tark}
Pavel Naumov and Jia Tao.
\newblock Together we know how to achieve: An epistemic logic of know-how.
\newblock In {\em 16th conference on Theoretical Aspects of Rationality and
  Knowledge (TARK `17), July 24-26, 2017}, 2017.
\newblock (to appear).

\bibitem{p01illc}
Marc Pauly.
\newblock {\em Logic for Social Software}.
\newblock PhD thesis, Institute for Logic, Language, and Computation, 2001.

\bibitem{p02}
Marc Pauly.
\newblock A modal logic for coalitional power in games.
\newblock {\em Journal of Logic and Computation}, 12(1):149--166, 2002.

\bibitem{s75slfm}
Henrik Sahlqvist.
\newblock Completeness and correspondence in the first and second order
  semantics for modal logic.
\newblock {\em Studies in Logic and the Foundations of Mathematics},
  82:110--143, 1975.
\newblock (Proc. of the 3rd Scandinavial Logic Symposium, Uppsala, 1973).

\bibitem{sgvw06aamas}
Luigi Sauro, Jelle Gerbrandy, Wiebe van~der Hoek, and Michael Wooldridge.
\newblock Reasoning about action and cooperation.
\newblock In {\em Proceedings of the Fifth International Joint Conference on
  Autonomous Agents and Multiagent Systems}, AAMAS '06, pages 185--192, New
  York, NY, USA, 2006. ACM.

\bibitem{v01ber}
Johan Van~Benthem.
\newblock Games in dynamic-epistemic logic.
\newblock {\em Bulletin of Economic Research}, 53(4):219--248, 2001.

\bibitem{vw03sl}
Wiebe van~der Hoek and Michael Wooldridge.
\newblock Cooperation, knowledge, and time: Alternating-time temporal epistemic
  logic and its applications.
\newblock {\em Studia Logica}, 75(1):125--157, 2003.

\bibitem{vw05ai}
Wiebe van~der Hoek and Michael Wooldridge.
\newblock On the logic of cooperation and propositional control.
\newblock {\em Artificial Intelligence}, 164(1):81 -- 119, 2005.

\bibitem{w17synthese}
Yanjing Wang.
\newblock A logic of goal-directed knowing how.
\newblock {\em Synthese}.
\newblock (to appear).

\bibitem{w15lori}
Yanjing Wang.
\newblock A logic of knowing how.
\newblock In {\em Logic, Rationality, and Interaction}, pages 392--405.
  Springer, 2015.

\end{thebibliography}

\end{document}